\title{\mac: An Efficient Gradient Preconditioning using
Mean Activation Approximated Curvature}
\author{%
  Hyunseok Seung$^{1}$ \quad
  Jaewoo Lee$^{2}$ \quad
  Hyunsuk Ko$^{3}$ \\
  $^{1}$University of Wisconsin -- Madison \quad
  $^{2}$University of Georgia \quad
  $^{3}$Hanyang University \\
  \texttt{hseung2@wisc.edu}, \ \texttt{jaewoo.lee@uga.edu}, \ \texttt{hyunsuk@hanyang.ac.kr}
}
\let\en=\ensuremath
\DeclareMathOperator{\E}{\mathbb{E}}          % expectation
\DeclarePairedDelimiter{\norm}{\lVert}{\rVert}
\DeclarePairedDelimiter{\abs}{\lvert}{\rvert}
\DeclareMathOperator{\diag}{diag}      % diag
\DeclareMathOperator{\vect}{vec}                % vectorization operator
\renewcommand{\vec}[1]{\en{\bm{\mathrm{#1}}}}
\newcommand{\mat}[1]{\en{{\bm{\mathrm{#1}}}}}
\newcommand{\grad}[0]{\en{\nabla}}
\newcommand{\R}[0]{\mathbb{R}}
\renewcommand{\th}[0]{\textsuperscript{th}\xspace}
\newcommand{\smac}[0]{\textsc{SMAC}\xspace}
\newcommand{\mac}[0]{\textsc{MAC}\xspace}
\newcommand{\xmark}{\ding{55}}%
\newcommand*\circled[1]{\tikz[baseline=(char.base)]{
            \node[shape=circle,draw,inner sep=0pt] (char) {#1};}}
\theoremstyle{plain}
\newtheorem{theorem}{Theorem}[section]
\newtheorem{proposition}[theorem]{Proposition}
\newtheorem{lemma}[theorem]{Lemma}
\newtheorem{remark}[theorem]{Remark}
\theoremstyle{definition}
\newtheorem{definition}[theorem]{Definition}
\newtheorem{assumption}[theorem]{Assumption}
\newtheorem{condition}[theorem]{Condition}
\begin{document}

\maketitle

\begin{abstract}
	Second-order optimization methods for training neural networks, such
as KFAC, exhibit superior convergence 
by utilizing curvature information of loss landscape. However, it comes at the
expense of high computational burden. In this work, we analyze the two
components that constitute the layer-wise Fisher information matrix
(FIM) used in KFAC: the Kronecker factors related to activations and
pre-activation gradients.
Based on empirical observations on their eigenspectra, we propose
efficient approximations for them, resulting in a computationally
efficient optimization method called \mac. 
To the best of our knowledge, 
\mac is the first algorithm to apply the Kronecker factorization to the
FIM of attention layers used in transformers
and explicitly integrate attention scores into the preconditioning.
We also study the convergence property of \mac on nonlinear 
neural networks and provide two conditions under which it converges to
global minima. Our extensive evaluations on various network
architectures and datasets show that 
the proposed method outperforms
KFAC and other state-of-the-art methods in
terms of accuracy, end-to-end training time, and memory usage.

%%% Local Variables:
%%% mode: latex
%%% TeX-master: "../main"
%%% End:

\end{abstract}

This is the extended version of the paper accepted to the IEEE International Conference on Data Mining (ICDM-2025), © IEEE. Code is available at \href{https://github.com/hseung88/mac}{https://github.com/hseung88/mac}.

\section{Introduction}
\label{sec:introduction}
Second-order methods have demonstrated faster convergence rates
than first-order methods in deep learning tasks~\cite{Amari1998NaturalGW,Gupta2018ShampooPS,Yao2020ADAHESSIANAA,liu2024sophia,Liu2024ALN}.
However, incorporating second-order information into optimization
introduces new challenges: high \emph{computational cost} and numerical
\emph{instability}. 
% To address these challenges, KFAC~\cite{martens2015optimizing} and its
% variants~\cite{Ba2016DistributedSO,Goldfarb2020PracticalQM,frantar2021mfac,Zhang2023EvaPS}
% employ layer-wise approximation of FIM, assuming the independence
% between layers. For memory efficiency and fast matrix 
% inversion, they further decompose the FIM $\mat{F}^{(l)}$ for a single layer
% $l$ into the product of two smaller matrices:
% $\mat{F}^{(l)} \approx \mat{A}^{(l)}\otimes \mat{P}^{(l)}$, where
% $\mat{A}^{(l)}$ and $\mat{P}^{(l)}$ capture curvature information
To address these challenges, KFAC~\cite{martens2015optimizing} and its
variants~\cite{Ba2016DistributedSO,Goldfarb2020PracticalQM,frantar2021mfac,Zhang2023EvaPS}
employ the layer-wise approximation and further decompose FIM
$\mat{F}^{(l)}$ for %a single
layer $l$ into the product of two smaller matrices:
$\mat{F}^{(l)} \approx \mat{A}^{(l)}\otimes \mat{P}^{(l)}$, where
$\mat{A}^{(l)}$ and $\mat{P}^{(l)}$ capture curvature information
% !!!
% associated with layers input and output, respectively.
associated with the input and output of the layer, respectively.
While %the Kronecker factorization 
this greatly reduces the computational cost,
% of using the second-order information
the precise impact of 
Kronecker factors (KFs) $\mat{A}$ and $\mat{P}$ on the performance of
optimizer is not well understood~\cite{Benzing2022GradientDO}, and
second-order methods are not widely adopted in practice.

First, for many modern network architectures, the KFs, $\mat{A}$ and
$\mat{P}$, are still too large to store and compute the inverse. Despite
the fast convergence, second-order methods are often slower than first-order
methods in terms of wall-clock time; for example,
Shampoo~\cite{Gupta2018ShampooPS} requires 214.9 seconds,
AdaHessian~\cite{Yao2020ADAHESSIANAA} takes 45.8 seconds, and KFAC
averages 21.8 seconds per epoch when training ResNet-110 on the
CIFAR-10 dataset, whereas SGD completes an epoch in just 8.9 seconds.
Second, in our experiments, we observed that the variants of KFAC
(including KFAC) occasionally become numerically unstable and crash while
computing the inverse.
Third, existing KFAC methods have been limited to MLP and CNN
architectures, despite the widespread adoption of transformer
architectures in recent years. Transformers introduce unique
challenges for curvature approximation due to the complex structure of
self-attention layers, and a rigorous derivation of the KFAC FIM for
these layers has yet to be attempted. 

In this work, we investigate the structural properties of KFs and,
guided by our analysis, design  
an optimization method that is both time- and memory-efficient.
Specifically, in Section~\ref{subsec:eigenspectrum}, we
perform eigenanalysis on $\mat{A}$ and $\mat{P}$ and make the following
important observations.
\begin{enumerate*}[label=(\roman*)]
\item Both $\mat{A}$ and $\mat{P}$ have very few large eigenvalues and
the rest of them are much smaller in magnitude and take almost the
same value, meaning $\mat{A}$ and 
$\mat{P}$ are likely to be rank-deficient and hence their inversion
could be numerically unstable.
\item The eigenvalues of $\mat{A}$ are order of magnitude larger than
those of $\mat{P}$. Since the eigenvalues of $\mat{F}$ are products
of eigenvalues of $\mat{A}$ and $\mat{P}$, the magnitude of
eigenvalues of $\mat{F}$ are largely determined by that of $\mat{A}$
while the eigengap is by that of $\mat{P}$ (see
Section~\ref{subsec:eigenspectrum} for details). We conjecture this
might be the reason why KFAC with $\mat{P}$ replaced by %$\mat{I}$
the identity matrix 
(equivalent to FOOF~\cite{Benzing2022GradientDO} optimizer) performs as good as the original KFAC.
\item The top eigenvector of $\mat{A}$ points the same direction as
the mean activations.
%\item The diagonal entries of $\mat{P}$ have larger magnitude than off-diagonal entries.
\end{enumerate*}

Building upon these observations, 
we propose an 
optimization method named \mac (\textsc{M}ean
\textsc{A}ctivation approximated \textsc{C}urvature). In the proposed
method, the factor $\mat{A}$ is approximated with the rank-1 matrix,
the outer product of mean of layer's activations. On the other hand, the factor
$\mat{P}$ is approximated with an identity matrix.
These approximations greatly reduce the computation by storing a small
vector instead of matrix and allow to 
exploit closed-form solution for the inversion of $\mat{F}$ and hence 
enhance the scalability. The key contributions of our work can be summarized as
follows: 
\begin{itemize}[leftmargin=*,noitemsep,topsep=0pt]
\item We perform an eigenanalysis on the KFs $\mat{A}$ and
  $\mat{P}$, and make important observations on their structural
  properties.
\item Exploiting the structural properties of KFs, we propose 
a novel optimization method that reduces end-to-end training time by up to 55.4\% compared to KFAC while keeping memory usage comparable to that of SGD.
\item 
To the best of our knowledge, 
\mac is the first algorithm to apply Kronecker factorization to the
FIM of self-attention layers and to incorporate attention scores in
the preconditioning. 
Our experimental results demonstrate that explicitly incorporating attention
scores in preconditioning can enhance the performance of vision
transformers~\cite{dosovitskiy2021an}, achieving up to a 3.6\% increase
in top-1 test accuracy on the ImageNet dataset compared to the original KFAC. 
\item We present a convergence analysis of \mac on nonlinear neural
  networks and identify two conditions under which it converges to
  global minima. 
\item We provide extensive experimental results on 
state-of-the-art KFAC variants
for image classification tasks, using various network architectures on CIFAR and ImageNet datasets.
Notably, \mac exhibits the fastest execution time among all KFAC variants, and achieves test accuracies that are either superior to or on par with the original KFAC. 
\end{itemize}

%%% Local Variables:
%%% mode: latex
%%% TeX-master: "../main"
%%% End:

\section{Related Work}
\label{sec:related}
%
% Many studies have contributed to %reduce
% reducing the computational cost of KFAC, 
% by approximating its KFs.
Many variants of KFAC have been proposed to reduce its computational
cost, each differing in how the KFs, $\mat{A}^{(l)}$ and
$\mat{P}^{(l)}$, are approximated.
% \cite{Goldfarb2020PracticalQM} introduced K-BFGS that uses the BFGS
% update to directly approximate the FIM.
% However, it necessitates 
% extra forward and backward passes to calculate the $(\vec{s},
% \vec{y})$ pair for BFGS updates, which raises the total computational
% cost.
% !!!
% To avoid matrix inversion, K-BFGS~\cite{Goldfarb2020PracticalQM}
% directly estimates the inverse of KFs using the BFGS update.
% However, it requires additional forward and backward passes to compute
% the $(\vec{s}, \vec{y})$ pair for the BFGS update, which increases the
% overall computational cost.
% MFAC~\cite{frantar2021mfac} proposed rank-1
% approximations to estimate inverse-Hessian vector products,
% but used iterative conjugate gradient solvers, requiring multiple forward and
% backward passes and storing gradients, vectors, and intermediate
% states, increasing computational and memory demands.
% !!!
%
% MFAC~\cite{frantar2021mfac} also avoids the matrix inversion by
% directly estimating inverse-Hessian vector products.
%% !!!
% However, it
% employs iterative conjugate gradient solvers that require multiple
% forward and backward passes, as well as the storage of gradients, vectors,
% and intermediate states, leading to significantly higher computational
% and memory demands. Unlike these approaches, our algorithm,~\mac,
% requires only a single forward and backward pass.
To avoid matrix inversion, K-BFGS~\cite{Goldfarb2020PracticalQM}
directly estimates the inverse of the KFs using the BFGS update, while
MFAC~\cite{frantar2021mfac} estimates the inverse-Hessian vector
products. 
However, both K-BFGS and MFAC requires additional forward and backward
passes, leading to significantly higher computational
and memory demands.
The low-rank property of KFs has been exploited in prior work.
SKFAC~\cite{Tang2021SKFACTN} computes the inverse of both 
KFs using the Woodbury formula, leveraging their
low-rank nature. % but it stores matrices instead of
% vectors.
% !!!
%
\cite{Koroko2022EfficientAO} proposed low-rank approximations
of KFs using singular value decomposition (SVD), % but the computational
% complexity of SVD remains a significant drawback.
% !!!
%
but the prohibitive computational cost of SVD for large networks
remains a significant drawback. 
These approaches store KFs as matrices. However, for large networks,
this matrix representation becomes challenging to manage in terms of
both computation and storage. 
Recent KFAC variants further approximate the KFs to reduce the
computation. 
FOOF~\cite{Benzing2022GradientDO} replaces the
pre-activation gradient term $\mat{P}$ by an identity 
matrix based on the experimental observation that it does not
contribute to the efficacy of KFAC.
% LNGD~\cite{Liu2024ALN} employs
% layer-wise natural gradients by approximating each layer's FIM as the
% Kronecker product of two factor matrices—one for $\mat{A}$ and the
% other for $\text{diag}(\mat{P})$—with an additional trace-matching
% constraint to align with the original FIM. However, because it stores
% raw activations and computes these factors on a per-sample basis, LNGD
% incurs significantly higher memory usage than KFAC.
% !!!
% Please carefully review this part and feel free to change, if
% needed.
%
LNGD~\cite{Liu2024ALN} approximates $\mat{P}$ as a diagonal matrix,
$\diag(\mat{P})$, with an additional trace-matching constraint.
Despite the approximation, LNGD incurs significantly higher memory usage than KFAC
because it requires storing per-example activations and second moments
of pre-activation gradients rather than aggregated ones.
The work closest to ours is Eva~\cite{Zhang2023EvaPS}, as it also
% approximates $\mat{A}$ using the mean activations.
applies rank-1 approximation to $\mat{A}$.
However,
Eva applies the rank-1 approximation not only to the activation term $\mat{A}$ but
also to the problematic pre-activation gradient term $\mat{P}$,  % without
% considering the structural properties of KFs.
yet the rationale for this choice is not well justified. 
In contrast, %  to prior approaches,
our algorithm applies the rank-1 approximation only to the
activation-related KF $\mat{A}$, motivated by 
the structural analysis of KFs, and maintain aggregated vectors as
state variables. Furthermore, Eva is not easily applicable to
% attention layers in transformer networks,
% !!!
transformers
and a naive implementation
would completely disregard the attention scores in the preconditioning.
% NysAct can be added?

\begin{comment}
\textbf{Other approaches to approximating the FIM.}
% Contributions to second-order methods include several innovative
% approaches for improving efficiency and
% scalability.
\cite{Botev2017PracticalGO} proposed an efficient
% block-diagonal approximation to
method for recursively approximating the block-diagonal Gauss-Newton matrix % for
% feedforward NNs
and \cite{Goldfarb2020PracticalQM} introduced K-BFGS that uses the
BFGS update to directly approximate the FIM.
% a stochastic quasi-Newton approach leveraging Kronecker-factored
% block-diagonal BFGS.
FOOF~\cite{Benzing2022GradientDO} % focused on the
% activation 
% KF within KFAC preconditioners,
replaced the pre-activation gradient term $\mat{P}$ by an identity
matrix based on the experimental observation that it does not
contribute to the efficacy of KFAC.
% demonstrating that incorporating the
% pre-activation KF does not enhance the optimization efficacy of
% KFAC. Concurrently,
LocoProp~\cite{Amid2021LocoPropEB} developed a
layer-wise loss construction framework that include FOOF's updates as a
special case.
Shampoo~\cite{Gupta2018ShampooPS} devised separate preconditioning
matrices by employing both the inner and outer products of gradients
and GGT~\cite{Agarwal2020EfficientFA} offered an efficient full
second-moment matrix of gradients approximation using SVD. AdaHessian~\cite{Yao2020ADAHESSIANAA} proposed a direct
diagonal Hessian approximation using Hutchinson's method and
\cite{yen2023block} suggested 
approximating diagonal blocks of the gradient second moment matrix
with low-rank matrices for consistency across layer blocks. Sophia~\cite{liu2024sophia} proposed a scalable second-order
optimizer using a lightweight estimate of the diagonal Hessian as the
preconditioner. 
\end{comment}

%%% Local Variables:
%%% mode: latex
%%% TeX-master: "../main"
%%% End:

\section{Preliminaries}
\label{sec:prelim}
\subsection{Notations}
For vectors, we use element-wise operations unless specified
otherwise. $(\vec{x})_{i}$ denotes the $i$\th coordinate of
$\vec{x}$. $\norm{\vec{x}}$ represents $L_2$ norm unless stated
otherwise, and $\norm{\mat{X}}_{F}$ denotes the Frobenius norm. We use $[N]$ to denote the set $\{1,2, \dots, N\}$ %,
and
% !!! We don't seem to use the Hadamard product in the main text.
$\otimes$ to represent the Kronecker product. %, and $\odot$ for the Hadamard product. 
The vectorization operator, denoted by
$\vect(\cdot)$, takes $\mat{X}\in \R^{m\times n}$ as input and returns
a vector 
$\vect(\mat{X})\in \R^{mn}$ of length $mn$. That is, $\vect(\mat{X}) =
\begin{bmatrix}
\mat{X}_{*,1}^{\intercal} & \mat{X}_{*,2}^{\intercal} & \cdots & \mat{X}_{*,n}^{\intercal}
\end{bmatrix}^{\intercal}\,$, where $\mat{X}_{*,j}$ denotes the $j$\th
column of matrix $\mat{X}$. For a square matrix $\mat{A}\in \R^{n\times n}$, $\lambda(\mat{A})$
denotes the set of eigenvalues of $\mat{A}$, and for any matrix $\mat{A}$, $\sigma(\mat{A})$ denotes the set of its singular values. Both eigenvalues and singular values are assumed to be sorted in descending order.

\subsection{Setup for Architecture and Training}
Consider a network $f(\vec{x};\vec{\theta})$ consisting of $L$ layers
trained on a dataset $\mathcal{S} = \{(\vec{x}_i, y_i)\}_{i=1}^n$.
Let $\mat{W}^{(l)}\in \R^{m_l\times m_{l-1}}$ and $\vec{b}^{(l)}\in \R^{m_l}$ 
be the weight and bias of layer $l\in [L]$.
The forward step of $f$ is given by
\begin{align*}
  \vec{z}^{(l)}
  &= \mat{W}^{(l)} \vec{a}^{(l-1)} + \vec{b}^{(l)}\in \R^{m_{l}}\,, \qquad \vec{a}^{(l)} = \phi(\vec{z}^{(l)})  \in \R^{m_{l}}\,,
    \qquad \vec{a}^{(0)} = \vec{x}\,,\\
  \vec{\theta}^{(l)}
  &= [\vect(\mat{W}^{(l)})^\intercal, \vec{b}^{\intercal}]^{\intercal} \in
    \R^{m_{l}(m_{l-1}+ 1)}\,, \qquad  \vec{\theta} \in [(\vec{\theta}^{(1)})^{\intercal}, \ldots,
    (\vec{\theta}^{(L)})^{\intercal}]^{\intercal}\in \R^{p}\,,
\end{align*}
where $\vec{z}$, $\vec{a}$, and $\phi$ represent the pre-activations,
activations, and an activation function, respectively. For activations
and pre-activations, we use the subscript $i$ to indicate that the
statistic is associated with the $i$\th training example. For example,
$\vec{a}^{(l)}_i$ denotes the activation of layer $l$ when input is $\vec{x}_i$.
Given training examples $\mathcal{S}$, our goal is to solve the following
optimization problem:
\begin{equation} \label{eq:opt_prob}
  \min_{\vec{\theta} \in \R^{d}} \mathcal{L}(\vec{\theta}) :=
  \frac{1}{n}\sum_{i=1}^{n}{\ell\left(f(\vec{x}_i;\vec{\theta}), y_i
    \right)}, 
\end{equation}
where $\ell$ is a loss function.
To solve the problem~\eqref{eq:opt_prob}, the NGD~\cite{Amari1998NaturalGW} method
iteratively performs the following update:
\begin{equation} \label{eq:ngd_update}
    \vec{\theta}_{k+1} = \vec{\theta}_{k} - \eta_{k} \mat{F}_{k}^{-1}
    \nabla_{\vec{\theta}}{\mathcal{L}(\vec{\theta}_k)}, 
\end{equation}
where $\mat{F}_{k} = \E_{(\vec{x}, y)
  \sim p_{\vec{\theta}}(\vec{x}, y)}\left[\nabla_{\vec{\theta}}{\log p(\vec{x}, y
    | \vec{\theta}_{k})} \nabla_{\vec{\theta}}{\log p(\vec{x}, y |
    \vec{\theta}_{k})}^\intercal \right]$ denotes the exact FIM, $\eta_k$ is a learning rate,
$p_{\vec{\theta}}(\vec{x}, y)$
is the predictive distribution of underlying probabilistic model, and
$\nabla_{\vec{\theta}}{\mathcal{L}(\vec{\theta}_k)}$ is the gradient
of $\mathcal{L}$ with respect to the parameters
at iteration $k$.

%%% Local Variables:
%%% mode: latex
%%% TeX-master: "../main"
%%% End:

\section{Mean Activation Approximated Curvature}
\label{sec:algo}
In this section, 
we derive \mac
for solving
the optimization problem~\eqref{eq:opt_prob} based on the structural
analysis of empirical FIM.

\begin{figure*}[tb]
    \centering
    \includegraphics[width=.9\textwidth]{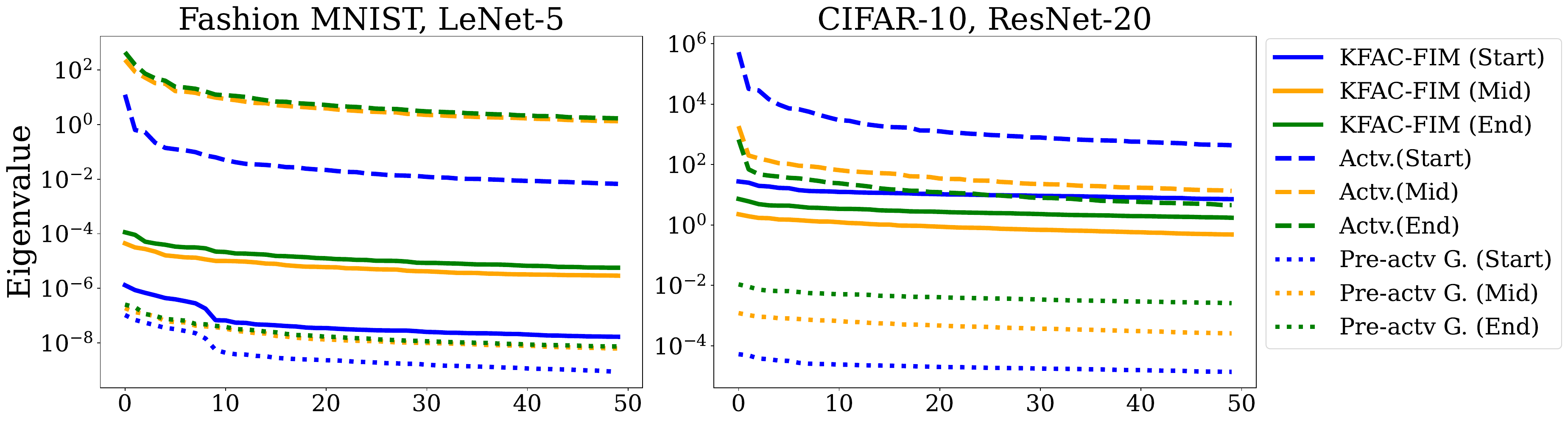}
    \includegraphics[width=.9\textwidth]{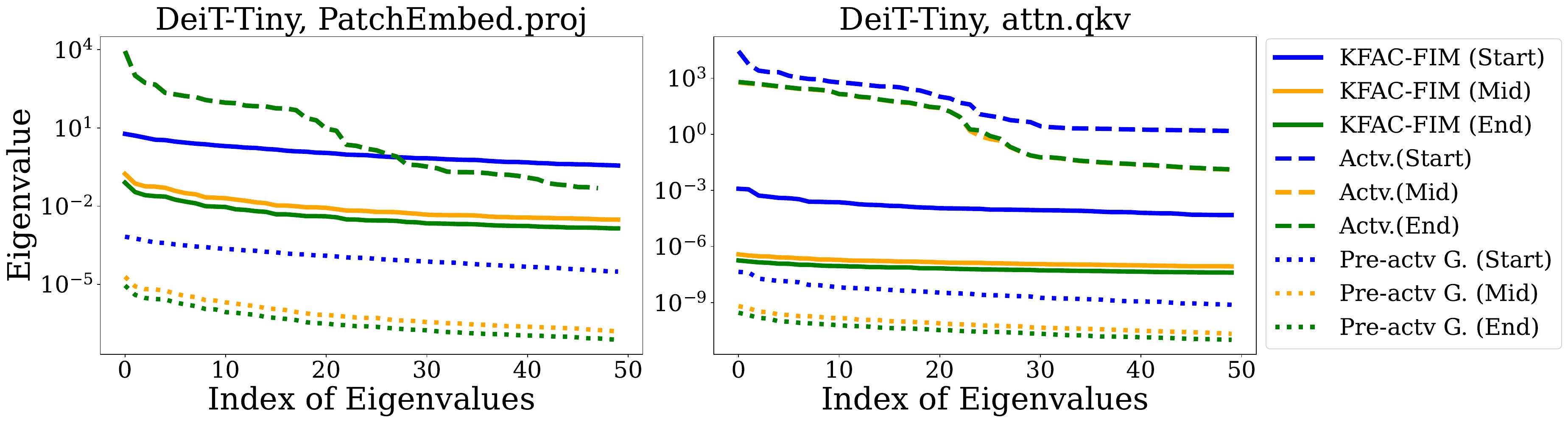}
    \caption{Top-50 eigenspectra of FIM, activation KF, and
      pre-activation gradient KF in KFAC were analyzed at the
      beginning, middle, and end of training. 
      (\textbf{Top}) A linear layer in LeNet-5 and a convolutional layer in ResNet-20. (\textbf{Bottom}) the patch embedding (convolutional) layer and an attention layer in DeiT-Tiny as representative examples.
      }
    \label{fig:eigenspectra}
\end{figure*}

\subsection{Eigenspectrum of FIM}
\label{subsec:eigenspectrum}
KFAC approximates $\mat{F}$ in Eq.~\eqref{eq:ngd_update} as a
block-diagonal matrix in which the $l$\th diagonal block is further
approximated as a Kronecker product of two smaller matrices:
\begin{align*}
  \mat{F}^{(l)} %&= \E\left[\vec{a}^{(l-1)} (\vec{a}^{(l-1)})^{\intercal} \otimes \vec{p}^{(l)}(\vec{p}^{(l)})^{\intercal}\right] \\
    &\approx
    \E\left[\vec{a}^{(l-1)} (\vec{a}^{(l-1)})^{\intercal}\right] \otimes
    \E\left[\vec{p}^{(l)}(\vec{p}^{(l)})^{\intercal}\right] \\
    &=
    \mat{A}^{(l)}\otimes \mat{P}^{(l)} =
    \mat{F}_{\text{KFAC}}^{(l)}\,,
   % \label{eq:kfac_decomp}
\end{align*}
where $\vec{p}^{(l)} = \grad_{\vec{z}^{(l)}}\mathcal{L}$ is the gradient of
$\mathcal{L}$ w.r.t. the pre-activation $\vec{z}^{(l)}$.

\textbf{Structure of top eigenspace for $\mat{A}$.}
To understand
the structure of KFs, $\mat{A}$ and
$\mat{P}$, we perform an eigenanalysis.
Specifically,
we trained LeNet-5~\cite{lecun98lenet5} on Fashion
MNIST~\cite{xiao2017fashionmnist},
ResNet-20~\cite{he2016deep} on
CIFAR-10~\cite{krizhevsky2009learning}, and DeiT-Tiny~\cite{Touvron2020TrainingDI} on Tiny ImageNet~\cite{Le2015TinyIV}
using SGD (with momentum) and performed eigenvalue decomposition of
$\mat{A}$ and $\mat{P}$ to compare their top eigenvalues.
As depicted in Figure~\ref{fig:eigenspectra},
the approximated FIM
$\mat{F}_{\text{KFAC}}$ has very few large eigenvalues (notice the logscale on the
$y$-axis) and the remaining vast majority have 
significantly smaller magnitudes (close to 0), forming a long tail. The
same is observed from the eigenvalues of $\mat{A}$.
In contrast, the eigenvalues of $\mat{P}$
have significantly smaller magnitude compared to those of
$\mat{A}$. This is important because KFAC requires computing the inverse
$(\mat{P}^{(l)}+\rho\mat{I})^{-1}$, where $\rho$ is a damping factor, for preconditioning, which can
cause numerical instability, especially in low-precision
training~\cite{lin2023Structured,lin2023simplifying}. Indeed, we also
observed in our experiments that methods involving the computation of the inverse of $\mat{P}$
occasionally crash in large-scale training.
From the property of eigenvalues of Kronecker product
$\lambda(\mat{F}_{\text{KFAC}}) = \lambda(\mat{A}\otimes \mat{P}) = \{\mu\nu~:~\mu\in \lambda(\mat{A}),\, \nu\in
\lambda(\mat{P})\}$, we see that
the magnitude of eigenvalues in
$\lambda(\mat{F}_{\text{KFAC}})$ is largely determined by those in
$\lambda(\mat{A})$ while the eigengap of $\mat{F}_{\text{KFAC}}$ will be similar to
that of $\mat{P}$. 
This is 
evident in
%at the top of 
Figure~\ref{fig:eigenspectra}. We see that the curves corresponding to
the distribution of $\lambda(\mat{F}_{\text{KFAC}})$ can be made to approximately
coincide with the curve corresponding to $\lambda(\mat{P})$ with some rescaling.
This not only shows 
the diminished role of $\mat{P}$ in characterizing
the optimization landscape but also raises questions about the extent to which it
contributes to the overall efficacy of KFAC.
\cite{Benzing2022GradientDO} empirically showed that removing the
factor $\mat{P}$ from KFAC does not degrade its performance.

\subsection{Curvature Approximation for MLPs and CNNs}
Based on the above observations, we propose 
an efficient approximation
of curvature information $\mat{F}_{\text{KFAC}}$:
\begin{align}
  \mat{F}_{\text{MAC}}^{(l)}
  &=\left(\bar{\vec{a}}^{(l-1)}(\bar{\vec{a}}^{(l-1)})^{\intercal}
  +\rho\mat{I}_{m_{l-1}}\right)\otimes \mat{I}_{m_l}\,, \label{eq:mac_approx}
\end{align}
where $\bar{\vec{a}}^{(l)} = \E[\vec{a}^{(l)}] =
\frac{1}{\abs{\mathcal{B}}}\sum_{i\in \mathcal{B}} \vec{a}_i^{(l)}$.
The method approximates
the activation
covariance matrix $\mat{A}$ with a rank-1
matrix,
$\bar{\vec{a}}^{(l)}(\bar{\vec{a}}^{(l)})^{\intercal}=\E[\vec{a}]\E[\vec{a}]^{\intercal}$.
Three important
remarks are in order. First, the covariance matrix is expressed
as $\E[\vec{a}\vec{a}^{\intercal}] = \E[\vec{a}]\E[\vec{a}]^{\intercal} +
\mat{\Sigma}_{\vec{a}}$, where $\mat{\Sigma}_{\vec{a}} = \E[(\vec{a} - \bar{\vec{a}})(\vec{a}-\bar{\vec{a}})^{\intercal}]$.
Intuitively, we have $\E[\vec{a}\vec{a}^{\intercal}] \approx
\E[\vec{a}]\E[\vec{a}]^{\intercal}$ if
$\norm{\mat{\Sigma}_{\vec{a}}}_{F}$ is small.
ReLU activation function is widely
used in many network architectures and it ensures that activation $\vec{a}$
is non-negative. As a result, in practice, we often observe the
(centered) covariance term $\mat{\Sigma}_{\vec{a}}$ has a small
magnitude.
We formalize this intuition and provide a sufficient condition for good approximation below.

\begin{figure*}[tb]
    \centering
    \includegraphics[width=.46\linewidth]{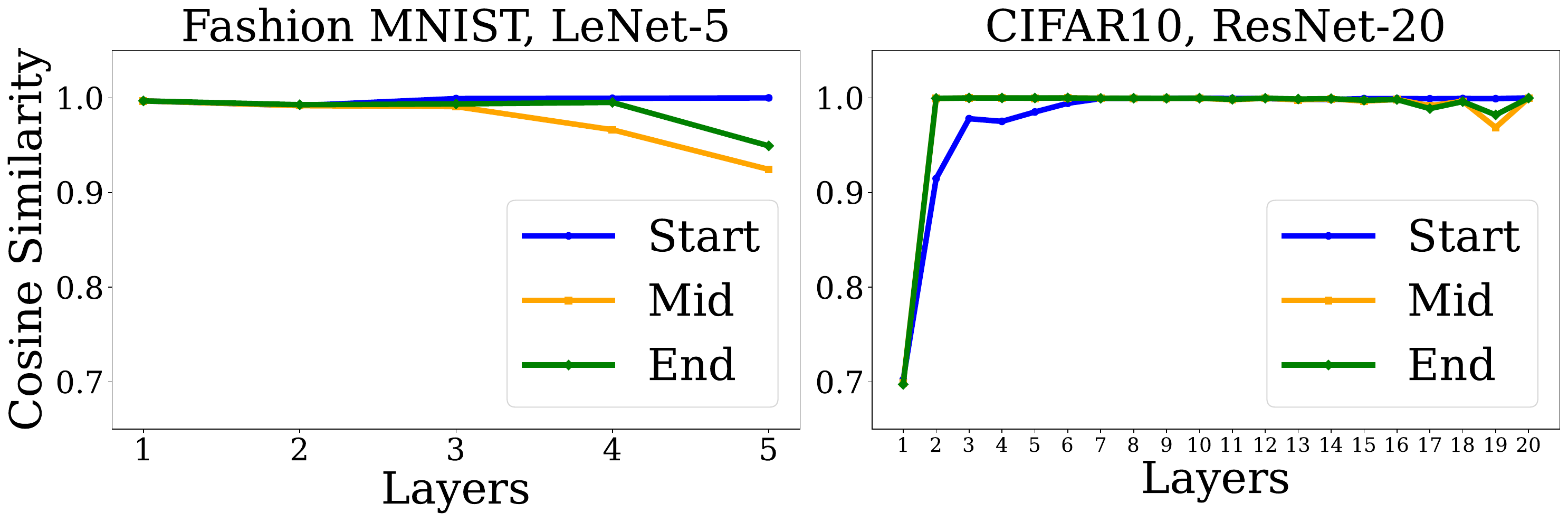}
    \includegraphics[width=.49\linewidth]{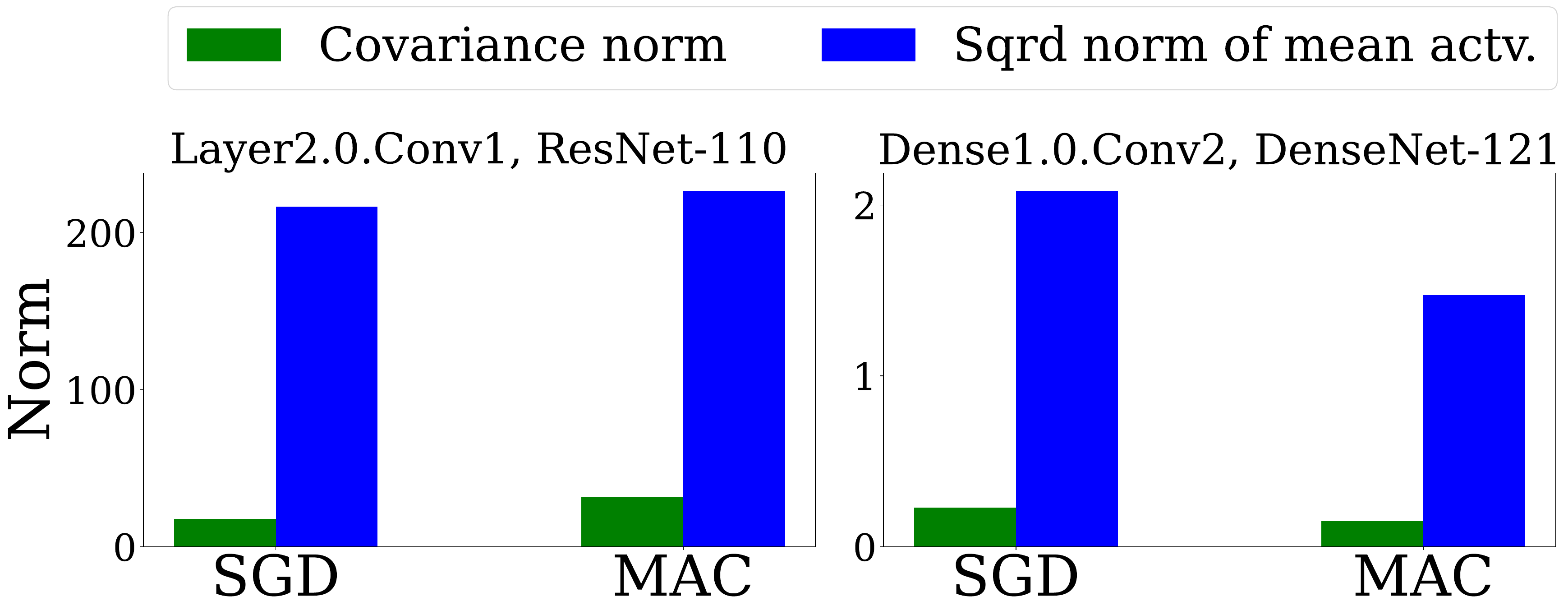}
    \caption{(\textbf{Left}) Cosine similarity between the top eigenvector of
      $\mat{A}$ and the mean activations per layer. (\textbf{Right}) Comparison of centered covariance norms with squared norms of mean activation using the CIFAR-100 dataset.}
    \label{fig:heatmap_cosine}
\end{figure*}

\begin{proposition} \label{prop:1}
   Let $\mat{X}$ be an $m \times n$ matrix with column-wise mean vector
   $\bar{\vec{x}} \in \R^n$. Define a perturbation matrix $\mat{E}$
   such that $\mat{X} = \vec{1}_m \bar{\vec{x}}^\intercal + \mat{E}$,
   where $\vec{1}_m$ is an $m$-dimensional column vector of ones. For some small $\epsilon > 0$, if the Frobenius norm of $\mathbf{E}$ satisfies $\|\mathbf{E}\|_F \leq
    \sqrt{m}\norm{\bar{\mathbf{x}}}(\sqrt{1+\epsilon} -1)$, then $\mat{X}^\intercal \mat{X} \approx m\bar{\vec{x}}\bar{\vec{x}}^\intercal$. 
\end{proposition}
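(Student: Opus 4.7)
The plan is to substitute $\mat{X} = \vec{1}_m \bar{\vec{x}}^\intercal + \mat{E}$ into $\mat{X}^\intercal \mat{X}$, observe that the two cross terms cancel because of the centering property built into the definition of $\bar{\vec{x}}$, and then control the remaining quadratic residual $\mat{E}^\intercal \mat{E}$ via the Frobenius-norm hypothesis.

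The key structural fact is short and forces everything else: since $\bar{\vec{x}}$ is the column-wise mean of $\mat{X}$, we have $\vec{1}_m^\intercal \mat{X} = m\bar{\vec{x}}^\intercal$, and substituting the decomposition immediately yields $\vec{1}_m^\intercal \mat{E} = \vec{0}^\intercal$. With this in hand, I would expand
\begin{equation*}
\mat{X}^\intercal \mat{X} = m\bar{\vec{x}}\bar{\vec{x}}^\intercal + \bar{\vec{x}}(\vec{1}_m^\intercal \mat{E}) + (\mat{E}^\intercal \vec{1}_m)\bar{\vec{x}}^\intercal + \mat{E}^\intercal \mat{E},
\end{equation*}
and kill the two middle terms, leaving the exact identity $\mat{X}^\intercal \mat{X} - m\bar{\vec{x}}\bar{\vec{x}}^\intercal = \mat{E}^\intercal \mat{E}$.

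To quantify the approximation, I would apply submultiplicativity $\norm{\mat{E}^\intercal \mat{E}}_F \leq \norm{\mat{E}}_F^2$ and plug in the hypothesis to get $\norm{\mat{E}^\intercal \mat{E}}_F \leq m\norm{\bar{\vec{x}}}^2 (\sqrt{1+\epsilon}-1)^2$. Since $\norm{m\bar{\vec{x}}\bar{\vec{x}}^\intercal}_F = m\norm{\bar{\vec{x}}}^2$, the relative approximation error is $(\sqrt{1+\epsilon}-1)^2 = O(\epsilon^2)$ for small $\epsilon$, which makes precise the stated claim $\mat{X}^\intercal \mat{X} \approx m\bar{\vec{x}}\bar{\vec{x}}^\intercal$. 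There is no serious obstacle in this argument; the only real insight is the orthogonality $\vec{1}_m^\intercal \mat{E} = \vec{0}^\intercal$ induced by the centering, and the particular form $\sqrt{1+\epsilon}-1$ in the hypothesis is chosen precisely so that the triangle inequality gives $\norm{\mat{X}}_F \leq \sqrt{m(1+\epsilon)}\,\norm{\bar{\vec{x}}}$, i.e., so that the error bound scales cleanly relative to the target.
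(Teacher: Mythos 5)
Your proof is correct, but it takes a genuinely different and in fact sharper route than the paper's. The paper expands $\mat{X}^\intercal\mat{X} = m\bar{\vec{x}}\bar{\vec{x}}^\intercal + \bar{\vec{x}}\vec{1}_m^\intercal\mat{E} + \mat{E}^\intercal\vec{1}_m\bar{\vec{x}}^\intercal + \mat{E}^\intercal\mat{E}$ and bounds all three residual terms, obtaining $2\sqrt{m}\norm{\bar{\vec{x}}}\norm{\mat{E}}_F + \norm{\mat{E}}_F^2 \leq \epsilon\, m\norm{\bar{\vec{x}}}^2$; the hypothesis $\norm{\mat{E}}_F \leq \sqrt{m}\norm{\bar{\vec{x}}}(\sqrt{1+\epsilon}-1)$ is precisely the solution of that quadratic inequality in $c = \norm{\mat{E}}_F/(\sqrt{m}\norm{\bar{\vec{x}}})$, i.e.\ $(1+c)^2 \leq 1+\epsilon$ --- which matches your closing observation about $\norm{\mat{X}}_F$. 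You instead exploit the centering identity $\vec{1}_m^\intercal\mat{E} = \vec{0}^\intercal$ (immediate from $\vec{1}_m^\intercal\mat{X} = m\bar{\vec{x}}^\intercal$), which annihilates both cross terms and leaves the exact identity $\mat{X}^\intercal\mat{X} - m\bar{\vec{x}}\bar{\vec{x}}^\intercal = \mat{E}^\intercal\mat{E}$; this is consistent with the paper's own remark that $\mat{E}^\intercal\mat{E}$ corresponds to the centered covariance $\mat{\Sigma}_{\vec{a}}$. What your approach buys is a quadratically better conclusion: under the same hypothesis the relative Frobenius error is $(\sqrt{1+\epsilon}-1)^2 = O(\epsilon^2)$ rather than $\epsilon$, or equivalently the weaker hypothesis $\norm{\mat{E}}_F \leq \sqrt{\epsilon m}\,\norm{\bar{\vec{x}}}$ would already give relative error $\epsilon$. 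What the paper's looser argument buys is an explanation of where the specific constant $\sqrt{1+\epsilon}-1$ comes from, and it remains valid even when $\mat{E}$ is an arbitrary perturbation not arising from mean-centering.
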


Note that, in our setup, the matrix $\mat{E}$ represents the
deviations $\vec{a} - \E[\vec{a}]$ and $\E[\mat{E}^\intercal \mat{E}]
= \mat{\Sigma}_{\vec{a}}$. The right-hand side of Figure~\ref{fig:heatmap_cosine} provides an empirical evidence that
$\norm{\mat{\Sigma}_{\vec{a}}}_{F}$ is considerably smaller than
$\norm{\E[\vec{a}]}^2$.
Second, we now show that the eigenvector of $\mat{A}$ corresponding to the largest
eigenvalue well aligns with $\bar{\vec{a}} = \E[\vec{a}]$.
Notice that this is the only eigenvector (corresponding to non-zero
eigenvalue) of rank-1 matrix 
$\bar{\vec{a}}\bar{\vec{a}}^{\intercal}$.
From the definition of $\mat{\Sigma}_{\vec{a}}$, we have
% \[
%   \mat{A}=\frac{1}{n}\sum_{i=1}^{n}\vec{a}_i\vec{a}_i^{\intercal} =
%   \bar{\vec{a}}\bar{\vec{a}}^{\intercal} + \mat{\Sigma}_{\vec{a}} = \mat{M}+ \mat{\Sigma}_{\vec{a}}\,,
% \]
$\mat{A}=\frac{1}{n}\sum_{i=1}^{n}\vec{a}_i\vec{a}_i^{\intercal} =
\bar{\vec{a}}\bar{\vec{a}}^{\intercal} + \mat{\Sigma}_{\vec{a}} =
\mat{M}+ \mat{\Sigma}_{\vec{a}}$,
where $\mat{M} = \bar{\vec{a}}\bar{\vec{a}}^{\intercal}$.
Let $\lambda_1, \ldots, \lambda_n$ and $\hat{\lambda}_1, \ldots,
\hat{\lambda}_n$ be the eigenvalues of $\mat{M}$ and $\mat{A} =
\mat{M}+\mat{\Sigma}_{\vec{a}}$. Furthermore, let $\vec{v}_1, \ldots, \vec{v}_n$
and $\hat{\vec{v}}_1, \ldots, \hat{\vec{v}}_n$ denote the eigenvectors
of $\mat{M}$ and $\mat{A}$, respectively. 
Using the Davis-Kahan theorem~\cite{Yu2014AUV}, we can show that
% \begin{equation} \label{eq:davis_kahan}
%   \norm{\vec{v}_1 -\hat{\vec{v}}_1}_2 \leq
%   \frac{2\sqrt{2}\norm{\mat{\Sigma}_{\vec{a}}}_F}{\norm{\bar{\vec{a}}}^{2}}\,.
% \end{equation}
$\norm{\vec{v}_1 -\hat{\vec{v}}_1}_2 \leq
  \frac{2\sqrt{2}\norm{\mat{\Sigma}_{\vec{a}}}_F}{\norm{\bar{\vec{a}}}^{2}}$.
Note that %in \eqref{eq:davis_kahan}, 
$\vec{v}_1 = \bar{\vec{a}}/\norm{\bar{\vec{a}}}$ and 
the term on the RHS can be sufficiently small in practice (see
the right-hand graph of Figure~\ref{fig:heatmap_cosine}).
To empirically verify this,
we trained two different
architectures on two distinct datasets and computed the cosine
similarity between top eigenvector of $\mat{A}$ and
$\bar{\vec{a}}$. As shown on the left of Figure~\ref{fig:heatmap_cosine}, their
cosine similarity approaches nearly 1 across layers, meaning that the top
eigenvector of $\mat{A}$ points the same direction as $\bar{\vec{a}}$.
Third, the use of rank-1 approximation of $\mat{A}$ allows an efficient
inversion of $\mat{F}_{\mac}$ via the
Sherman-Morrison formula. KFAC requires storing $\mat{A}^{(l)}$ and
$\mat{P}^{(l)}$ for each layer $l$ and computing their inverses for
preconditioning. In general, the time complexity of computing inverse
of an $n\times n$ matrix is $\mathcal{O}(n^{3})$, and practical methods
periodically update $\mat{A}^{(l)}$ and $\mat{P}^{(l)}$ to amortize
the computation over iterations. Our rank-1 approximation
significantly improves
both time and memory complexity of algorithm as it stores
$\bar{\vec{a}}^{(l-1)}\in \R^{m_{l-1}}$ instead of $\mat{A}^{(l)}\in
\R^{m_{l-1}\times m_{l-1}}$ and uses the closed-form solution for inversion.

\subsection{Curvature Approximation for Transformers}
\label{subsec:curve_trf}
% !!!
%
This section explains how the curvature information approximation
in~\eqref{eq:mac_approx} can be extended to the attention layers in
transformers. A direct
application of~\eqref{eq:mac_approx} to the attention layers results
in a form that entirely neglects the attention scores, which are the most
crucial component of the self-attention mechanism.
Through an in-depth analysis of the backpropagation step for attention
layers, we derive an efficient approximation of the FIM.

Let $\mat{X} \in \R^{N \times d}$ denote a sequence of $N$ tokens
embedded in a $d$-dimensional space. In the self-attention mechanism,
the input is linearly projected to obtain the query, key, and value
matrices: 
\[
\mat{Q}_h = \mat{X}\mat{W}_{h,q}, \quad \mat{K}_h =
\mat{X}\mat{W}_{h,k}, \quad \mat{V}_h = \mat{X}\mat{W}_{h,v} 
\]
for $h = 1,2,\dots,H$, where % $h$ denotes the head index,
% !!!
%
$h$ indexes the attention head, $H$ is the total number of attention heads,
$\mat{W}_{h,q},\,\mat{W}_{h,k},\,\mat{W}_{h,v} \in \R^{d\times d_k}$,
and $d_k = d/H$. In vision transformers (ViTs), these projections are
often implemented as a single linear mapping $\mat{Z} =
\mat{X}\mat{W}_{qkv} \in \R^{N\times 3d}\,,$ which is then partitioned
as $\mat{Q} = \mat{Z}_{:,1:d},\ \mat{K} = \mat{Z}_{:,d+1:2d},\ \mat{V}
= \mat{Z}_{:,2d+1:3d}$. For each attention head, the scaled dot
product is computed as $\mat{R} = \mat{Q}\mat{K}^\intercal /
\sqrt{d_k} \in \R^{N \times N}\,$ and the attention weights are
obtained by $\mat{T} = \text{softmax}(\mat{R}) \in \R^{N \times
  N}$. The head output is then given by $\mat{H}_h =
\mat{T}\mat{V}_h\,,$ and the final output of the attention layer is
formed by concatenating the outputs from all heads and applying a
final linear projection.  

During backpropagation, we have
\begin{align*}
  \frac{\partial \mathcal{L}}{\partial \mat{W}_q}
  &=\frac{\partial \mathcal{L}}{\partial \mat{R}} \frac{\partial
    \mat{R}}{\partial \mat{Q}} \frac{\partial \mat{Q}}{\partial \mat{W}_q} 
    = \mat{X}^\intercal\, \frac{\partial \mathcal{L}}{\partial
    \mat{R}}\, \mat{K}    
    = \mat{X}^\intercal \Delta_{\mat{R}}\, \mat{K}\,, \\
  \frac{\partial \mathcal{L}}{\partial \mat{W}_k}
  &=\frac{\partial \mathcal{L}}{\partial \mat{R}}
    \frac{\partial \mat{R}}{\partial \mat{K}}
    \frac{\partial \mat{K}}{\partial \mat{W}_k}
    =\mat{X}^\intercal\, \Delta_{\mat{R}}^\intercal\, \mat{Q}\,, \\
  \frac{\partial \mathcal{L}}{\partial \mat{W}_v}
  &=\frac{\partial \mathcal{L}}{\partial \mat{H}}
    \frac{\partial \mat{H}}{\partial \mat{V}}
    \frac{\partial \mat{V}}{\partial \mat{W}_v}
    = \mat{X}^\intercal\, \mat{T}^\intercal\, \Delta_{\mat{H}}\,,
\end{align*} 
where \(\Delta_{\mat{R}} = \partial \mathcal{L}/\partial \mat{R}\) and \(\Delta_{\mat{H}} = \partial \mathcal{L}/\partial \mat{H}\).
\begin{figure*}[tb]
    \centering
    \includegraphics[width=.98\linewidth]{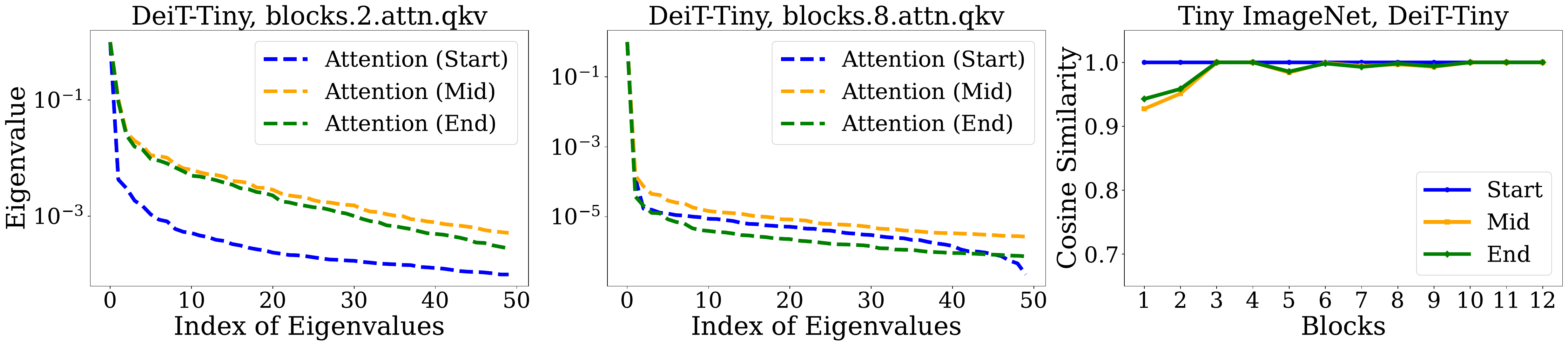}
    \caption{Trained DeiT-Tiny on Tiny ImageNet. (\textbf{Left, Center}) Eigenspectra of attention scores $\mat{T}$ from two distinct blocks as representative cases. (\textbf{Right}) Cosine similarity between the top eigenvector of $\mat{T}$ and the mean attention per block.}
    \label{fig:attn_deit}
\end{figure*}
The empirical FIMs corresponding to the query, key, and value components are then computed as
\begin{align}
    \mat{F}_q &= \E\left[\vect(\mat{X}^\intercal\Delta_{\mat{R}}\mat{K})\vect(\mat{X}^\intercal\Delta_{\mat{R}}\mat{K})^\intercal\right] \nonumber\\
    &= \E\left[(\mat{K}^\intercal \otimes \mat{X}^\intercal)\vect(\Delta_{\mat{R}})\vect(\Delta_{\mat{R}})^\intercal (\mat{K} \otimes \mat{X})\right]\,, \label{eq:fim_q}\\ 
    \mat{F}_k &= \E\left[\vect(\mat{X}^\intercal\Delta_{\mat{R}}\mat{Q})\vect(\mat{X}^\intercal\Delta_{\mat{R}}\mat{Q})^\intercal\right] \nonumber\\
    &= \E\left[(\mat{Q}^\intercal \otimes \mat{X}^\intercal)\vect(\Delta_{\mat{R}})\vect(\Delta_{\mat{R}})^\intercal (\mat{Q} \otimes \mat{X})\right]\,, \label{eq:fim_k}\\
    \mat{F}_v &= \E\left[\vect(\mat{X}^\intercal\mat{T}^\intercal \Delta_{\mat{H}})\vect(\mat{X}^\intercal\mat{T}^\intercal \Delta_{\mat{H}})^\intercal\right] \nonumber\\
    &= \E\left[(\Delta_H^\intercal \otimes \mat{X}^\intercal)\vect(\mat{T})\vect(\mat{T})^\intercal (\Delta_H \otimes \mat{X})\right]\,. \label{eq:fim_v}
\end{align}
As discussed in Section~\ref{subsec:eigenspectrum} and
%
% !!! Figure 3 appears to be unrelated to \Delta_R.
% >>> Figure 1 modified
demonstrated in Figure~\ref{fig:eigenspectra}, 
% !!! Not immediately clear what is meant by a scaling factor.
%     Is there a way to better explain this?
% >>> revised as below
%
% the pre-activation gradient covariance $vect(\Delta_{\mat{R}})\vect(\Delta_{\mat{R}})^\intercal$ primarily acts as a scaling factor; hence, we approximate it with an identity matrix.
the eigenvalue distribution of $\mat{F}_{\text{KFAC}}$ is primarily determined by $\mat{A}$, while the contribution from $\mat{P}$ appears almost uniformly scaled. Therefore, we approximate the pre-activation gradient covariance $\vect(\Delta_{\mat{R}})\vect(\Delta_{\mat{R}})^\intercal$ in \eqref{eq:fim_q} and \eqref{eq:fim_k} with an identity matrix. 
%
%In contrast, 
Furthermore,
empirical analysis shows that the attention score matrix $\mat{T}$ is
rank-deficient. We therefore approximate
$\vect(\mat{T})\vect(\mat{T})^\intercal =
\vect(\bar{\vec{t}}\bar{\vec{t}}^\intercal)\vect(\bar{\vec{t}}\bar{\vec{t}}^\intercal)^\intercal
= (\bar{\vec{t}} \otimes \bar{\vec{t}})(\bar{\vec{t}} \otimes
\bar{\vec{t}})^\intercal$, where $\bar{\vec{t}} \in \R^{N}$ is the
column-wise average of \(\mat{T}\), representing the average attention
distribution over tokens. Figure~\ref{fig:attn_deit} confirms that
$\mat{T}$ exhibits a dominant top eigenvalue and that the cosine
similarity between its top eigenvector and $\bar{\vec{t}}$ is nearly
1, justifying the rank-1 approximation $\mat{T} \approx
\bar{\vec{t}}\bar{\vec{t}}^\intercal$.  

Based on these observations, we approximate the FIMs as follows:
\begin{align*}
    \mat{F}_q &\approx \E[\mat{K}^\intercal \mat{K}] \otimes \E[\mat{X}^\intercal \mat{X}]\,, \\
    \mat{F}_k &\approx \E[\mat{Q}^\intercal \mat{Q}] \otimes \E[\mat{X}^\intercal\mat{X}]\,, \\
    \mat{F}_v &\approx \E\left[\Delta_H^\intercal \bar{\vec{t}}\bar{\vec{t}}^\intercal \Delta_H\right] \otimes \E\left[\mat{X}^\intercal \bar{\vec{t}}\bar{\vec{t}}^\intercal \mat{X}\right]\,,
\end{align*}
% In practice,
% !!!
Empirically,
we observed that the covariance terms
$\E[\mat{K}^\intercal \mat{K}]$, $\E[\mat{Q}^\intercal \mat{Q}]$, and
$\E\left[\Delta_H^\intercal \bar{\vec{t}}\bar{\vec{t}}^\intercal
  \Delta_H\right]$ add little benefit to \mac's performance while
incurring extra computational cost. Consistent with the motivation
behind \mac, we omit these terms and
% propose an efficient approximation of curvature information on
% attention layers:
% !!!
%
propose the following efficient approximation of curvature information
for attention layers:
\begin{align*}
    \mat{F}_{\mac, qkv} &= \diag\left(\mat{F}_{\mac, q},\, \mat{F}_{\mac, k},\, \mat{F}_{\mac, v}\right)\,, \ \text{where}\\
    \mat{F}_{\mac, q} &= \mat{I}_d \otimes \E[\vec{x}]\E[\vec{x}]^\intercal\,, \\
    \mat{F}_{\mac, k} &= \mat{I}_d \otimes \E[\vec{x}]\E[\vec{x}]^\intercal\,, \\
    \mat{F}_{\mac, v} &= \mat{I}_d \otimes \E[\mat{X}^\intercal \bar{\vec{t}}]\E[\mat{X}^\intercal \bar{\vec{t}}]^\intercal\,,
\end{align*}
where $\vec{x} \in \R^d$ denotes a single input token and $\mat{X}^\intercal \bar{\vec{t}} \in \R^d$.

% Unlike typical fully-connected layers, the self-attention mechanism
% necessitates an adjustment in the KFAC formulation, particularly in
% the value component, to account for the influence of attention
% weights. This refined derivation enables the efficient inversion of
% the FIM using rank-1 approximations and the Sherman–Morrison formula,
% which is crucial for achieving scalability in second-order
% optimization for transformer architectures.
% !!!
%
Our derivation shows that capturing the curvature information for
attention layers requires extracting information from intermediate
quantities: query, key, and attention scores. Specifically,
unlike the MAC approximation for fully-connected layers presented
in~\eqref{eq:mac_approx}, the FIM $\mat{F}_{\mac, v}$ for value
reweights the activations (i.e., input sequence of tokens) using the (mean) attention 
scores as weights, explicitly incorporating the attentions into
the preconditioner. In addition, the rank-1 approximation in our
derivation enables an efficient inversion of the FIM using the
Sherman-Morrison formula. 

\subsection{Decoupled and Adaptive Damping}
% In \mac's update, a straightforward application of the
% Sherman–Morrison formula to invert the preconditioner can result in
% performance degradation because the damping term $\rho$ is coupled
% with the learning rate $\eta$. Suppose we have a gradient $\mat{G} =
% \vect^{-1}\left(\grad_{\vec{\theta}}{\mathcal{L}}\right)\in
% \R^{m_{l}\times m_{l-1}}$ and the mean activation $\E[\vec{a}]\in
% \R^{m_{l-1}}$ in a layer.
% Using the Sherman–Morrison, \mac's update
% equation is given by
% !!!
Using the Sherman-Morrison formula and the property
$(\mat{B}^{\intercal}\otimes \mat{A})\vect(\mat{X}) =
\vect(\mat{A}\mat{X}\mat{B})$, the update equation for \mac in matrix 
form is given by 
% \begin{align*}
%     \vec{\theta} &\gets \vec{\theta} - \eta \vect\left(\mat{G}\frac{1}{\rho}\left(\mat{I}_{m_{l-1}} - \frac{\E[\vec{a}]\E[\vec{a}]^\intercal}{\rho + \|\E[\vec{a}]\|^2}\right)\right) \\
%     &\gets \vec{\theta} - \vect\Big(\underbrace{\frac{\eta}{\rho}\mat{G}}_{\circled{1}}+ \underbrace{\frac{\eta}{\rho}\cdot \frac{\mat{G}\E[\vec{a}][\E\vec{a}]^\intercal}{(\rho + \|\E[\vec{a}]\|^2)}}_{\circled{2}}\Big)\,.
% \end{align*}
%
% !!! This is an abuse of notation as \theta is a vector, but let's
%     make this an exception to save space.
\begin{align*}
  \vec{\theta}
  &\gets \vec{\theta} - 
    \frac{\eta}{\rho}\mat{G}\left(\mat{I}_{m_{l-1}} -
    \frac{\E[\vec{a}]\E[\vec{a}]^\intercal}{\rho + \|\E[\vec{a}]\|^2}\right) \\
    &= \underbrace{\vec{\theta}-\frac{\eta}{\rho}\mat{G}}_{\circled{1}}+
    \underbrace{\frac{\eta}{\rho}\cdot
    \frac{\mat{G}\E[\vec{a}]\E[\vec{a}]^\intercal}{(\rho +
    \|\E[\vec{a}]\|^2)}}_{\circled{2}}\,,
\end{align*}
where $\mat{G} =
\vect^{-1}\left(\grad_{\vec{\theta}}{\mathcal{L}}\right)\in
\R^{m_{l}\times m_{l-1}}$ and $\vec{\theta} \in
\R^{m_{l}\times m_{l-1}}$.
% Here, the component $\circled{1}$ corresponds to a standard SGD
% update, whereas $\circled{2}$ acts as a correction term by projecting
% the gradient onto the subspace spanned by the normalized mean
% activation $\E[\vec{a}] / \|\E[\vec{a}]\|$ with a regularization
% $\rho$.
The update equation can be viewed as the standard SGD update
\circled{1}, plus the correction term \circled{2}, which projects
the row vectors of $\mat{G}$ onto the line defined by the mean activation
$\E[\vec{a}]$, with $\rho$ interpreted as 
a regularization coefficient. Although $\rho$ was introduced to control
the strength of damping, we observe that it also influences the
magnitude of \mac update. Specifically, when $\rho$ is small, the
effective step size $\eta/\rho$ becomes excessively large, leading to
unstable updates.
% When $\rho$ is small, the effective step size $\eta/\rho$
% becomes excessively large, leading to unstable updates.
Conversely, when $\rho$ is large, the effective step size becomes too
small, slowing convergence and degrading the performance. % of \mac.
To enhance
stability and prevent performance degradation, we decouple $\rho$ from
$\eta$, % resulting in 
% !!!
leading to a more stable update:  
% \begin{equation} \label{eq:decoupled_damp}
%   \vec{\theta} \gets \vec{\theta} - \eta
%      \vect\left(\mat{G}\left(\mat{I}_{m_{l-1}} -
%          \frac{\E[\vec{a}]\E[\vec{a}]^\intercal}{\rho +
%            |\E[\vec{a}]\|^2}\right)\right)\,, 
%    \end{equation}
\begin{equation} \label{eq:decoupled_damp}
  \vec{\theta} \gets \vec{\theta} - \eta
  \mat{G}\left(\mat{I}_{m_{l-1}} -
      \frac{\E[\vec{a}]\E[\vec{a}]^\intercal}{\rho +
        \|\E[\vec{a}]\|^2}\right)\,, 
\end{equation}
where $\eta$ is redefined to absorb $\rho$ in the effective step
size. In this formulation, $\rho$ functions solely as a
% ridge regularization term
% !!! I am not sure why we call it a ridge regularization
%
regularization coefficient, ensuring that the update remains
well-scaled and stable. 

Additionally, to maintain the accuracy of our curvature approximation,
we introduce an adaptive damping strategy.
% To determine $\rho$, we match the trace of the approximated
% covariance with that of the true covariance:
We dynamically adjusts $\rho$ during run time such that the trace of
the approximated covariance matches that of the true covariance. That is,
%\begin{align*}
%   \text{trace}(\E[\vec{a}\vec{a}^\intercal]) &= \text{trace}(\E[\vec{a}]\E[\vec{a}]^\intercal + \rho \mat{I}_{m_{l-1}})\\
%   &=  \|\E[\vec{a}]\|^2 + \rho m_{l-1} \,.
%\end{align*}
$\text{trace}(\E[\vec{a}\vec{a}^\intercal]) = \text{trace}(\E[\vec{a}]\E[\vec{a}]^\intercal + \rho \mat{I}_{m_{l-1}})=  \|\E[\vec{a}]\|^2 + \rho m_{l-1}$.
Solving for $\rho$ gives %$\rho = \frac{\text{trace}(\E[\vec{a}\vec{a}^\intercal]) - \|\E[\vec{a}]\|^2}{m_{l-1}}$.
\[
   \rho = \frac{\text{trace}(\E[\vec{a}\vec{a}^\intercal]) - \|\E[\vec{a}]\|^2}{m_{l-1}}\,.
\]

\begin{comment}
In
our formulation, % we approximate the activation KF as
% !!!
%
the activation KF is approximated by
\begin{align*}
  \E[\vec{a}\vec{a}^\intercal]
  &\approx \E[\vec{a}]\E[\vec{a}]^\intercal + \rho \mat{I}_{m_{l-1}} 
  =\|\E[\vec{a}]\|^2\frac{\E[\vec{a}]}{\|\E[\vec{a}]\|}\frac{\E[\vec{a}]^\intercal}{\|\E[\vec{a}]\|} + \rho \mat{I}_{m_{l-1}}\,,
\end{align*}
% where $\|\E[\vec{a}]\|^2$ serves as a scalar that captures the
% variance along the direction of the mean activation, while $\rho$ is a
% scalar damping parameter chosen to align the trace of the
% approximation with that of the actual covariance matrix
% $\E[\vec{a}\vec{a}^\intercal]$.
%
% !!!
where $\|\E[\vec{a}]\|^2$ can be regarded as a scalar representing the
variance along the direction of the mean activation, while the damping
parameter $\rho$ is a scalar selected to match the trace of the
approximation with that of the actual covariance matrix $\E[\vec{a}\vec{a}^\intercal]$.

This adaptive damping term $\delta$ effectively calibrates the approximation, ensuring that the total variance of the covariance matrix is preserved. Consequently, our method not only accommodates the inherent variability in activation statistics but also improves the efficiency of hyperparameter search by eliminating the need to tune the damping term.
\end{comment}

\subsection{Algorithm}
The pseudocode of \mac is presented in
Algorithm~\ref{alg:pseudocode}. Since, for the first layer ($l=1$),
$\vec{a}_i^{(0)}=\vec{x}_i$ is the input data example and fixed
throughout the training, we can optionally 
pre-compute $\bar{\vec{a}}^{(0)}=(1/n)\sum_{i=1}^{n}\vec{x}_i$ before
starting the training and skip updating preconditioner for the first
layer during training (see Line~\ref{alg:firstlayer}). We empirically
observed that computing $\bar{\vec{a}}^{(0)}$ over mini-batches during
runtime incurs negligible increase in execution time.
At each iteration $k$, \mac estimates
$\bar{\vec{a}}^{(l)}$ using the examples in the current mini-batch $\mathcal{B}$
and update the maintained statistic using an exponential moving average (EMA) as shown in Line~\ref{alg:attn_a_update} for attention value projection layers and in Line~\ref{alg:a_update} for fully-connected and convolutional layers.
Line~\ref{alg:a_inv} applies Sherman-Morrison formula to compute the inverse of damped rank-1 approximation on $\mat{A}$.

\renewcommand{\algorithmiccomment}[1]{$\triangleright$ #1} 
\begin{algorithm}[t]
\caption{\mac}
\label{alg:pseudocode}
\textbf{Require}: Learning rate $\eta_k$, Momentum $\beta_{1}$, EMA $\beta_{2}$, Damping $\rho$, Curvature update frequency $\tau_{\text{cov}}$, Inverse update frequency $\tau_{\text{inv}}$\\
\textbf{Initialize}:
\makebox[\linewidth][l]{$\vec{\theta}_0,\ \widetilde{\vec{a}}_0\!=\!\vec{0},\ (\widehat{\mat{A}}^{(l)})^{-1}\!=\!\mat{I},\ k_\tau\!=\!0$}
\begin{algorithmic}[1]
\STATE (optional)  $(\mat{A}^{(0)})^{-1} = \left(\mat{I}
  - \frac{\bar{\vec{a}}^{(0)} (\bar{\vec{a}}^{(0)})^{\intercal}}{\rho +
    \norm{\bar{\vec{a}}^{(0)}}^{2}}\right)$, where $\bar{\vec{a}}^{(0)} =
\frac{1}{n}\sum_{i=1}^{n}\vec{x}_i$ \label{alg:firstlayer}
\FOR{$k$ = 1, 2, 3, \dots}
\STATE $\mat{G}_{k} \gets \frac{1}{|\mathcal{B}|}\sum_{i\in
  \mathcal{B}}\grad{\ell}(f(\vec{x}_i; \vec{\theta}_k), y_i)$
\FOR{$l$ = 1, 2, \dots, L}
\IF{$(k \mod \tau_{\text{cov}}) = 0$}
\IF{$\vec{\theta}^{(l)} = \vec{\theta}_{v}^{(l)}$}
\STATE $\widetilde{\vec{a}}_{k}^{(l)} \gets
\beta_{2}\widetilde{\vec{a}}_{k-1}^{(l)} +
(1-\beta_{2})(\mat{X}_k^{(l)})^\intercal\bar{\vec{t}}_k^{(l)}$ \label{alg:attn_a_update}
\STATE \COMMENT{for $\mat{W}_v$ in attention layers} 
\ELSE
\STATE $\widetilde{\vec{a}}_{k}^{(l)} \gets \beta_{2}\widetilde{\vec{a}}_{k-1}^{(l)} +
(1-\beta_{2})\bar{\vec{a}}_{k}^{(l-1)}$ \label{alg:a_update}
\ENDIF
\STATE $k_{\tau} \gets k_{\tau} + 1$
\ENDIF
\IF{$(k \mod \tau_{\text{inv}}) = 0$}
\STATE $\widehat{\vec{a}}_{k}^{(l)} \gets \widetilde{\vec{a}}_{k}^{(l)} \ / \
(1-\beta_{2}^{k_{\tau}})$
\STATE $(\widehat{\mat{A}}^{(l)})^{-1} \gets \left(\mat{I} - \frac{\widehat{\vec{a}}_{k}^{(l)}(\widehat{\vec{a}}_{k}^{(l)})^{\intercal}}{\rho + \|\widehat{\vec{a}}_{k}^{(l)}\|^{2}}\right)$ \label{alg:a_inv}
\STATE \COMMENT{decoupled damping in \eqref{eq:decoupled_damp}} 
\ENDIF
\STATE $\widehat{\mat{G}}_{k}^{(l)} \gets
\mat{G}_{k}^{(l)}(\widehat{\mat{A}}^{(l)})^{-1}$
\ENDFOR
\ENDFOR
\end{algorithmic}
\end{algorithm}

We compare the asymptotic time and memory costs of
preconditioning a layer with the weight matrix of size $(d_{\text{out}}
\times d_{\text{in}})$ in Table~\ref{tab:complexity}. It is evident
that 
\mac significantly reduces
computational and memory requirements compared to KFAC and its
variants. These reductions are achieved by simplifying the FIM
approximation process without compromising the optimization
performance, thereby offering a more efficient alternative for
leveraging second-order information in deep learning optimization
tasks. 

\begin{table*}[tb]
  \caption{Comparison of time and memory complexity for computing the inverse of preconditioner(s)}
  \label{tab:complexity}
  \centering
  \begin{tabular}{llll}
    \toprule
    Method & Preconditioner & Time & Memory \\
    \midrule
    \textsc{KFAC} & $\E[\vec{a}\vec{a}^\intercal] \otimes \E[\vec{p}\vec{p}^\intercal]$ & $\mathcal{O}(d_{\text{out}}^3) + \mathcal{O}(d_{\text{in}}^3)$ & $\mathcal{O}(d_{\text{out}}^2) + \mathcal{O}(d_{\text{in}}^2)$ \\
     \textsc{FOOF}~\cite{Benzing2022GradientDO} & $\E[\vec{a}\vec{a}^\intercal] \otimes \mat{I}$ & $\mathcal{O}(d_{\text{in}}^3)$ & $\mathcal{O}(d_{\text{in}}^2)$ \\
     \textsc{Eva}~\cite{Zhang2023EvaPS} & $\E[\vec{a}]\E[\vec{a}]^\intercal \otimes \E[\vec{p}]\E[\vec{p}]^\intercal$& $\mathcal{O}(d_{\text{out}}^2) + \mathcal{O}(d_{\text{in}}^2)$ & $\mathcal{O}(d_{\text{out}}) + \mathcal{O}(d_{\text{in}})$ \\
      \textsc{LNGD}~\cite{Liu2024ALN} & $\E[\|\vec{p}\|^2 \vec{a}\vec{a}^\intercal] \otimes \frac{\E[\|\vec{a}\|^2 \text{diag}(\vec{p}\vec{p}^\intercal)]}{\E[\|\vec{a}\|^2\|\vec{p}\|^2]}$ & $\mathcal{O}(d_{\text{out}}) + \mathcal{O}(d_{\text{in}}^3)$ & $\mathcal{O}(|\mathcal{B}| \cdot d_{\text{out}}) + \mathcal{O}(|\mathcal{B}| \cdot d_{\text{in}}^2)$ \\
      \midrule
     \mac & $\E[\vec{a}]\E[\vec{a}]^\intercal \otimes \mat{I}$ & $\mathcal{O}(d_{\text{in}}^2)$ & $\mathcal{O}(d_{\text{in}})$ \\
    \bottomrule
  \end{tabular}
\end{table*}

For hyperparameter settings, we recommend adopting the same learning
rate, momentum, and weight decay value as SGD. An ablation study of the remaining hyperparameters is provided in Section~\ref{apdx:ablation}.

%%% Local Variables:
%%% mode: latex
%%% TeX-master: "../main"
%%% End:

\section{Convergence Analysis}
\label{sec:convergence}
In this section, we analyze the convergence of \mac on 
the %following 
2-layer ReLU network~\cite{du2018gradient}:
\begin{equation*} \label{eq:twolayernn}
     f(\vec{\theta}, \vec{x}) = \frac{1}{\sqrt{m}}\sum_{r=1}^{m}{q_{r} \phi(\vec{w}_{r}^\intercal \vec{x})}\,,
\end{equation*}
%f(\vec{\theta}, \vec{x}) = \frac{1}{\sqrt{m}}\sum_{r=1}^{m}{q_{r} \phi(\vec{w}_{r}^\intercal \vec{x})}$,
where $\mat{W} = [\vec{w}_1, \vec{w}_2, \dots, \vec{w}_m]^\intercal \in \R^{m\times d}$ is the weight matrix of the first layer, $\vec{\theta}=\vect{(\mat{W})}\in \R^{md}$, $\vec{q}=[q_1, q_2, \dots, q_m] \in \R^m$ is the weight of output layer initialized with $q_r \sim \text{Unif}(-1,1)$ and fixed during training. Given %a dataset 
$\mathcal{S} = \{(\vec{x}_i, y_i)\}_{i=1}^n$, the error of network $f$ %'s prediction 
is measured using the squared loss function. The update equation of \mac is given by
\begin{equation*}
    \vec{\theta} \leftarrow  \vec{\theta} - \eta \mat{F}_{\mac}^{-1} \mat{J}(\vec{\theta})^\intercal (\vec{u}(\vec{\theta}) - \vec{y})
\end{equation*}
where $\mat{F}_{\mac} = (\bar{\vec{x}}\, \bar{\vec{x}}^\intercal + \rho \mat{I}_d) \otimes \mat{I}_m$, $\overline{\vec{x}} = \frac{1}{n}\mat{X}^\intercal \vec{1}_n$, $\mat{X}=\left[\vec{x}_1, \dots, \vec{x}_n \right]^\intercal \in \R^{n\times d}$, $\vec{u}(\vec{\theta}) = [u_1, \dots, u_n]^\intercal = \left[f(\vec{\theta}, \vec{x}_1), \dots, f(\vec{\theta}, \vec{x}_n)\right]^\intercal$ is the neural network output, and $\mat{J}(\vec{\theta})$ denotes the Jacobian matrix of $\vec{u}(\vec{\theta})$ with respect to $\vec{\theta}$.

\subsection{Convergence of \mac}
We begin by stating assumptions and lemmas required for % the proof.
establishing the convergence. Due to space constraints, 
all proofs are deferred to Appendix~\ref{apdx:converge_proof}.
%we will include all proofs in the supplementary material of the camera-ready version.
As in~\cite{du2018gradient,Wu2019GlobalCO,zhang2019fast},
we make the following assumption. %on the input and output. 
\begin{assumption}\label{as:1}
    For all $i$, $\|\vec{x}_i\| = 1$ and $|y_{i}| = \mathcal{O}(1)$. For any $i \neq j$, $\vec{x}_i \nparallel \vec{x}_j$. 
\end{assumption}
The assumption simplifies analysis and generally holds for most real-world datasets, where no two inputs are parallel. 
\begin{definition}[Limiting Gram Matrix]
    Let $\mathbf{\Sigma}^{\infty}\in \R^{n\times n}$ be defined as $\mathbf{\Sigma}^{\infty} = \mathbf{\Gamma}^\infty(\mathbf{\Gamma}^\infty)^\intercal$ with $\mathbf{\Sigma}^{\infty}_{ij} \triangleq \E_{\mathbf{w}\sim \mathcal{N}(\mathbf{0}, \mathbf{I})}[\mathbf{x}_{i}^\intercal \mathbf{x}_{j} \mathbb{I}(\mathbf{w}^\intercal \mathbf{x}_i \geq 0, \mathbf{w}^\intercal \mathbf{x}_j \geq 0)],$ where the infinite-width limit of neural tangent kernel $\mathbf{\Gamma}^\infty = \underset{m\rightarrow \infty}{\lim}\frac{1}{\sqrt{m}}[\phi'(\mathbf{X}\mathbf{w}_1), \dots, \phi'(\mathbf{X}\mathbf{w}_m)]$.
\end{definition}

The following conditions are required for the proof, and we demonstrate in the Appendix that these conditions can be satisfied by overparameterizing the network.
%The following conditions are required for the proof and can be satisfied by overparameterizing the network.

\begin{condition}\label{cond:1}
    $\mat{\Sigma}^{\infty}$ is positive definite with minimum eigenvalue $\lambda_{\mat{\Gamma}}$.
\end{condition}

\begin{condition}[Stable Jacobian]\label{cond:2}
    $\|\mat{J} - \mat{J}(\vec{\theta}_0)\|_2 \leq \frac{C\rho}{2\sigma_{\max}(\mat{X})}$ for some $0 < C < \frac{1}{2}$. 
\end{condition}

\begin{figure}[tb]
    \centering
    \includegraphics[width=0.7\textwidth]{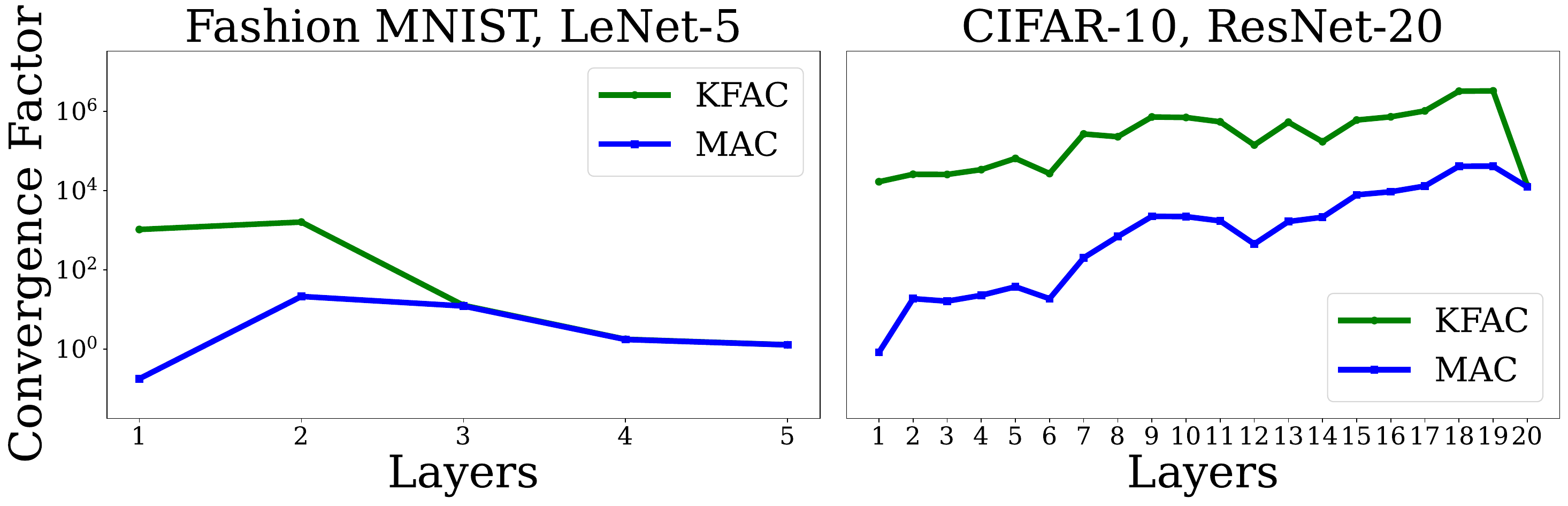}
    \caption{Comparison of the convergence factor between KFAC and \mac during (\textbf{Left}) LeNet-5 training on Fashion MNIST and (\textbf{Right}) ResNet-20 training on CIFAR-10 dataset.}
    \label{fig:convergence}
\end{figure}

We show that \mac guarantees linear convergence to a global minimum
when the neural network is sufficiently large. The main result is
stated as follows. 

\begin{theorem}[\mac]\label{thm:converge}
    Under Assumption~\ref{as:1}, Condition~\ref{cond:1}, and
    \ref{cond:2}, if we set the number of hidden units $m =
    \Omega\left(\frac{n^3\lambda_{\max}(\mat{X}^\intercal
        \mat{X})^{4}\|\vec{y}-\vec{u}(\vec{\theta}_0)\|_{2}^{2}}{\lambda_{\mat{\Gamma}}^{2}\delta^{2}\rho^3}\right)$
    and learning rate
    $\eta=\mathcal{O}\left(\frac{\rho}{\lambda_{\max}(\mat{X}^\intercal
        \mat{X})\lambda_{\mat{\Gamma}}}\right)$, then with probability
    at least $1-\delta$ over the random initialization, we have for $k
    = 0, 1, 2, \dots$ 
    % \begin{equation}
    \[
        \|\vec{y}-\vec{u}(\vec{\theta}_{k})\|^2 \leq \left(1 -
          \frac{\eta\lambda_{\mat{\Gamma}}\lambda_{\min}(\mat{X}^\intercal
            \mat{X})}{2(\|\overline{\vec{x}}\|^{2} + \rho)}\right)^{k}
        \|\vec{y} - \vec{u}(\vec{\theta}_{0})\|^2 
    \]
    % \end{equation}
\end{theorem}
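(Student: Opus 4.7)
The plan is to adapt the NTK-style analysis of Du et al.~\cite{du2018gradient} to the preconditioned update rule of \mac. The central object is the preconditioned tangent kernel
\[
  \mat{H}(\vec{\theta}) := \mat{J}(\vec{\theta})\,\mat{F}_{\mac}^{-1}\,\mat{J}(\vec{\theta})^{\intercal}\in\R^{n\times n},
\]
whose minimum eigenvalue controls the one-step contraction of the residual $\vec{r}_k := \vec{u}(\vec{\theta}_k)-\vec{y}$. A first-order Taylor expansion of $\vec{u}$ around $\vec{\theta}_k$, combined with the \mac update, yields $\vec{r}_{k+1} = (\mat{I}-\eta\mat{H}(\vec{\theta}_k))\vec{r}_k + \vec{\epsilon}_k$, where $\vec{\epsilon}_k$ collects the higher-order error arising from the non-linearity of $\phi$. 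I would bound $\|\vec{\epsilon}_k\|$ by counting the neurons whose ReLU activation pattern flips between $\vec{\theta}_k$ and $\vec{\theta}_{k+1}$, as in the original NTK argument, and show it is dominated by the linear contraction term once $m$ is chosen as in the theorem.

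The next step is to lower-bound $\lambda_{\min}(\mat{H}(\vec{\theta}_0))$. Using $\mat{F}_{\mac}^{-1} = (\bar{\vec{x}}\bar{\vec{x}}^{\intercal}+\rho\mat{I}_d)^{-1}\otimes\mat{I}_m$ and writing the per-neuron Jacobian slice as $\mat{J}^{(r)} = (q_r/\sqrt{m})\mat{D}_r\mat{X}$ with $\mat{D}_r = \diag(\phi'(\vec{w}_r^{\intercal}\vec{x}_i))$, one obtains
\[
  \mat{H}(\vec{\theta}_0) = \bigl[\mat{X}(\bar{\vec{x}}\bar{\vec{x}}^{\intercal}+\rho\mat{I}_d)^{-1}\mat{X}^{\intercal}\bigr]\odot \mat{K}_m,
\]
where $(\mat{K}_m)_{ij} = \tfrac{1}{m}\sum_r q_r^2\phi'(\vec{w}_r^{\intercal}\vec{x}_i)\phi'(\vec{w}_r^{\intercal}\vec{x}_j)$. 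The first factor has minimum eigenvalue at least $\lambda_{\min}(\mat{X}^{\intercal}\mat{X})/(\|\bar{\vec{x}}\|^{2}+\rho)$ via the Loewner bound $(\bar{\vec{x}}\bar{\vec{x}}^{\intercal}+\rho\mat{I}_d)^{-1}\succeq \mat{I}_d/(\|\bar{\vec{x}}\|^{2}+\rho)$; the second concentrates (with probability $\ge 1-\delta$) around its infinite-width limit, so that the overall matrix inherits a spectral lower bound driven by $\lambda_{\mat{\Gamma}}$ (Condition~\ref{cond:1}). Combining these via a Schur-product inequality yields $\lambda_{\min}(\mat{H}(\vec{\theta}_0))\ge \lambda_{\mat{\Gamma}}\lambda_{\min}(\mat{X}^{\intercal}\mat{X})/(\|\bar{\vec{x}}\|^{2}+\rho)$ under the overparameterization $m=\Omega(\cdot)$ stated in the theorem.

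To extend this bound to all iterations I would proceed by induction. Condition~\ref{cond:2} bounds $\|\mat{J}(\vec{\theta})-\mat{J}(\vec{\theta}_0)\|_2$ by $C\rho/(2\sigma_{\max}(\mat{X}))$, so together with $\|\mat{F}_{\mac}^{-1}\|_2\le 1/\rho$ a perturbation argument gives $\|\mat{H}(\vec{\theta})-\mat{H}(\vec{\theta}_0)\|_2\le\tfrac12\lambda_{\min}(\mat{H}(\vec{\theta}_0))$ and hence $\lambda_{\min}(\mat{H}(\vec{\theta}_k))\ge\tfrac12\lambda_{\min}(\mat{H}(\vec{\theta}_0))$, as long as $\vec{\theta}_k$ remains in the ball where Condition~\ref{cond:2} applies. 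The induction then maintains (i) the residual contraction by the factor $1-\mu$ with $\mu=\eta\lambda_{\mat{\Gamma}}\lambda_{\min}(\mat{X}^{\intercal}\mat{X})/[2(\|\bar{\vec{x}}\|^{2}+\rho)]$, and (ii) the neighborhood constraint, the latter of which follows from telescoping $\|\vec{\theta}_{k+1}-\vec{\theta}_k\|\le(\eta/\rho)\|\mat{J}_k^{\intercal}\vec{r}_k\|$ as a geometric series whose sum is controlled by the overparameterization bound.

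The main technical obstacle will be carrying out the eigenvalue analysis of step two with sharp enough inequalities to produce exactly $\lambda_{\mat{\Gamma}}\lambda_{\min}(\mat{X}^{\intercal}\mat{X})$ in the numerator of $\mu$, and ensuring that the three constraints of the induction — residual contraction, stable-Jacobian neighborhood, and finite-width NTK concentration — close self-consistently under the single overparameterization bound of the theorem. The non-smoothness of ReLU is confined to bounding the activation-flip count that appears in $\|\vec{\epsilon}_k\|$, which the overparameterization is explicitly designed to absorb.
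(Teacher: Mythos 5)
Your proposal follows essentially the same route as the paper's proof: the contraction is driven by the preconditioned tangent kernel written as the Hadamard product $\bigl[\mat{X}(\bar{\vec{x}}\bar{\vec{x}}^{\intercal}+\rho\mat{I}_d)^{-1}\mat{X}^{\intercal}\bigr]\odot\mat{\Gamma}\mat{\Gamma}^{\intercal}$, lower-bounded via the Schur product theorem together with $\lambda_{\min}\bigl((\bar{\vec{x}}\bar{\vec{x}}^{\intercal}+\rho\mat{I}_d)^{-1}\bigr)=1/(\|\bar{\vec{x}}\|^{2}+\rho)$ and the finite-width concentration $\lambda_{\min}(\mat{\Gamma}\mat{\Gamma}^{\intercal})\ge\lambda_{\mat{\Gamma}}/2$, while the higher-order error is absorbed by Condition~\ref{cond:2} and overparameterization, and the iterates are kept near initialization by a telescoping geometric series. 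The only cosmetic difference is bookkeeping: the paper bounds the Jacobian-drift term $\circled{B}$ directly from Condition~\ref{cond:2} rather than re-deriving it from activation-flip counts, and the factor of $2$ in the rate enters through the NTK concentration lemma rather than through a separate perturbation bound on $\mat{H}$.
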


\begin{remark}
The convergence rate of \mac is captured by
$\frac{\lambda_{\max}(\bar{\mat{x}}\bar{\mat{x}}^\intercal)}{\lambda_{\min}(\mat{X}^\intercal
  \mat{X})}$, whereas the convergence rate of KFAC is characterized by
the condition number of the matrix $\mat{X}^\intercal \mat{X}$, given
by $\frac{\lambda_{\max}(\mat{X}^\intercal
  \mat{X})}{\lambda_{\min}(\mat{X}^\intercal
  \mat{X})}$~\cite{zhang2019fast}. Note that the only non-zero
eigenvalue of $\bar{\vec{x}}\bar{\vec{x}}^\intercal$ is
$\|\bar{\vec{x}}\|^2$. As shown in Figure~\ref{fig:convergence}, we
empirically find that $\|\bar{\vec{x}}\|^2$ is typically smaller than
the largest eigenvalue of $\mat{X}^\intercal \mat{X}$ but proportional
to it. This implies that \mac can achieve asymptotically the same
convergence rate with KFAC while using the computationally efficient
rank-1 approximation. 
\end{remark}
See Appendix~\ref{apdx:converge_proof} for the proof.

%%% Local Variables:
%%% mode: latex
%%% TeX-master: "../main"
%%% End:

\section{Experiments}
\label{sec:experiment}
In this section, we evaluate the performance of 
% \mac across
% various image classification tasks
\mac on image classification tasks using various datasets
and compare it with other baselines.
For comparison,
we focus on KFAC and its variants, excluding methods like Shampoo and
AdaHessian due to their substantial computational overhead, as
discussed in Section~\ref{sec:introduction}.
All experiments were performed using AMD EPYC 64-core CPUs and 4 Nvidia A100 GPUs.

%\subsection{Image Classification}
\subsection{Training Settings}
For CIFAR dataset, we utilized ResNet-110,
DenseNet-121~\cite{Huang2016DenselyCC}, and
WideResNet-28-10~\cite{Zagoruyko2016WideRN}. Each 
model was trained for 
50/100/200 epochs with a mini-batch size of 128 and 
cosine annealing~\cite{Loshchilov2016SGDRSG} 
learning rate scheduling 
on a single GPU.
The reported metrics are averages from 5 different runs.
We compare our methods against SGD momentum, AdamW~\cite{kingma2015adam,Loshchilov2019DecoupledWD}, KFAC, and the
state-of-the-art KFAC variants, FOOF, Eva, and LNGD. 
For KFAC and its variants, including \mac,
we set the frequency of approximated FIM inversion to
50 iterations. 
%Detailed hyperparameter configurations are provided in Table~\ref{tab:cifar_hyperpar} in Appendix~\ref{apdx:cifar_detail}. 
For ImageNet-1k~\cite{deng2009imagenet} experiments, 
we trained
ResNet-50, ResNet-101, DeiT-Small~\cite{Touvron2020TrainingDI}, and Swin Transformer Tiny
(Swin-Tiny)~\cite{Liu2021SwinTH} for 100/200 epochs with a mini-batch
size of 1,024 and cosine learning rate scheduling using 4
GPUs. 
See Table~\ref{tab:imagenet_setting} and Table~\ref{tab:imagenet_hyperpar} in Appendix~\ref{apdx:imagenet_detail} for the experimental details.

%Detailed hyperparameter configurations and the experimental details will be available in the supplementary material of the camera-ready version.

\begin{comment}
\begin{figure}[tb]
    \centering
    \includegraphics[width=0.98\linewidth]{}
    \caption{Comparison of training time (in logarithmic hours) and memory usage (GB) for models on CIFAR (Top) and ImageNet (Bottom) datasets}
    \label{fig:time_mem}
\end{figure}  
\end{comment}

\begin{table*}[tb] 
\caption{Test accuracy (\%) and standard deviation (in parentheses)  on CIFAR-10 (\textbf{Top}) and CIFAR-100
    (\textbf{Bottom}) datasets across different optimizers and models: ResNet-110,
    DenseNet-121, and WideResNet-28-10.}
\label{tab:cifar}
\centering
\begin{threeparttable}
\footnotesize
\begin{tabular}{c|ccc|ccc|ccc}
\toprule
Model & \multicolumn{3}{c|}{ResNet-110} & \multicolumn{3}{c|}{DenseNet-121} & \multicolumn{3}{c}{WideResNet-28-10} \\
Epoch & 50 & 100 & 200 & 50 & 100 & 200 & 50 & 100 & 200 \\
\midrule
\mac    & 93.5\scriptsize{(0.2)} & 94.4\scriptsize{(0.2)} & 94.9\scriptsize{(0.2)} & 95.3\scriptsize{(0.2)} & 95.7\scriptsize{(0.1)} & 95.9\scriptsize{(0.1)} & 95.7\scriptsize{(0.1)} & 96.2\scriptsize{(0.1)} & 96.4\scriptsize{(0.1)}  \\
\midrule
\textsc{SGD}    & 92.0\scriptsize{(0.3)} & 93.5\scriptsize{(0.4)} & 94.3\scriptsize{(0.3)} & 94.9\scriptsize{(0.2)} & 95.4\scriptsize{(0.1)} & 95.6\scriptsize{(0.1)} & 95.4\scriptsize{(0.1)} & 96.0\scriptsize{(0.2)} & 96.2\scriptsize{(0.0)}  \\
\textsc{AdamW}  & 93.4\scriptsize{(0.1)} & 94.2\scriptsize{(0.1)} & 94.4\scriptsize{(0.1)} & 94.7\scriptsize{(0.1)} & 94.9\scriptsize{(0.2)} & 94.9\scriptsize{(0.2)} & 95.1\scriptsize{(0.1)} & 95.6\scriptsize{(0.1)} & 95.9\scriptsize{(0.1)}  \\
\midrule
\textsc{KFAC}   & 93.5\scriptsize{(0.1)} & 94.3\scriptsize{(0.1)} & 94.7\scriptsize{(0.2)} & 94.9\scriptsize{(0.1)} & 95.3\scriptsize{(0.1)} & 95.6\scriptsize{(0.1)} & 95.4\scriptsize{(0.1)} & 96.0\scriptsize{(0.1)} & 96.3\scriptsize{(0.1)}  \\
\textsc{FOOF}   & 94.0\scriptsize{(0.1)} & 94.7\scriptsize{(0.1)} & 95.1\scriptsize{(0.1)} & 95.6\scriptsize{(0.1)} & 95.8\scriptsize{(0.1)} & 96.0\scriptsize{(0.1)} & 95.8\scriptsize{(0.1)} & 96.2\scriptsize{(0.1)} & 96.4\scriptsize{(0.0)}  \\
\textsc{Eva}    & 93.6\scriptsize{(0.2)} & 94.1\scriptsize{(0.1)} & 94.7\scriptsize{(0.1)} & 94.7\scriptsize{(0.1)} & 95.3\scriptsize{(0.1)} & 95.7\scriptsize{(0.2)} & 95.4\scriptsize{(0.2)} & 95.9\scriptsize{(0.1)} & 96.2\scriptsize{(0.2)}  \\
\textsc{LNGD}   & 92.8\scriptsize{(0.1)} & 93.8\scriptsize{(0.2)} & 94.1\scriptsize{(0.1)} & 95.0\scriptsize{(0.1)} & 95.4\scriptsize{(0.2)} & 95.3\scriptsize{(0.2)} & 95.2\scriptsize{(0.2)} & 95.6\scriptsize{(0.1)} & 95.8\scriptsize{(0.2)} \\
\bottomrule
\toprule
Model & \multicolumn{3}{c|}{ResNet-110} & \multicolumn{3}{c|}{DenseNet-121} & \multicolumn{3}{c}{WideResNet-28-10}  \\
Epoch & 50 & 100 & 200 & 50 & 100 & 200 & 50 & 100 & 200 \\
\midrule
\mac    & 72.8\scriptsize{(0.4)} & 74.2\scriptsize{(0.4)} & 75.0\scriptsize{(0.3)} & 78.9\scriptsize{(0.2)} & 80.5\scriptsize{(0.2)} & 80.6\scriptsize{(0.2)} & 79.4\scriptsize{(0.2)} & 80.9\scriptsize{(0.2)} & 81.6\scriptsize{(0.2)}  \\
\midrule
\textsc{SGD}    & 71.1\scriptsize{(1.0)} & 72.5\scriptsize{(0.7)} & 73.5\scriptsize{(0.6)} & 78.2\scriptsize{(0.2)} & 79.6\scriptsize{(0.1)} & 79.9\scriptsize{(0.3)} & 79.3\scriptsize{(0.2)} & 80.7\scriptsize{(0.1)} & 81.5\scriptsize{(0.2)}  \\
\textsc{AdamW}  & 71.6\scriptsize{(0.1)} & 73.4\scriptsize{(0.2)} & 73.7\scriptsize{(0.3)} & 77.3\scriptsize{(0.2)} & 78.5\scriptsize{(0.3)} & 79.0\scriptsize{(0.2)} & 78.2\scriptsize{(0.2)} & 79.7\scriptsize{(0.1)} & 80.2\scriptsize{(0.2)}  \\
\midrule
\textsc{KFAC}   & 71.5\scriptsize{(1.3)} & 73.2\scriptsize{(0.5)} & 74.2\scriptsize{(0.4)} & 78.1\scriptsize{(0.5)} & 79.7\scriptsize{(0.2)} & 80.1\scriptsize{(0.3)} & 79.3\scriptsize{(0.2)} & 81.0\scriptsize{(0.2)} & 81.5\scriptsize{(0.1)}  \\
\textsc{FOOF}   & 73.6\scriptsize{(0.2)} & 75.1\scriptsize{(0.3)} & 76.0\scriptsize{(0.4)} & 79.8\scriptsize{(0.1)} & 80.9\scriptsize{(0.2)} & 81.0\scriptsize{(0.2)} & 80.0\scriptsize{(0.2)} & 80.8\scriptsize{(0.2)} & 81.0\scriptsize{(0.3)}  \\
\textsc{Eva}    & 71.9\scriptsize{(0.6)} & 73.6\scriptsize{(0.3)} & 74.7\scriptsize{(0.3)} & 77.9\scriptsize{(0.3)} & 79.4\scriptsize{(0.3)} & 79.9\scriptsize{(0.2)} & 79.3\scriptsize{(0.3)} & 81.0\scriptsize{(0.2)} & 81.6\scriptsize{(0.3)}  \\
\textsc{LNGD}   & 71.7\scriptsize{(0.2)} & 73.1\scriptsize{(0.3)} & 74.5\scriptsize{(0.1)} & 78.8\scriptsize{(0.1)} & 79.9\scriptsize{(0.1)} & 79.7\scriptsize{(0.3)} & 79.1\scriptsize{(0.2)} & 79.8\scriptsize{(0.2)} & 79.4\scriptsize{(0.2)}  \\
\bottomrule
\end{tabular}
\begin{tablenotes}
\footnotesize
\item *Ranks are calculated on the average test accuracy across all epochs.
\end{tablenotes}
\end{threeparttable}
\end{table*}

\begin{figure*}[tb]
    \centering
    \includegraphics[width=0.98\textwidth]{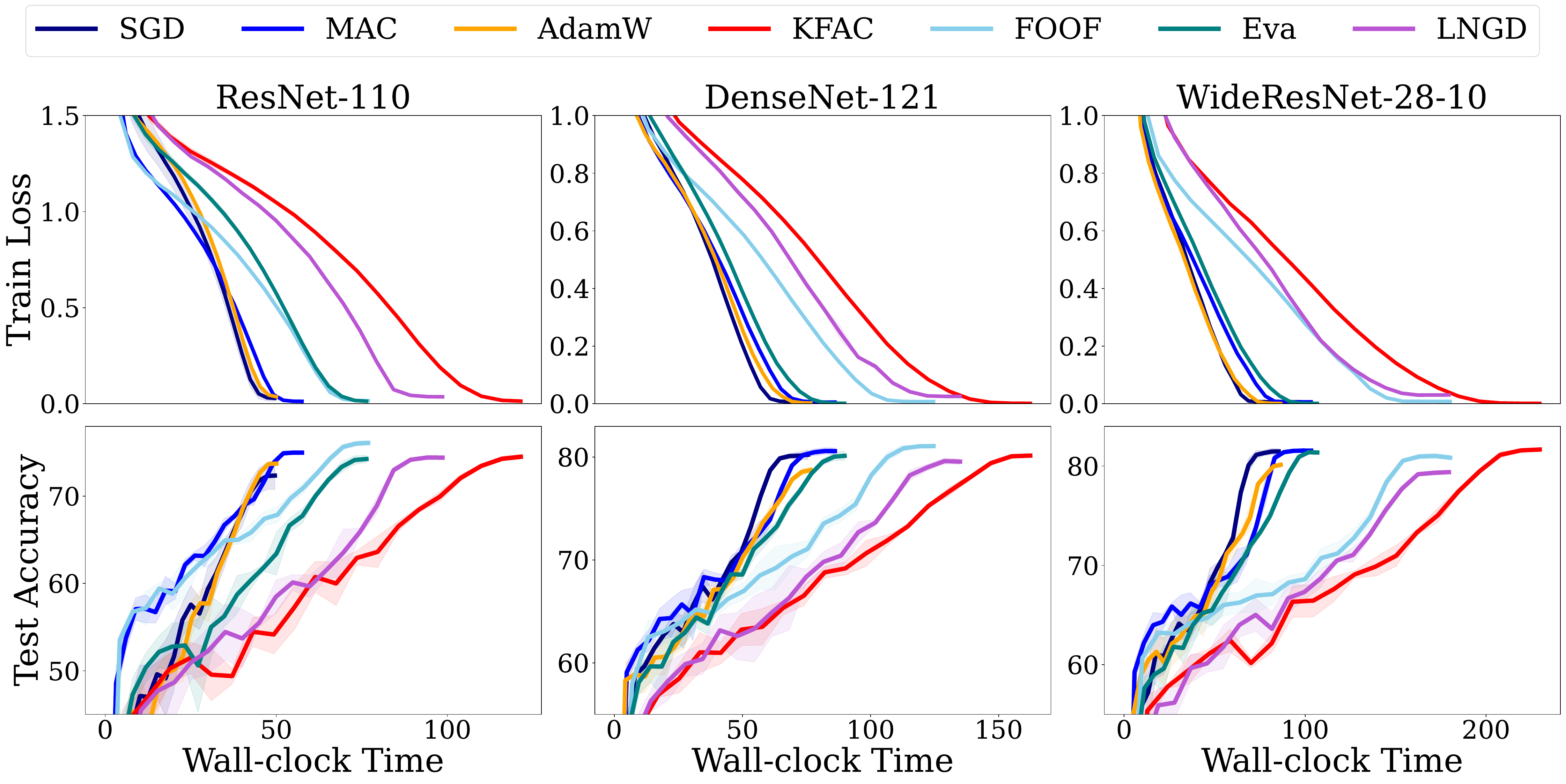}
    \caption{Comparison of train loss and test accuracy over wall-clock time on CIFAR-100 dataset.}
    \label{fig:cifar100}
\end{figure*}

\begin{table*}[tb]
\caption{Top-1 accuracy (\%) of ResNets and ViTs on ImageNet-1k.}
\label{tab:imgnet_resnet}
\centering
\begin{threeparttable}
\small
\begin{tabular}{c|cc|cc|cc|cc|cc}
\toprule
Model & \multicolumn{2}{c|}{ResNet-50} & \multicolumn{2}{c|}{ResNet-101} & \multicolumn{2}{c|}{DeiT-Small} & \multicolumn{2}{c|}{Swin-Tiny} & Avg. & Rank \\
Epoch & 100 & 200 & 100 & 200 & 100 & 200 & 100 & 200 &  & \\
\midrule
\mac      & 78.0 & 79.7 & 79.8 & 81.2 & 73.5 & 77.4 & 77.7 & 80.1 & \textbf{78.4} & \textbf{1} \\
\midrule
\textsc{SGD}    & 78.1 & 79.6 & 79.7 & 81.3 & 69.1 & 75.3 & 76.0 & 78.9 & 77.3 & 3 \\
\textsc{AdamW}  & 76.8 & 79.2 & 77.9 & 80.6 & 73.7 & 77.9 & 77.4 & 80.1 & 78.0 & 2 \\
\midrule
\textsc{KFAC}   & 78.2 & 79.3 & 79.6 & 81.1 & 69.9 & \xmark & \xmark & \xmark & 48.5 & 6 \\
\textsc{FOOF}   & 78.4 & 79.7 & 80.0 & 81.0 & 63.9 & 67.9 & 73.7 & 72.6 & 74.7 & 4 \\
\textsc{Eva}    & 77.7 & 79.4 & 79.6 & 81.1 & 69.7 & 76.6 & \xmark & \xmark & 58.0 & 5 \\
\bottomrule
\end{tabular}
\begin{tablenotes}
\footnotesize
\item *Ranks are calculated on the average top-1 accuracy.
\item **\xmark \ indicates a training failure.
\end{tablenotes}
\end{threeparttable}
\end{table*}

\subsection{Results on CIFAR}
%\textbf{Results on CIFAR.}
As summarized in Tables~\ref{tab:cifar}, \mac demonstrates consistent
improvements in test accuracy and convergence speed across all
evaluated networks, achieving a second-place ranking on
average. Notably, on the CIFAR-100 dataset, \mac attains significantly
higher average test accuracy compared to both first-order and other
second-order methods, closely matching the performance of
FOOF. Although the accuracy gap between \mac and FOOF is minor, our
implementation of \mac is 20.6\% to 43.2\% faster in training time,
highlighting a favorable trade-off between efficiency and
effectiveness. 
Figure~\ref{fig:cifar100} describes the change of training loss and test accuracy over wall-clock time.
% Notably,
% !!! Just to avoid repeatedly using the same word
%
Remarkably,
\mac’s execution time is comparable to that of SGD and AdamW,
yet it achieves higher test accuracy than these first-order
methods.
% In contrast, while Eva runs in a similar time frame as \mac,
% it consistently delivers lower accuracy.
% !!!
%
In contrast, although Eva runs within a similar time frame as \mac, it
consistently yields lower accuracy.
Moreover, FOOF, LNGD, and
KFAC require significantly longer execution times, highlighting the
efficiency advantage of \mac. 

\begin{table*}[tb]
\caption{Comparison of relative wall-clock time and memory usage over SGD (1.00) on CIFAR and ImageNet-1k.}
\label{tab:time_mem}
\centering
\small
% --- CIFAR experiments ---
\begin{tabular}{l|rr|rr|rr|rr|rr}
\toprule
Model (\# params) & \multicolumn{2}{c|}{\mac}  & \multicolumn{2}{c|}{\textsc{KFAC}} & \multicolumn{2}{c|}{\textsc{FOOF}} & \multicolumn{2}{c|}{\textsc{Eva}} & \multicolumn{2}{c}{\textsc{LNGD}} \\
 & Time & Mem & Time & Mem & Time & Mem & Time & Mem & Time & Mem\\
\midrule
ResNet-110 (2M)  & \textbf{1.16} & 1.03 & 2.45 & 1.10 & 1.55 & 1.08 & 1.54 & 1.03 & 1.90 & 4.98\\
DenseNet-121 (8M)  & \textbf{1.14} & 1.00 & 2.18 & 1.04 & 1.64 & 1.04 & 1.19 & 1.00 & 1.79 &  2.26\\
WRN-28-10 (37M)  & \textbf{1.21} & 1.05 & 2.71 & 1.31 & 2.13 & 1.30 & 1.25 & 1.06 & 2.10 &  12.48\\
\midrule
% --- ImageNet experiments ---
ResNet-50 (27M)  & \textbf{1.04} & 1.00 & 1.25 & 1.02 & 1.21 & 1.01 & 1.06 & 1.00 & - & - \\
ResNet-101 (45M)  & \textbf{1.06} & 1.00 & 1.37 & 1.02 & 1.31 & 1.02 & 1.09 & 1.00 & - & - \\
DeiT-Small (22M)  & \textbf{1.03} & 1.00 & 1.26 & 1.04 & 1.25 & 1.04 & 1.14 & 1.00 & - & - \\
Swin-Tiny (28M)  & \textbf{1.02} & 1.00 & 1.12 & 1.03 & 1.13 & 1.02 & 1.06 & 1.00 & - & - \\
\bottomrule
\end{tabular}
\end{table*}

\subsection{Results on ImageNet}
%\textbf{Results on ImageNet. }
The experimental results on ImageNet-1k across ResNets and ViTs, summarized in
Table~\ref{tab:imgnet_resnet}, demonstrate that \mac
performs competitively or surpass other baseline methods across various
models and training durations. Overall, \mac achieves the highest
average accuracy (78.4\%) and is ranked first. While all optimizers
show similar performance on ResNet architectures, the advantages of
\mac become more evident on vision transformers. In particular, on
DeiT-Small and Swin-Tiny, \mac consistently %delivers
% !!!
yields higher accuracy
than SGD and closely matches the performance of AdamW. In contrast,
KFAC, FOOF, and Eva exhibit unstable training on vision transformers,
leading to significantly lower average accuracy. FOOF, for instance,
suffers from unstable inversion of the preconditioner: using a small
damping term results in training failures, whereas a larger damping
term degrades performance. These results highlight \mac’s ability to
combine high accuracy with robust and efficient convergence,
especially in transformer architectures, where conventional
second-order methods often struggle.
% We exclude LNGD from the ImageNet
% experiments due to its prohibitive memory usage.
LNGD was excluded from the ImageNet experiments due to its prohibitive
memory usage.

\subsection{Time and Memory Complexities}
%\textbf{Time and Memory Complexities.}
Table~\ref{tab:time_mem} highlights the computational efficiency gains
of \mac. As shown, \mac achieves the \emph{fastest}
execution time among all KFAC variants, outperforming the
state-of-the-art lightweight KFAC variant, Eva. Specifically, \mac
reduces training time of ResNet-110 by up to 52.7\% compared to KFAC,
25.2\% compared to FOOF, 24.7\% compared to Eva, and 38.9\% compared
to LNGD, while maintaining nearly the same memory usage as SGD.
% On ImageNet-1k, \mac incurs only a 2-6\% increase in time complexity
% relative to SGD, while also preserving nearly the same memory usage.
Particularly, on ImageNet-1k, \mac incurs only a 2-6\% increase in
time complexity, while using almost the same memory as SGD.

\subsection{Ablation Study on Hyperparameters}
\label{apdx:ablation}

\begin{figure*}[tb]
    \begin{subfigure}[t]{0.24\linewidth}
        \centering
        \includegraphics[width=\linewidth]{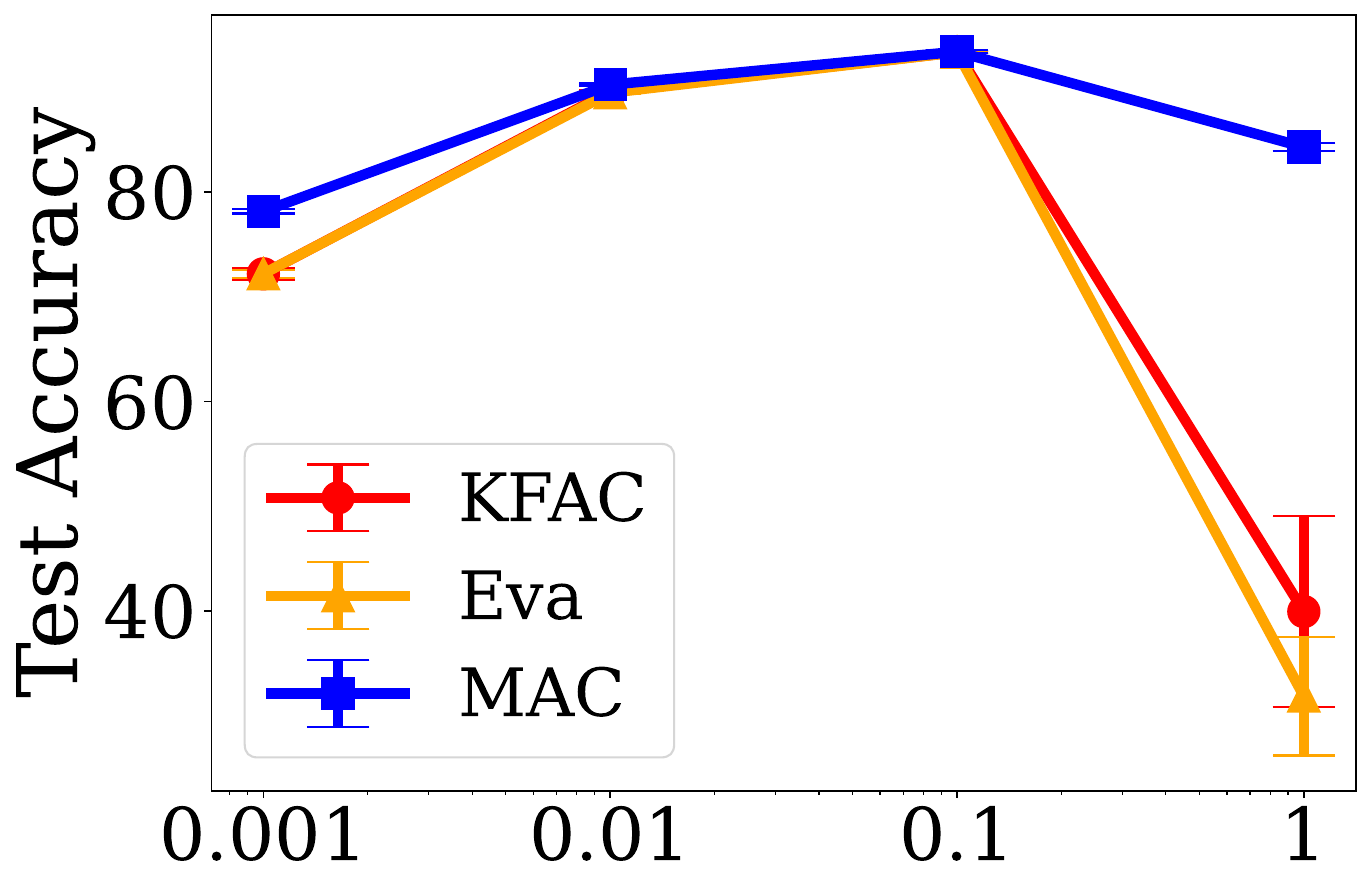}
        \caption{Learning rate}
        \label{fig:hyperparam_lr}
    \end{subfigure}
    \hfill
    \begin{subfigure}[t]{0.24\linewidth}
        \centering
        \includegraphics[width=\linewidth]{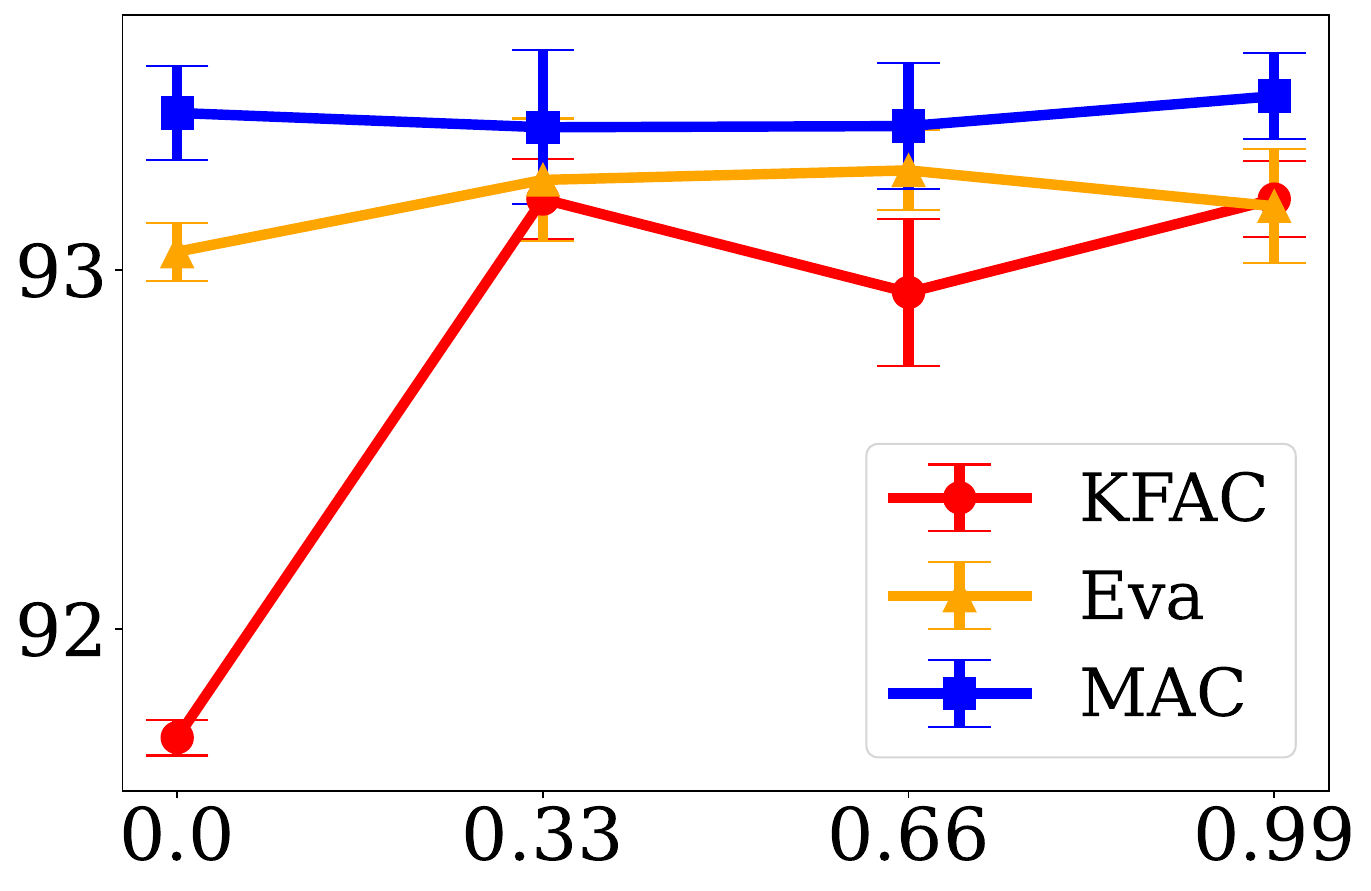}
        \caption{EMA coefficient}
        \label{fig:hyperparam_ema}
    \end{subfigure}
    \hfill
    \begin{subfigure}[t]{0.221\linewidth}
        \centering
        \includegraphics[width=\linewidth]{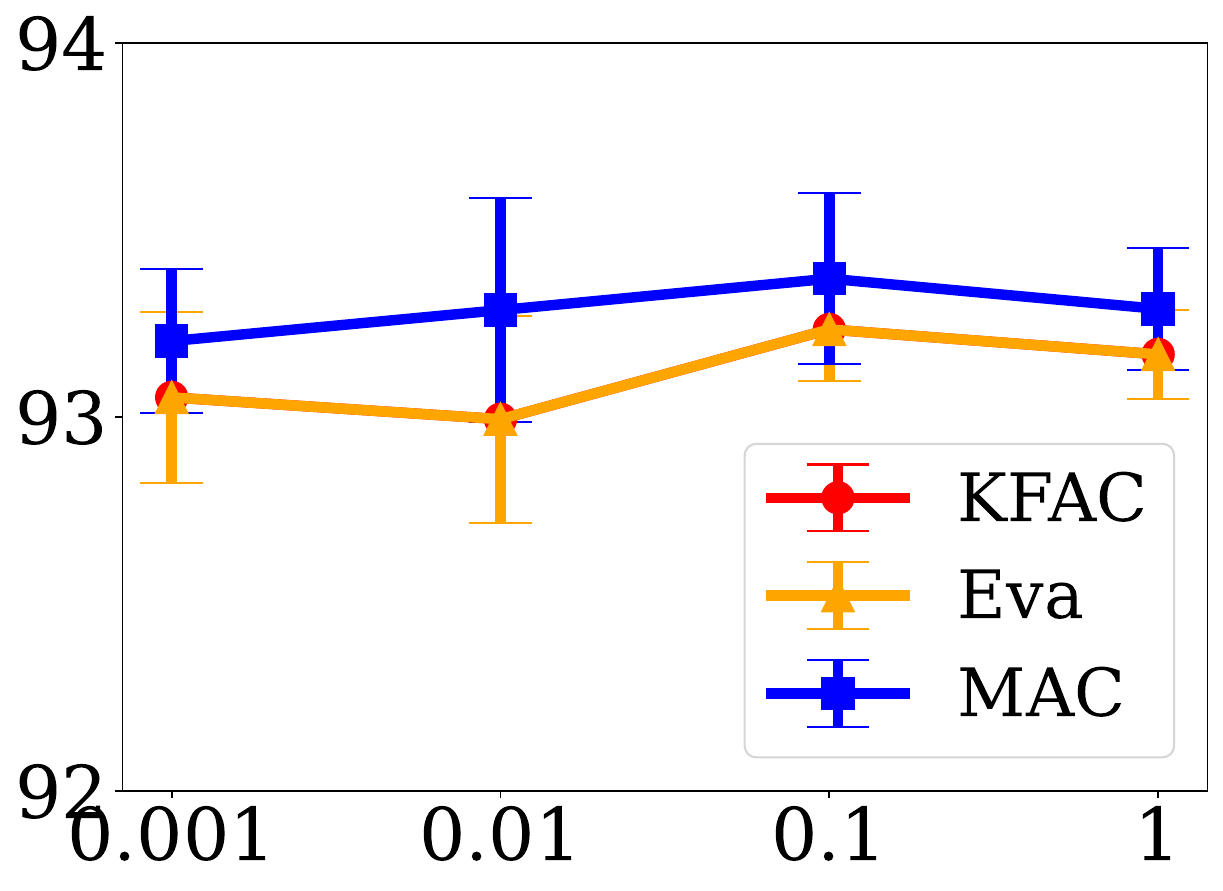}
        \caption{Damping}
        \label{fig:hyperparam_damp}
    \end{subfigure}
    \hfill
    \begin{subfigure}[t]{0.24\linewidth}
        \centering
        \includegraphics[width=\linewidth]{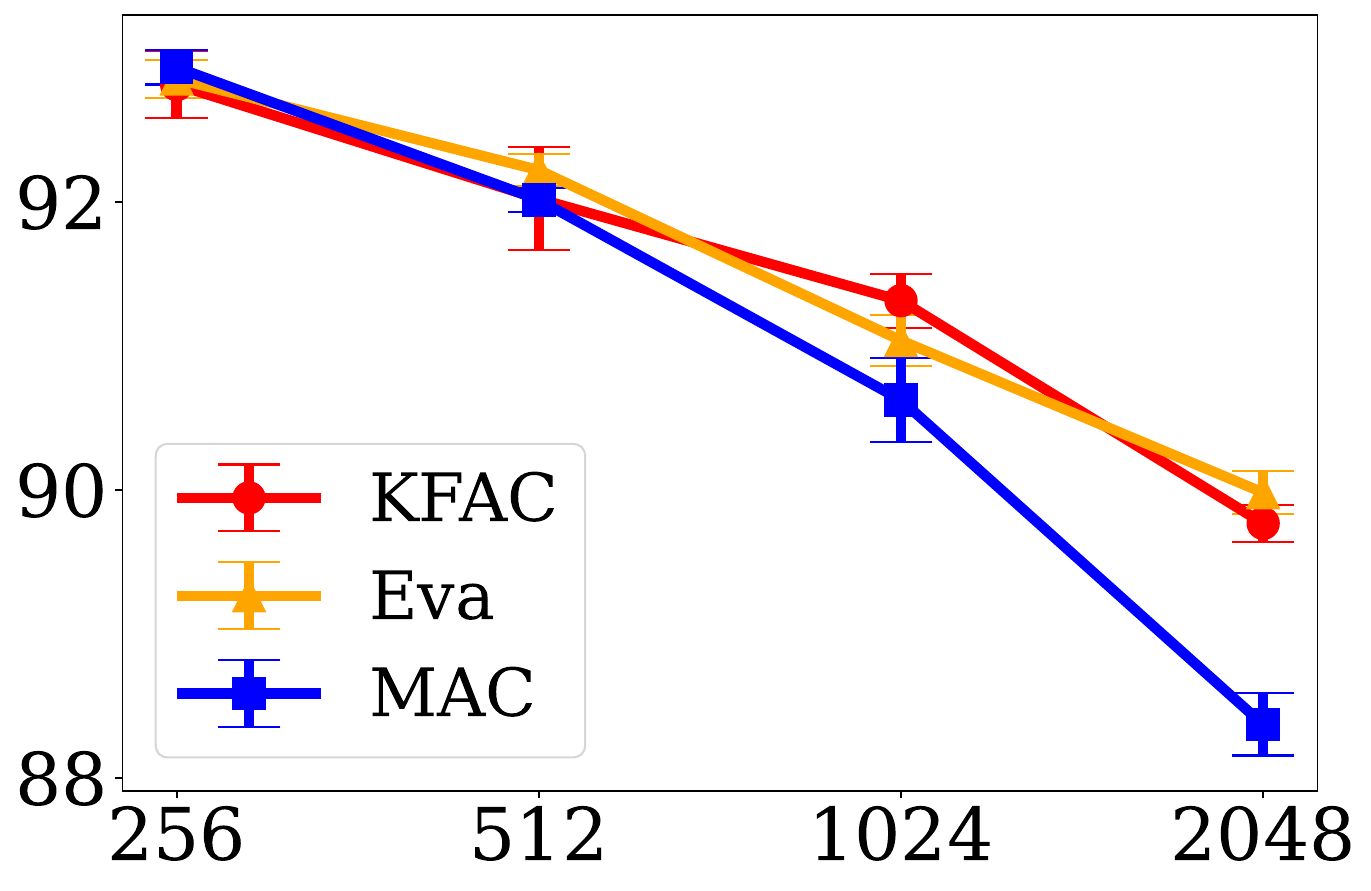}
        \caption{Batch size}
        \label{fig:hyperparam_batch}
    \end{subfigure}
    \caption{Hyperparameter sensitivity analysis of \mac compared to KFAC and Eva on CIFAR-10 using ResNet-32, trained for 100 epochs. Subplots show test accuracy across variations in (a) learning rate, (b) EMA coefficient, (c) damping, and (d) batch size.}
    \label{fig:ablation}
\end{figure*}

\begin{figure*}[tb]
    \begin{subfigure}[t]{0.2725\linewidth}
        \centering
        \includegraphics[width=\linewidth]{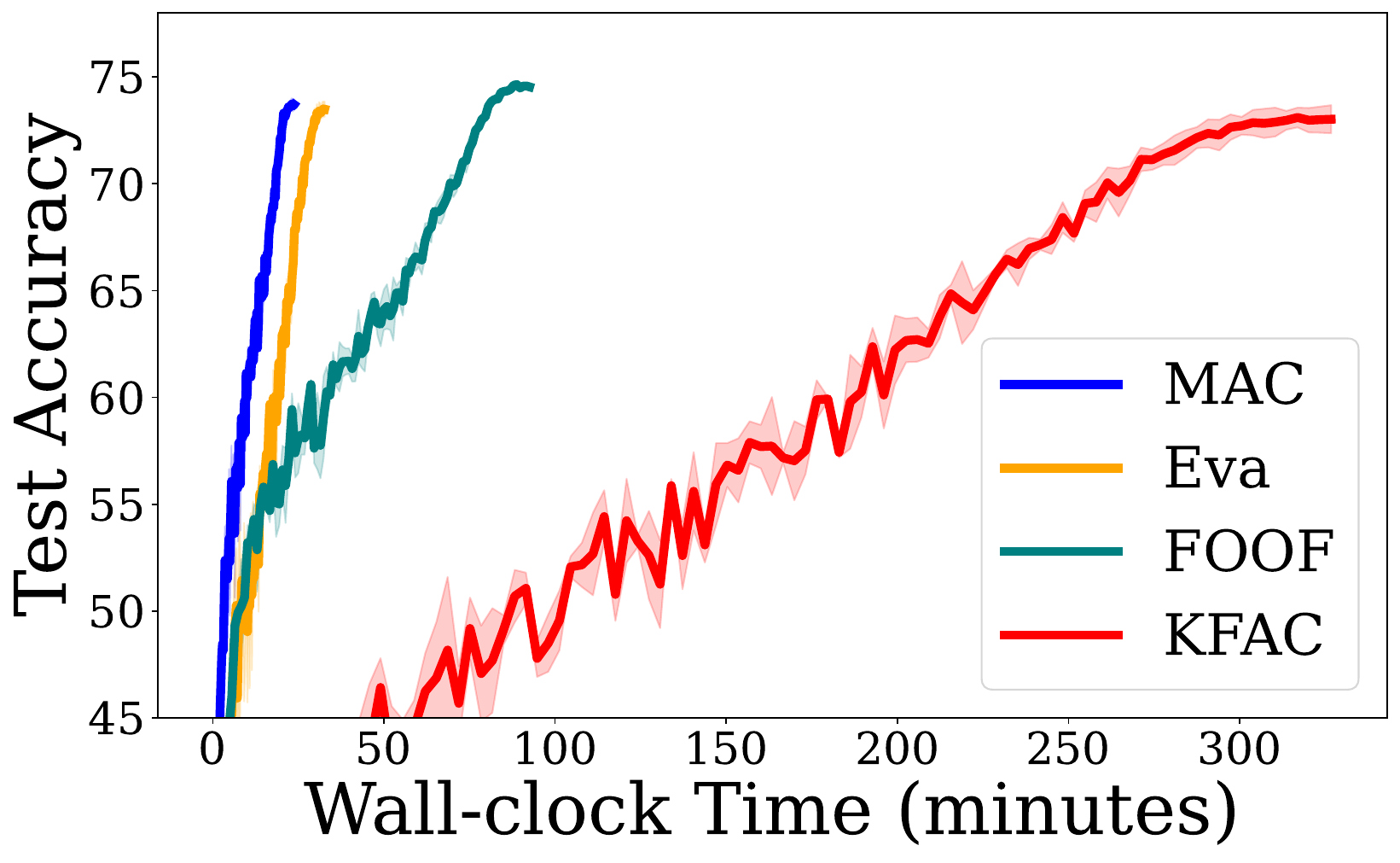}
        \caption{$\tau = 1$}
        \label{fig:inv_freq_1}
    \end{subfigure}
    \hfill
    \begin{subfigure}[t]{0.235\linewidth}
        \centering
        \includegraphics[width=\linewidth]{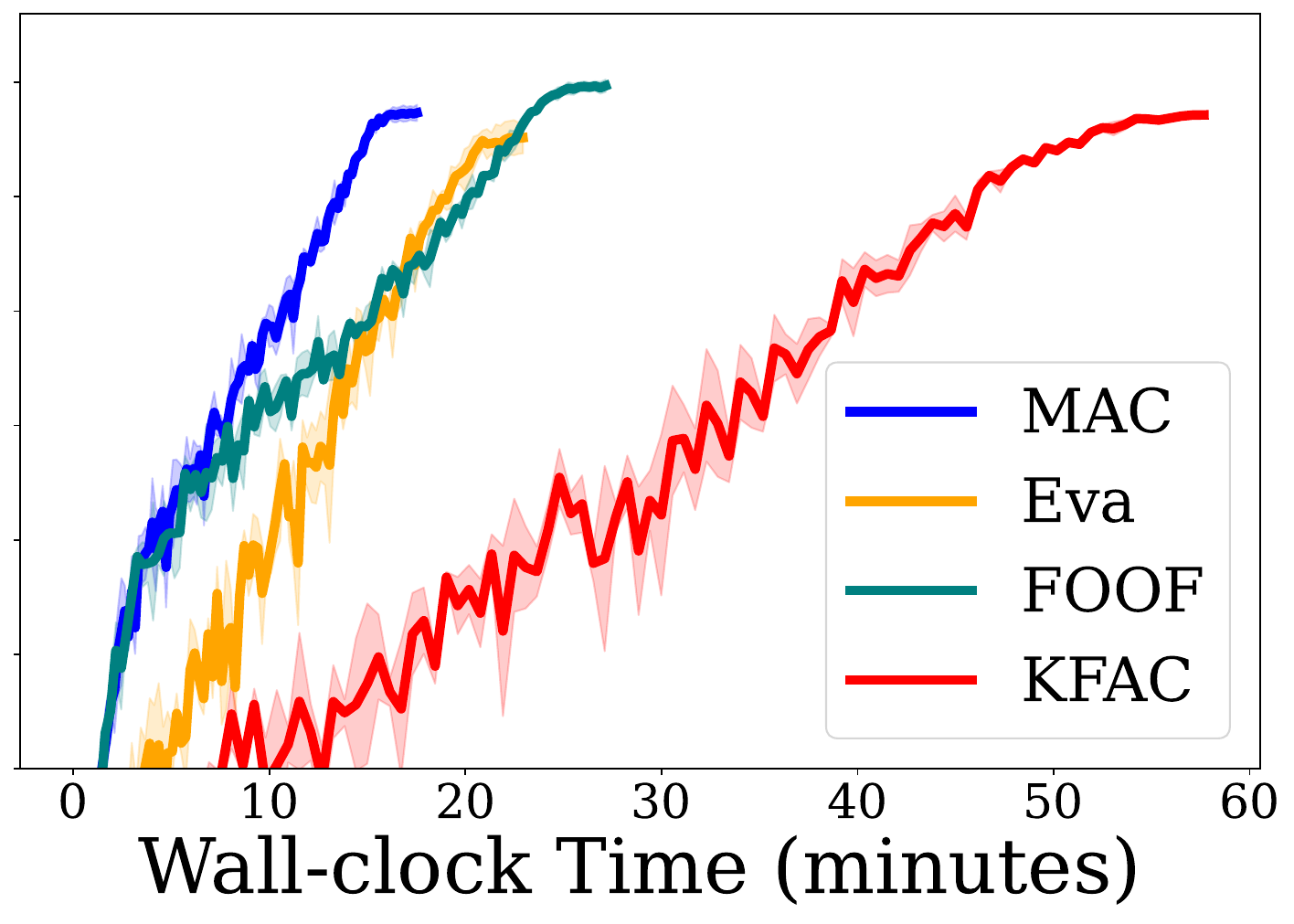}
        \caption{$\tau = 10$}
        \label{fig:inv_freq_10}
    \end{subfigure}
    \hfill
    \begin{subfigure}[t]{0.235\linewidth}
        \centering
        \includegraphics[width=\linewidth]{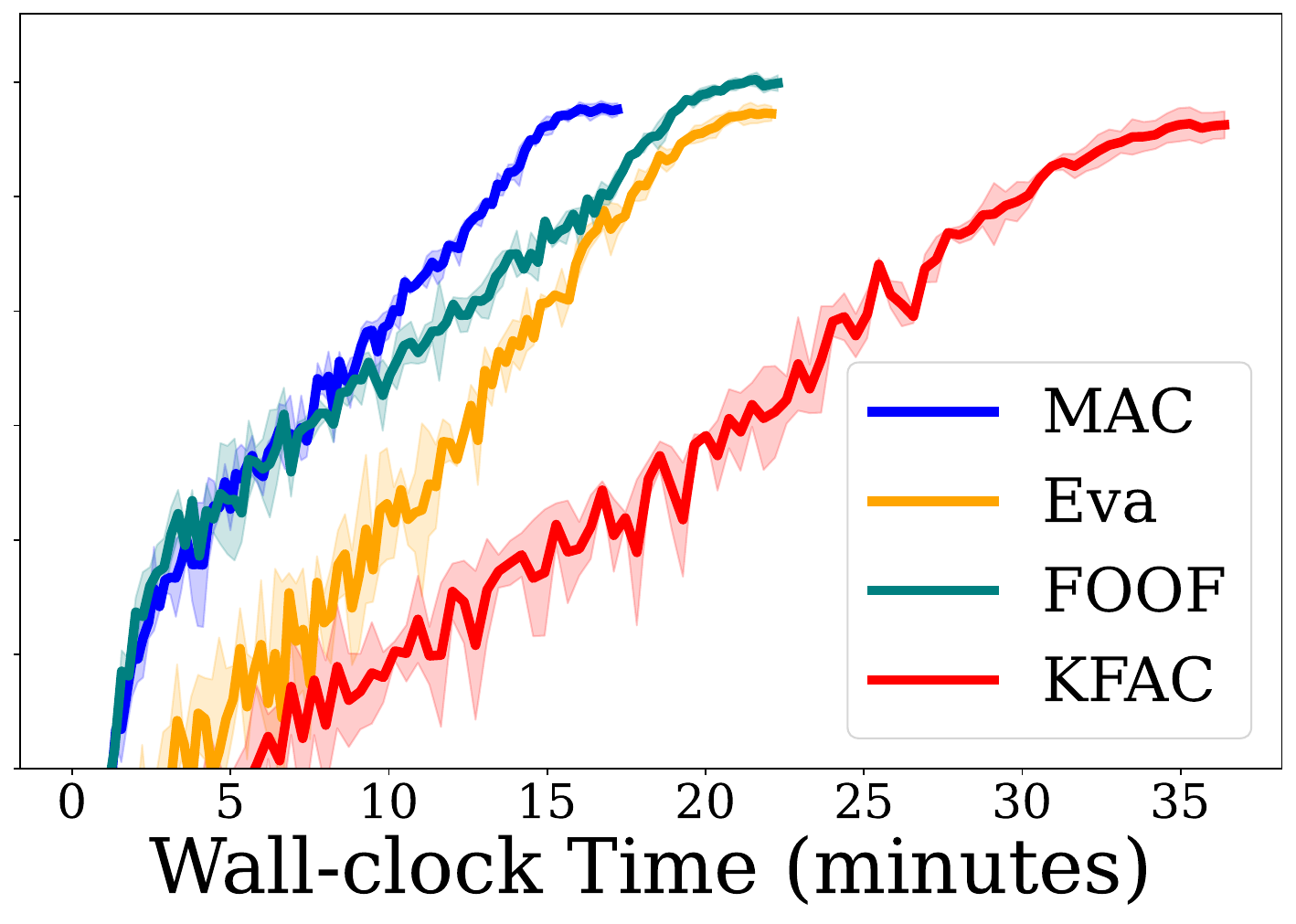}
        \caption{$\tau = 50$}
        \label{fig:inv_freq_50}
    \end{subfigure}
    \hfill
    \begin{subfigure}[t]{0.235\linewidth}
        \centering
        \includegraphics[width=\linewidth]{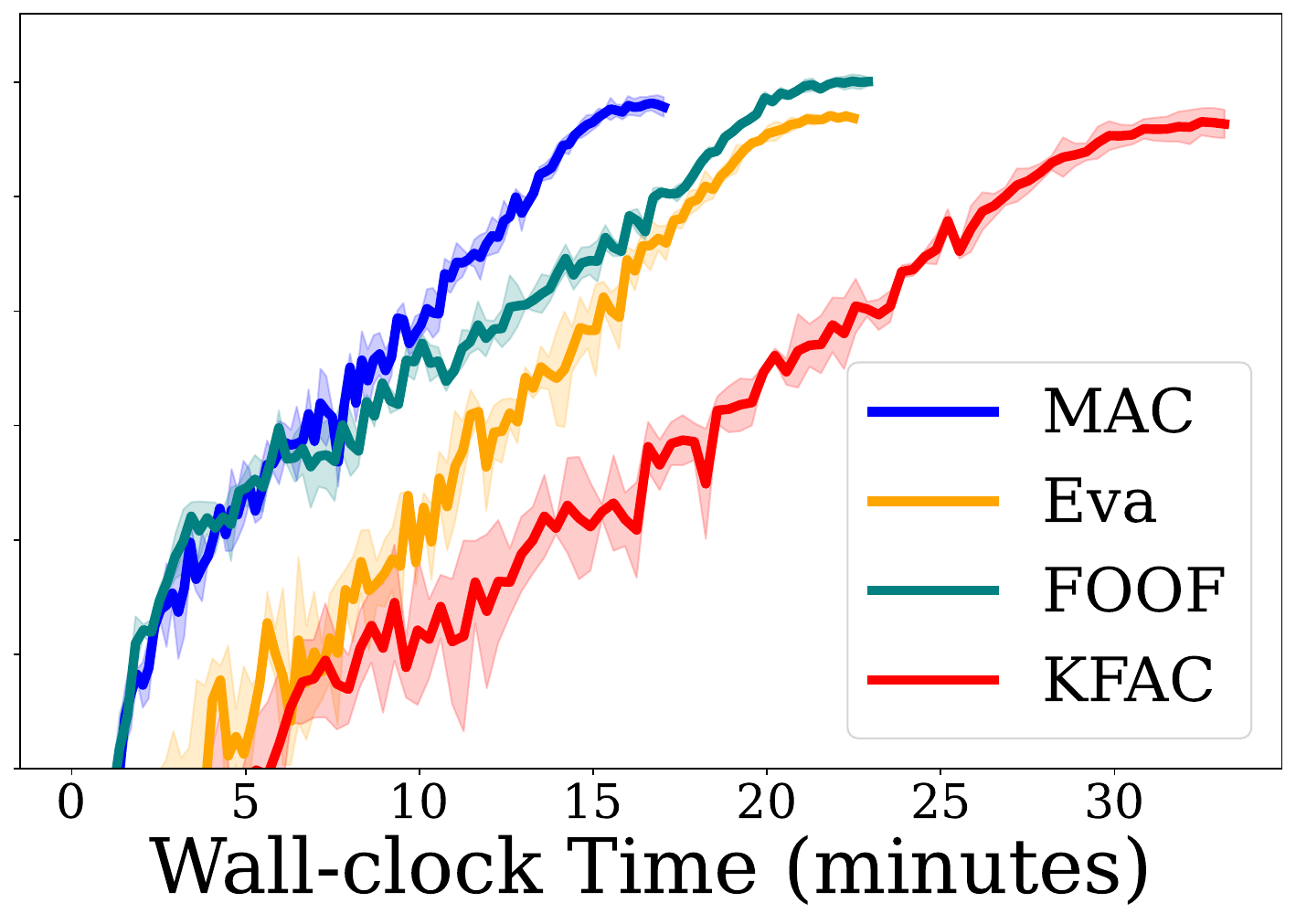}
        \caption{$\tau = 100$}
        \label{fig:inv_freq_100}
    \end{subfigure}
    \caption{Wall-clock time comparison of different inverse update frequencies (1, 10, 50, and 100 steps) during ResNet-110 training on the CIFAR-100 dataset.}
    \label{fig:cifar100_inv_freq}
\end{figure*}

Figure~\ref{fig:ablation} provides a comprehensive hyperparameter sensitivity analysis of \mac, compared against KFAC and Eva using ResNet-32 on the CIFAR-10 dataset over 100 training epochs, averaged across 3 independent runs. The four subplots illustrate how test accuracy is affected by variations in (a) learning rate, (b) EMA coefficient, (c) damping, and (d) mini-batch size.

In subplot (a), \mac achieves peak performance at a learning rate of 0.1 and maintains competitive accuracy across the entire range of values. In contrast, both KFAC and Eva exhibit substantial drops in performance at a learning rate of 1.0, highlighting \mac's robustness. Subplot (b) demonstrates that \mac is largely insensitive to the choice of EMA coefficient, maintaining stable accuracy. This contrasts with KFAC, which shows notable fluctuations, indicating higher sensitivity. Subplot (c) confirms \mac's stability across different damping values, with performance remaining consistent. Lastly, subplot (d) shows that while all methods experience reduced accuracy at larger batch sizes, \mac exhibits a slightly more pronounced decline. Nevertheless, in large-scale settings such as ImageNet training with a batch size of 1,024, \mac still matches or exceeds the performance of KFAC and Eva, despite this observed trend.

As shown in Figure~\ref{fig:cifar100_inv_freq}, we evaluated inverse update frequencies of 1, 10, 50, and 100 steps on the CIFAR-100 dataset using ResNet-110, trained for 100 epochs across 3 independent runs. We found that increasing the update frequency had minimal impact on the test accuracy of \mac, while substantially reducing computational overhead. Among all methods, KFAC consistently showed the slowest training time across all frequencies, followed by FOOF. At update frequencies of 50 and 100, Eva's training speed closely matched that of FOOF, both significantly slower than \mac. Notably, \mac consistently achieved the fastest execution time across all evaluated frequencies.

\section{Conclusions}
\label{sec:conclusion}
In this paper, 
we introduced \mac, an efficient %gradient preconditioning method 
% !!! I think the name is less commonly used
% 
preconditioned gradient method.
%
% designed to reduce the computational complexity in Kronecker
% factorization-based preconditioning while maintaining high model
% performance.
Our empirical analysis on eigenspectra of KFs sheds light on the
structural properties of curvature %information
matrices used in KFAC,
providing insight for the development of a new optimization
method. Extensive empirical evaluations on the image 
classification %and neural machine translation 
tasks support the claim that our method enjoys the high
performance of second-order methods while being as scalable as
first-order methods. Moreover, we derived an accurate formulation of
KFAC’s FIM for self-attention layers in transformers
and effectively integrate attention scores into the precoditioning.
% , revealing
% crucial modifications needed to effectively integrate attention
% mechanisms.
% This contribution not only deepens our understanding of
% curvature in modern architecture but also demonstrates its practical
% impact on improving model performance. 
%
% !!! For now let's comment out this sentence.
% This contribution not only enhances our understanding of curvature in
% modern architectures but also demonstrates its practical impact on
% improving model performance. 

%%% Local Variables:
%%% mode: latex
%%% TeX-master: "../main"
%%% End:

\bibliographystyle{plain}
\bibliography{main}

\newpage
\appendix
\section*{Appendix}
\addcontentsline{toc}{section}{Appendix}

\section{Proof of Proposition~\ref{prop:1}}
\begin{proof}
    Let $\mat{X}$ be an $m \times n$ matrix and $\bar{\vec{x}}\in \R^n$ be the mean vector of $\mat{X}$. Define a perturbation matrix $\mat{E}$ such that
    \begin{equation*}
        \mat{X} = \vec{1}_m \bar{\vec{x}}^\intercal + \mat{E},
    \end{equation*}
    where $\mat{E}$ has rows given by deviations from the mean. We have
    \begin{align*}
        \mat{X}^\intercal \mat{X} &= (\vec{1}_m\bar{\vec{x}}^\intercal + \mat{E})^\intercal(\vec{1}_m\bar{\vec{x}}^\intercal + \mat{E}) \\
        &= \bar{\vec{x}}\vec{1}_m^\intercal \vec{1}_m \bar{\vec{x}}^\intercal + \bar{\vec{x}}\vec{1}_m^\intercal\mat{E} + \mat{E}^\intercal \vec{1}_m \bar{\vec{x}}^\intercal + \mat{E}^\intercal \mat{E} \\
        &= m\bar{\vec{x}}\bar{\vec{x}}^\intercal + \underbracket{\bar{\vec{x}}\vec{1}_m^\intercal\mat{E} + \mat{E}^\intercal \vec{1}_m \bar{\vec{x}}^\intercal + \mat{E}^\intercal \mat{E}}_{\circled{A}}.
    \end{align*}
    The term $m\bar{\vec{x}}\bar{\vec{x}}^\intercal$ has a rank-1 structure, with its top eigenvalue being $m\|\bar{\vec{x}}\|^2$, and the corresponding eigenvector is $\bar{\vec{x}}$. To approximate $\mat{X}^\intercal \mat{X}$ as $m\bar{\vec{x}}\bar{\vec{x}}^\intercal$, the contributions of the term $\circled{A}$ should be small compared to $m\bar{\vec{x}}\bar{\vec{x}}^\intercal$. Defining $\mat{B} = \bar{\vec{x}}\vec{1}_m^\intercal \mat{E}$, we have
    \begin{align*}
        \|\mat{B}\|_F &\leq \|\bar{\vec{x}}\|\|\vec{1}_m\|\|\mat{E}\|_F \\
        &= \sqrt{m}\|\bar{\vec{x}}\|\|\mat{E}\|_F 
    \end{align*}
    and
    \begin{equation*}
        \|\mat{E}^\intercal \mat{E}\|_F \leq \|\mat{E}\|_F^2.
    \end{equation*}
    Combining these bounds, we have
    \begin{align*}
        \|\circled{A}\|_F &= \|\mat{B} + \mat{B}^\intercal + \mat{E}^\intercal \mat{E}\|_F \\
        &\leq 2\sqrt{m}\|\bar{\vec{x}}\|\|\mat{E}\|_F + \|\mat{E}\|_F^2.
    \end{align*}
    For $m\bar{\vec{x}}\bar{\vec{x}}^\intercal$ to dominate, we need 
    \begin{equation*}
        2\sqrt{m}\|\bar{\vec{x}}\|\|\mat{E}\|_F + \|\mat{E}\|_F^2 \ll m\|\bar{\vec{x}}\|^2.
    \end{equation*}
    For some small $\epsilon > 0$, this can be rewritten as
    \begin{equation*}
        2\sqrt{m}\|\bar{\vec{x}}\|\|\mat{E}\|_F + \|\mat{E}\|_F^2 \leq \epsilon m\|\bar{\vec{x}}\|^2.
    \end{equation*}
     Dividing both sides by $m\norm{\bar{\vec{x}}}^2$, we have
    \begin{equation*}
        \frac{2\|\mat{E}\|_F}{\sqrt{m}\|\bar{\vec{x}}\|} + \frac{\|\mat{E}\|_F^2}{m\|\bar{\vec{x}}\|^2} \leq \epsilon.
    \end{equation*}
    Letting $c = \frac{\|\mat{E}\|_F}{\sqrt{m}\|\bar{\vec{x}}\|} > 0$, we have the following inequality:
    \begin{equation*}
        c \leq \sqrt{1+\epsilon} -1.
    \end{equation*}
    This indicates that for a small $\epsilon > 0$, a sufficient condition for $\mat{X}^\intercal \mat{X} \approx m\bar{\vec{x}}\bar{\vec{x}}^\intercal$ to hold is
    \begin{equation*}
        \|\mat{E}\|_F \leq \sqrt{m}\norm{\bar{\vec{x}}}(\sqrt{1+\epsilon} -1).
    \end{equation*}
\end{proof}

\section{Proof of Theorem~\ref{thm:converge}}
\label{apdx:converge_proof}

\subsection{Notations}
We denote row-wise and column-wise Khatri-Rao products by $\ast$ and $\star$, respectively. The largest and smallest eigenvalues of a square matrix $\mat{X}$ are represented by $\lambda_{\max}(\mat{X})$ and $\lambda_{\min}(\mat{X})$, while the largest and smallest singular values are denoted as $\sigma_{\max}(\mat{X})$ and $\sigma_{\min}(\mat{X})$. %For a positive definite matrix $\mat{X}$, its condition number, $\lambda_{\max}(\mat{X})/\lambda_{\min}(\mat{X})$, is expressed as $\kappa(\mat{X})$. For simplicity, we omit the notation, $(\ell)$, from related terms, which represents that the terms are for the $\ell$-th layer. Let $\mat{A}$ denote the activation matrix and $\mat{P}$ denote the pre-activation gradient matrix of a layer $\ell\in [L]$, respectively. 

\subsection{Required Lemmas}
First, we present the necessary lemmas to demonstrate the convergence of \mac.

\begin{lemma}[\cite{SchurBemerkungenZT}] \label{lem:1}
    For two positive definite matrices $\mat{A}$ and $\mat{B}$, we have
    \begin{align*}
        \lambda_{\max}(\mat{A} \odot \mat{B}) &\leq \max_{i}\mat{A}_{ii}\lambda_{\max}(\mat{B}) \\
        \lambda_{\min}(\mat{A} \odot \mat{B}) &\geq \min_{i}\mat{A}_{ii}\lambda_{\min}(\mat{B})
    \end{align*}
\end{lemma}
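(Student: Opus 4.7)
The plan is to establish both bounds through a spectral decomposition of $\mat{A}$ combined with the Rayleigh quotient characterization applied to $\mat{B}$. First, I would write $\mat{A} = \sum_{k=1}^{n} \lambda_k(\mat{A}) \vec{v}_k \vec{v}_k^{\intercal}$ using the spectral theorem (valid since $\mat{A}$ is positive definite, hence symmetric). The crucial algebraic identity I would then invoke is that for any vector $\vec{v}$ and matrix $\mat{B}$, one has $(\vec{v}\vec{v}^{\intercal}) \odot \mat{B} = \mat{D}_{\vec{v}} \mat{B} \mat{D}_{\vec{v}}$, where $\mat{D}_{\vec{v}} = \diag(\vec{v})$; this follows by inspecting entries since $(\mat{D}_{\vec{v}} \mat{B} \mat{D}_{\vec{v}})_{ij} = v_i B_{ij} v_j = (\vec{v}\vec{v}^{\intercal})_{ij} B_{ij}$. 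Substituting yields the clean representation $\mat{A} \odot \mat{B} = \sum_k \lambda_k(\mat{A}) \mat{D}_k \mat{B} \mat{D}_k$, where $\mat{D}_k = \diag(\vec{v}_k)$.

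Next, for any unit vector $\vec{x}$, I would compute the quadratic form
\[
\vec{x}^{\intercal}(\mat{A} \odot \mat{B})\vec{x} = \sum_{k} \lambda_k(\mat{A}) (\mat{D}_k \vec{x})^{\intercal} \mat{B} (\mat{D}_k \vec{x}),
\]
and apply the Rayleigh bound $\lambda_{\min}(\mat{B}) \|\mat{D}_k\vec{x}\|^2 \leq (\mat{D}_k \vec{x})^{\intercal} \mat{B} (\mat{D}_k \vec{x}) \leq \lambda_{\max}(\mat{B}) \|\mat{D}_k\vec{x}\|^2$ term by term. The key simplification (and the step most deserving of care) is the identity
\[
\sum_k \lambda_k(\mat{A}) \|\mat{D}_k\vec{x}\|^2 = \sum_k \lambda_k(\mat{A}) \sum_i (v_k)_i^2 x_i^2 = \sum_i x_i^2 \Big(\sum_k \lambda_k(\mat{A})(v_k)_i^2\Big) = \sum_i x_i^2 \mat{A}_{ii},
\]
where the inner sum recovers the diagonal entry $\mat{A}_{ii}$ via the spectral decomposition.

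Combining these observations gives $\vec{x}^{\intercal}(\mat{A}\odot \mat{B})\vec{x} \leq \lambda_{\max}(\mat{B}) \sum_i x_i^2 \mat{A}_{ii} \leq \lambda_{\max}(\mat{B}) \max_i \mat{A}_{ii}$, and analogously the lower bound $\vec{x}^{\intercal}(\mat{A}\odot \mat{B})\vec{x} \geq \lambda_{\min}(\mat{B}) \min_i \mat{A}_{ii}$ (where nonnegativity of $\lambda_{\min}(\mat{B})$, guaranteed by positive definiteness of $\mat{B}$, is needed to pull the minimum out of the sum). Taking $\vec{x}$ to be the top (respectively bottom) eigenvector of $\mat{A} \odot \mat{B}$ yields the two claimed inequalities by the Rayleigh characterization of extreme eigenvalues.

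The only genuinely nontrivial step is recognizing the identity $(\vec{v}\vec{v}^{\intercal})\odot \mat{B} = \mat{D}_{\vec{v}} \mat{B} \mat{D}_{\vec{v}}$ and the subsequent bookkeeping that collapses $\sum_k \lambda_k(\mat{A})(v_k)_i^2$ to $\mat{A}_{ii}$; everything else is a direct application of Rayleigh quotients. Positive definiteness is used in two places: to guarantee a real spectral decomposition of $\mat{A}$, and to ensure $\lambda_{\min}(\mat{B}) \geq 0$ so that the per-term lower bound survives summation without sign issues.
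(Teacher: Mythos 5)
Your proof is correct. The paper itself offers no proof of this lemma --- it is imported verbatim from Schur's 1911 paper via the citation --- so there is nothing to compare against except the classical argument, which is exactly what you have reconstructed: the identity $(\vec{v}\vec{v}^{\intercal})\odot \mat{B} = \mat{D}_{\vec{v}}\mat{B}\mat{D}_{\vec{v}}$, the expansion $\mat{A}\odot\mat{B} = \sum_k \lambda_k(\mat{A})\,\mat{D}_k\mat{B}\mat{D}_k$, and the collapse $\sum_k \lambda_k(\mat{A})(v_k)_i^2 = \mat{A}_{ii}$ are the standard route to Schur's Hadamard-product eigenvalue bounds. One small bookkeeping point: in addition to $\lambda_{\min}(\mat{B})\ge 0$, you also silently use $\lambda_k(\mat{A})\ge 0$ when you multiply each term's Rayleigh bound by $\lambda_k(\mat{A})$ and sum --- that is the other place positive definiteness (of $\mat{A}$ this time, or just positive semidefiniteness) is genuinely needed, and it is worth stating explicitly since without it the term-by-term inequalities could flip.
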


\begin{lemma}[Appendix D.3 in \cite{zhang2019fast}]
    $\mat{\Sigma}^{\infty}$ is strictly positive definite with minimum eigenvalue $\lambda_{\mat{\Gamma}}$, i.e., $\lambda_{\min}(\mat{\Gamma}^\infty(\mat{\Gamma}^\infty)^\intercal) = \lambda_{\mat{\Gamma}}$.
\end{lemma}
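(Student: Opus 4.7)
The plan is to verify strict positive definiteness of $\mat{\Sigma}^{\infty}$ through a direct argument that exploits the ``no two inputs are parallel'' part of Assumption~\ref{as:1}, and then simply define $\lambda_{\mat{\Gamma}} := \lambda_{\min}(\mat{\Sigma}^{\infty})$, which will then automatically be strictly positive. The starting observation is that for the vector-valued feature map $\vec{\phi}_i(\vec{w}) := \vec{x}_i \mathbb{I}(\vec{w}^\intercal \vec{x}_i \geq 0) \in \R^d$ we have
\[
\mat{\Sigma}^{\infty}_{ij} \;=\; \vec{x}_i^\intercal \vec{x}_j\, \E_{\vec{w}}\!\left[\mathbb{I}(\vec{w}^\intercal \vec{x}_i \geq 0)\mathbb{I}(\vec{w}^\intercal \vec{x}_j \geq 0)\right] \;=\; \E_{\vec{w}}[\vec{\phi}_i(\vec{w})^\intercal \vec{\phi}_j(\vec{w})],
\]
so that for any $\vec{v} \in \R^n$,
\[
\vec{v}^\intercal \mat{\Sigma}^{\infty} \vec{v} \;=\; \E_{\vec{w}}\!\left\|\textstyle\sum_{i=1}^n v_i \vec{x}_i \mathbb{I}(\vec{w}^\intercal \vec{x}_i \geq 0)\right\|^2 \;\geq\; 0.
\]
This immediately shows $\mat{\Sigma}^{\infty}$ is positive semidefinite; it remains to rule out $\vec{v} \neq \vec{0}$ with $\vec{v}^\intercal \mat{\Sigma}^{\infty} \vec{v} = 0$.

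The core of the argument is the standard ``hyperplane jump'' reasoning. Suppose $\vec{v}^\intercal \mat{\Sigma}^{\infty} \vec{v} = 0$; then the vector-valued function $f(\vec{w}) := \sum_i v_i \vec{x}_i \mathbb{I}(\vec{w}^\intercal \vec{x}_i \geq 0)$ vanishes on a full-Gaussian-measure set $E \subseteq \R^d$. Fix an arbitrary index $k$, let $H_k := \{\vec{w} : \vec{w}^\intercal \vec{x}_k = 0\}$, and pick a base point $\vec{w}_0 \in H_k$ with $\vec{w}_0^\intercal \vec{x}_j \neq 0$ for every $j \neq k$; such $\vec{w}_0$ exists because Assumption~\ref{as:1} forces $H_j \neq H_k$ for $j \neq k$, so $\bigcup_{j \neq k} (H_j \cap H_k)$ is a finite union of $(d-2)$-dimensional subspaces of $H_k$. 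By continuity, there is a neighborhood $U \subset H_k$ of $\vec{w}_0$ and some $\delta > 0$ such that for every $\vec{u} \in U$ and $|\epsilon| < \delta$, the signs $\mathbb{I}((\vec{u} + \epsilon \vec{x}_k)^\intercal \vec{x}_j \geq 0)$ are constant in $\epsilon$ for each $j \neq k$, while $\mathbb{I}((\vec{u} + \epsilon \vec{x}_k)^\intercal \vec{x}_k \geq 0) = \mathbb{I}(\epsilon \geq 0)$ jumps at $\epsilon = 0$.

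On the open ``tube'' $N := \{\vec{u} + \epsilon \vec{x}_k : \vec{u} \in U,\, |\epsilon| < \delta\}$, which has positive Lebesgue (hence positive Gaussian) measure, the function $f$ therefore takes the form $f(\vec{u} + \epsilon \vec{x}_k) = g(\vec{u}) + v_k \vec{x}_k \mathbb{I}(\epsilon \geq 0)$ for a fixed $\vec{u}$-dependent vector $g(\vec{u})$. Since $f \equiv 0$ on $E \cap N$, and $E \cap N$ has positive measure in both $\{\epsilon > 0\}$ and $\{\epsilon < 0\}$ by Fubini applied to the diffeomorphic parametrization $(\vec{u}, \epsilon) \mapsto \vec{u} + \epsilon \vec{x}_k$, we may pick $(\vec{u}, \epsilon_+), (\vec{u}, \epsilon_-) \in E \cap N$ with $\epsilon_+ > 0 > \epsilon_-$ sharing the same $\vec{u}$ (or more carefully, use a one-dimensional slice argument). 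Subtracting the two vanishing values of $f$ yields $v_k \vec{x}_k = \vec{0}$, and since $\|\vec{x}_k\| = 1$ we conclude $v_k = 0$. As $k$ was arbitrary, $\vec{v} = \vec{0}$, contradicting our assumption.

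Defining $\lambda_{\mat{\Gamma}} := \lambda_{\min}(\mat{\Sigma}^{\infty})$ then gives $\lambda_{\mat{\Gamma}} > 0$, completing the proof. The only delicate step is the Fubini-style argument in the last paragraph that extracts two points $\vec{w}_\pm \in E \cap N$ lying on a common one-dimensional fiber $\{\vec{u}_0 + \epsilon \vec{x}_k : \epsilon \in (-\delta, \delta)\}$; this is handled by noting that the pushforward of the restricted Gaussian to $U$ under the parametrization has a density with respect to $d\vec{u}\, d\epsilon$, so the ``bad'' set of $\vec{u}$ for which the fiber meets $E^c$ in positive measure on either side of $\epsilon = 0$ must itself be null, leaving a generic $\vec{u}_0$ that works. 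Everything else reduces to elementary ReLU-arrangement geometry plus the non-parallel-inputs hypothesis.
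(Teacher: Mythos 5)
Your argument is correct, but it is worth noting that the paper does not actually prove this lemma at all: it is imported wholesale as a black-box citation to Appendix~D.3 of \cite{zhang2019fast}, which in turn rests on Theorem~3.1 of \cite{du2018gradient}. What you have written is essentially a self-contained reconstruction of that underlying Du--Zhai--Poczos--Singh argument: positive semidefiniteness via the Gram-matrix representation $\mat{\Sigma}^{\infty}_{ij}=\E_{\vec{w}}[\vec{\phi}_i(\vec{w})^\intercal\vec{\phi}_j(\vec{w})]$ with $\vec{\phi}_i(\vec{w})=\vec{x}_i\,\mathbb{I}(\vec{w}^\intercal\vec{x}_i\geq 0)$, followed by linear independence of the $\vec{\phi}_i$ in $L^2(\mathcal{N}(\vec{0},\mat{I}))$ via the discontinuity of $\vec{\phi}_k$ across the hyperplane $H_k$ at a point avoiding the $(d-2)$-dimensional intersections $H_j\cap H_k$ (which exist as proper subsets precisely because $\vec{x}_i\nparallel\vec{x}_j$). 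The Fubini slice extraction of $\epsilon_+>0>\epsilon_-$ on a common fiber is handled adequately, and the conclusion $v_k\vec{x}_k=\vec{0}\Rightarrow v_k=0$ is sound since $\norm{\vec{x}_k}=1$; the final clause of the lemma is then just the definition of $\lambda_{\mat{\Gamma}}$. Two small remarks: your proof implicitly assumes $d\geq 2$ (for $d=1$ the non-parallel assumption forces $n=1$ and the claim is trivial), and you prove positive definiteness of the entrywise-defined matrix $\mat{\Sigma}^{\infty}_{ij}=\vec{x}_i^\intercal\vec{x}_j\,\E[\mathbb{I}\cdot\mathbb{I}]$, which is the NTK Gram matrix actually used downstream, whereas the paper's identification $\mat{\Sigma}^{\infty}=\mat{\Gamma}^{\infty}(\mat{\Gamma}^{\infty})^\intercal$ drops the factor $\vec{x}_i^\intercal\vec{x}_j$ and is therefore not literally consistent with that entrywise definition---a defect of the paper's statement, not of your proof. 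What your approach buys over the paper's is a genuinely self-contained justification; what it costs is nothing, since the citation chain ends at exactly this argument anyway.
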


\begin{lemma}
    If $m=\Omega\left(\frac{n^2}{\lambda_{\mat{\Gamma}}}\log\left(\frac{2n^2}{\delta}\right)\right)$, we have with probability at least $1-\delta$
    \begin{equation*}
        \|\mat{\Sigma}(0)-\mat{\Sigma}^{\infty}\|_2 \leq \frac{\lambda_{\mat{\Gamma}}}{2} \quad \text{and} \quad \lambda_{\min}(\mat{\Sigma}(0)) \geq \frac{\lambda_{\mat{\Gamma}}}{2}.
    \end{equation*}
\end{lemma}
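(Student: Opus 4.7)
The plan is to establish the lemma via a standard entrywise concentration argument: recognize $\mat{\Sigma}(0)$ as a finite-sample Monte Carlo estimator of $\mat{\Sigma}^{\infty}$, apply Hoeffding's inequality to each of its $n^2$ entries, convert the resulting entrywise deviation into an operator-norm bound, and finally invoke Weyl's inequality to obtain the minimum-eigenvalue claim.

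Concretely, from the preceding definitions, for each pair $(i,j)$ the initialization Gram matrix entry takes the form
\begin{equation*}
\mat{\Sigma}(0)_{ij} = \frac{1}{m}\sum_{r=1}^{m} \vec{x}_i^{\intercal}\vec{x}_j\,\mathbb{I}\bigl(\vec{w}_r^{\intercal}\vec{x}_i \ge 0,\; \vec{w}_r^{\intercal}\vec{x}_j \ge 0\bigr),
\end{equation*}
where $\vec{w}_1,\dots,\vec{w}_m \stackrel{\text{i.i.d.}}{\sim}\mathcal{N}(\vec{0},\mat{I}_d)$. Each summand is bounded in $[-1,1]$ by Assumption~\ref{as:1} (since $\|\vec{x}_i\|=1$ implies $|\vec{x}_i^{\intercal}\vec{x}_j|\le 1$), and its expectation is exactly $\mat{\Sigma}^{\infty}_{ij}$. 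Applying Hoeffding's inequality to this average of independent bounded variables yields, for any $t>0$,
\begin{equation*}
\Pr\!\bigl[\,|\mat{\Sigma}(0)_{ij} - \mat{\Sigma}^{\infty}_{ij}|>t\,\bigr] \le 2\exp\!\bigl(-m t^2/2\bigr).
\end{equation*}

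I would then union bound over all $n^2$ index pairs, so that with probability at least $1-\delta$ every entrywise deviation is at most $t=\sqrt{2\log(2n^2/\delta)/m}$. Setting $\mat{\Delta} = \mat{\Sigma}(0)-\mat{\Sigma}^{\infty}$, which is symmetric, the operator norm is controlled by the Frobenius norm and hence by the entrywise maximum: $\|\mat{\Delta}\|_2 \le \|\mat{\Delta}\|_F \le n\,\max_{ij}|\mat{\Delta}_{ij}| \le nt$. Forcing $nt \le \lambda_{\mat{\Gamma}}/2$ and solving for $m$ produces the stated sample-size threshold (up to absolute constants, matching the $\Omega(n^2 \log(2n^2/\delta)/\lambda_{\mat{\Gamma}})$ order in the high-dimensional regime where $\lambda_{\mat{\Gamma}}$ is of constant order), establishing the first inequality $\|\mat{\Sigma}(0)-\mat{\Sigma}^{\infty}\|_2 \le \lambda_{\mat{\Gamma}}/2$.

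The minimum-eigenvalue claim then follows immediately from Weyl's perturbation theorem for symmetric matrices: since the previous lemma gives $\lambda_{\min}(\mat{\Sigma}^{\infty}) = \lambda_{\mat{\Gamma}}$,
\begin{equation*}
\lambda_{\min}\!\bigl(\mat{\Sigma}(0)\bigr) \ge \lambda_{\min}(\mat{\Sigma}^{\infty}) - \|\mat{\Sigma}(0)-\mat{\Sigma}^{\infty}\|_2 \ge \lambda_{\mat{\Gamma}} - \frac{\lambda_{\mat{\Gamma}}}{2} = \frac{\lambda_{\mat{\Gamma}}}{2}.
\end{equation*}
The only step requiring care is the passage from entrywise concentration to spectral concentration: the bound $\|\mat{\Delta}\|_2 \le n\max_{ij}|\mat{\Delta}_{ij}|$ loses a factor of $n$, which is precisely what inflates the sample complexity to $n^2$. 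A tighter matrix Bernstein argument on the sum of rank-one random matrices $\vec{x}_i^{\intercal}\vec{x}_j \mathbb{I}(\cdot)$ would improve constants but is unnecessary for the asymptotic rate claimed; the Hoeffding-plus-union-bound route gives the cleanest derivation and is the standard approach used throughout the NTK literature (e.g.\ \cite{du2018gradient,zhang2019fast}), on which the rest of the convergence analysis of \mac already relies.
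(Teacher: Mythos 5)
Your proof follows essentially the same route as the paper's: entrywise Hoeffding concentration, a union bound over the $n^2$ index pairs, the bound $\|\cdot\|_2 \le \|\cdot\|_F$ to pass to the operator norm, and Weyl's inequality for the minimum eigenvalue. Your version is in fact slightly more careful—the paper bounds $\|\mat{\Sigma}(0)-\mat{\Sigma}^{\infty}\|_F$ by the \emph{sum of squared} entrywise deviations rather than its square root, which is how it arrives at a width threshold scaling as $1/\lambda_{\mat{\Gamma}}$ instead of the $1/\lambda_{\mat{\Gamma}}^{2}$ that your (correct) derivation yields, a discrepancy you rightly flag.
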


\begin{proof}
    We follow the same strategy as in Lemma 3.1 of \cite{du2018gradient}. Note that $\mat{\Sigma}_{ij}^{\infty}$ is an expectation of $\mat{\Sigma}(0)$ and hence by Hoeffding’s inequality we have with probability at least $1-\delta$
    \begin{align*}
        \mathbb{P}\left[\exists(i,j), \left| \mat{\Gamma}_{ij}(0) - \mat{\Gamma}_{ij}^{\infty}\right| \geq t \right] &= \mathbb{P}\left[\underset{(i,j)}{\bigcup} \left| \mat{\Gamma}_{ij}(0) - \mat{\Gamma}_{ij}^{\infty}\right| \geq t \right] \\
        &\leq \underset{(i,j)}{\sum} \mathbb{P}\left[ \left| \mat{\Gamma}_{ij}(0) - \mat{\Gamma}_{ij}^{\infty}\right| \geq t \right] \\
        &\leq 2n^2\exp{(-2mt^2)}
    \end{align*}
    Setting the above equal to $\delta$ and solving for time $t$ gives $t=\frac{\sqrt{\log{(2n^2 / \delta)}}}{\sqrt{2m}}$. Hence, for all $(i,j)$ with probability at least $1-\delta$, we have
    \begin{equation*}
        \left| \mat{\Gamma}_{ij}(0) - \mat{\Gamma}_{ij}^{\infty}\right| \leq \frac{\sqrt{\log{2n^2 / \delta}}}{\sqrt{2m}}.
    \end{equation*}
    Therefore, we have
    \begin{align*}
        \|\mat{\Sigma}(0)-\mat{\Sigma}^{\infty}\|_2 &\leq \|\mat{\Sigma}(0)-\mat{\Sigma}^{\infty}\|_{F} \\
        &\leq \underset{(i,j)}{\sum}\left| \mat{\Gamma}_{ij}(0) - \mat{\Gamma}_{ij}^{\infty}\right|^2 \\
        &\leq \frac{n^2\log{(2n^2 / \delta)}}{2m}
    \end{align*}
    Thus, if $m=\Omega\left(\frac{n^2}{\lambda_{\mat{\Gamma}}}\log\left(\frac{2n^2}{\delta}\right)\right)$, we have the desired result. Since $\lambda_{\min}(\mat{\Sigma}(0)) = \lambda_{\min}(\mat{\Sigma}^{\infty} + \mat{(\Sigma}(0)-\mat{\Sigma}^{\infty}))$, we have $\lambda_{\min}(\mat{\Sigma}(0)) \geq \frac{\lambda_{\mat{\Gamma}}}{2}$.
\end{proof}

\begin{lemma}[Lemma 3.2 in \cite{du2018gradient}]
    If weights are initialized i.i.d. from $\mathcal{N}(\vec{0}, \mat{I})$ with probability at least $1-\delta$, for $\vec{w}_1, \dots, \vec{w}_m \in \R^d$ that satisfy $\|\vec{w}_r(0) - \vec{w}_r\|_2 \leq \frac{c\delta\lambda_{\mat{\Gamma}}}{n^2} \triangleq R$ for some $c > 0$, the matrix $\mat{\Sigma}\in \R^{n\times n}$ defined by 
    \begin{equation*}
        \mat{\Sigma}_{ij} = \frac{1}{m}\sum_{r=1}^{m}\mathbb{I}(\vec{w}_{r}^\intercal \vec{x}_i \geq 0, \vec{w}_{r}^\intercal \vec{x}_j \geq 0)
    \end{equation*}
    satisfies $\|\mat{\Sigma} - \mat{\Sigma}(0)\|_2 < \frac{\lambda_{\mat{\Gamma}}}{4}$ and $\lambda_{\min}(\mat{\Sigma}) > \frac{\lambda_{\mat{\Gamma}}}{2}$.
\end{lemma}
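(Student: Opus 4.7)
The plan is to follow the template of Du et al.'s Lemma 3.2, building on the previous lemma (which already gives $\lambda_{\min}(\mat{\Sigma}(0)) \geq \lambda_{\mat{\Gamma}}/2$ w.h.p.). The key observation is that each entry $\mat{\Sigma}_{ij}$ differs from $\mat{\Sigma}_{ij}(0)$ only through indicator functions, and indicator functions only flip when the perturbation pushes $\vec{w}_r^\intercal \vec{x}_i$ across zero. So I would first define, for each $r\in[m]$ and $i\in[n]$, the "bad" event $B_{i,r} = \{\abs{\vec{w}_r(0)^\intercal \vec{x}_i} < R\}$, and note that if neither $B_{i,r}$ nor $B_{j,r}$ occurs, then for any $\vec{w}_r$ with $\|\vec{w}_r - \vec{w}_r(0)\| \leq R$, Cauchy–Schwarz together with $\|\vec{x}_i\|=1$ forces $\mathbb{I}(\vec{w}_r^\intercal \vec{x}_i \geq 0) = \mathbb{I}(\vec{w}_r(0)^\intercal \vec{x}_i \geq 0)$, so the $(r,i,j)$ summand is unchanged.

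Next I would bound the probabilities. Since $\vec{w}_r(0) \sim \mathcal{N}(\vec{0},\mat{I})$ and $\|\vec{x}_i\|=1$, the inner product $\vec{w}_r(0)^\intercal \vec{x}_i$ is a standard Gaussian, so Gaussian anti-concentration gives $\mathbb{P}[B_{i,r}] \leq \tfrac{2R}{\sqrt{2\pi}}$. Then a union bound yields $\mathbb{P}[B_{i,r}\cup B_{j,r}] \leq \tfrac{4R}{\sqrt{2\pi}}$. Combining with the previous paragraph,
\begin{equation*}
  \abs{\mat{\Sigma}_{ij} - \mat{\Sigma}_{ij}(0)} \leq \frac{1}{m}\sum_{r=1}^{m}\mathbb{I}(B_{i,r}\cup B_{j,r}),
\end{equation*}
so $\mathbb{E}[\abs{\mat{\Sigma}_{ij}-\mat{\Sigma}_{ij}(0)}] \leq \tfrac{4R}{\sqrt{2\pi}}$.

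To convert this into an operator-norm bound I would pass through Frobenius: $\|\mat{\Sigma}-\mat{\Sigma}(0)\|_2 \leq \|\mat{\Sigma}-\mat{\Sigma}(0)\|_F \leq \sum_{i,j}\abs{\mat{\Sigma}_{ij}-\mat{\Sigma}_{ij}(0)}$, so taking expectations gives $\mathbb{E}\|\mat{\Sigma}-\mat{\Sigma}(0)\|_2 \leq \tfrac{4 n^{2} R}{\sqrt{2\pi}}$. Plugging in $R = \tfrac{c\delta\lambda_{\mat{\Gamma}}}{n^{2}}$ and applying Markov's inequality (with failure probability $\delta$) yields $\|\mat{\Sigma}-\mat{\Sigma}(0)\|_2 \leq \tfrac{4c\lambda_{\mat{\Gamma}}}{\sqrt{2\pi}}$ with probability at least $1-\delta$; choosing the absolute constant $c$ small enough makes this strictly less than $\tfrac{\lambda_{\mat{\Gamma}}}{4}$. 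Finally, Weyl's inequality together with the bound $\lambda_{\min}(\mat{\Sigma}(0)) \geq \tfrac{\lambda_{\mat{\Gamma}}}{2}$ from the preceding lemma gives $\lambda_{\min}(\mat{\Sigma}) \geq \lambda_{\min}(\mat{\Sigma}(0)) - \|\mat{\Sigma}-\mat{\Sigma}(0)\|_{2} > \tfrac{\lambda_{\mat{\Gamma}}}{2} - \tfrac{\lambda_{\mat{\Gamma}}}{4}$, which is the claimed minimum-eigenvalue bound (modulo the constant in front of $\lambda_{\mat{\Gamma}}$).

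The main obstacle — really the only nontrivial step — is the geometric lemma that whenever $\abs{\vec{w}_r(0)^\intercal \vec{x}_i} \geq R$, \emph{no} perturbation of size at most $R$ can flip the indicator; everything else (Gaussian anti-concentration, Markov, Weyl) is standard. A secondary subtlety is that the perturbed weights $\vec{w}_r$ are not random (they are arbitrary points in the ball), so I must be careful to take probability only over the initialization $\vec{w}_r(0)$ and to use a bound that is uniform over the admissible perturbations — which is exactly what the "if $B_{i,r}$ does not occur, no perturbation flips the sign" argument provides.
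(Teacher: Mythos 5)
Your proposal is correct and is essentially the paper's own argument: the paper gives no independent proof here, merely asserting that ``the same proof applies'' as in Lemma 3.2 of Du et al., and the event-flipping argument, Gaussian anti-concentration, Markov, and Weyl steps you lay out are exactly that proof. The constant mismatch you flag is real --- since the preceding lemma only guarantees $\lambda_{\min}(\mat{\Sigma}(0)) \geq \lambda_{\mat{\Gamma}}/2$ (rather than the $3\lambda_{\mat{\Gamma}}/4$ used in Du et al.), Weyl yields $\lambda_{\min}(\mat{\Sigma}) > \lambda_{\mat{\Gamma}}/4$, not the stated $\lambda_{\mat{\Gamma}}/2$ --- but this is an inconsistency in the paper's statement, not a gap in your argument.
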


While the matrix $\mat{\Sigma}$ is slightly different from the neural tangent kernel in \cite{du2018gradient}, the same proof applies.

\begin{lemma}
    For all $\vec{\theta}$ such that $\|\vec{\theta} - \vec{\theta}_0\|_2 \leq R$, we have with probability at least $1-\delta$
    \begin{equation}
        \|\mat{J} - \mat{J}(\vec{\theta}_0)\|_{2}^2 \leq \frac{2nR^{2/3}}{\delta^{2/3}m^{1/3}}. \label{eq:jacob_upperbound}
    \end{equation}
\end{lemma}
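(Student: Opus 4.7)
The plan is to bound $\|\mat{J} - \mat{J}(\vec{\theta}_0)\|_2^2 \le \|\mat{J} - \mat{J}(\vec{\theta}_0)\|_F^2$ by counting ReLU sign flips. The block of $\mat{J}$ indexed by training example $i$ and hidden unit $r$ is $\frac{1}{\sqrt{m}}q_r\phi'(\vec{w}_r^\intercal \vec{x}_i)\vec{x}_i^\intercal \in \R^{1\times d}$, so the corresponding block of $\mat{J}-\mat{J}(\vec{\theta}_0)$ vanishes unless the sign of $\vec{w}_r^\intercal \vec{x}_i$ differs from that of $\vec{w}_r(0)^\intercal \vec{x}_i$. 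Letting $B=\{(i,r) : \sign(\vec{w}_r^\intercal \vec{x}_i)\neq \sign(\vec{w}_r(0)^\intercal \vec{x}_i)\}$, each nonzero block contributes at most $\frac{1}{m}q_r^2\|\vec{x}_i\|^2\le \frac{1}{m}$ to $\|\mat{J}-\mat{J}(\vec{\theta}_0)\|_F^2$ (using $|q_r|\le 1$ and $\|\vec{x}_i\|=1$), so $\|\mat{J}-\mat{J}(\vec{\theta}_0)\|_F^2\le |B|/m$. The task is therefore to obtain a high-probability bound on $|B|$.

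A sign flip at $(i,r)$ forces $|\vec{w}_r(0)^\intercal \vec{x}_i|\le \|\vec{w}_r-\vec{w}_r(0)\|$. Because the hypothesis $\|\vec{\theta}-\vec{\theta}_0\|_2\le R$ only controls the neuron perturbations through $\sum_r \|\vec{w}_r-\vec{w}_r(0)\|^2\le R^2$, a uniform per-neuron bound is unavailable; the key device is to split the neurons at a free threshold $\tau>0$. Let $S=\{r : \|\vec{w}_r-\vec{w}_r(0)\|\le \tau\}$. Then $|S^c|\le R^2/\tau^2$ deterministically, so these neurons contribute at most $nR^2/\tau^2$ to $|B|$. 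For $r\in S$, a sign flip requires $|\vec{w}_r(0)^\intercal \vec{x}_i|\le \tau$, and since $\vec{w}_r(0)\sim \mathcal{N}(\vec{0},\mat{I})$ with $\|\vec{x}_i\|=1$ we have $\vec{w}_r(0)^\intercal \vec{x}_i\sim \mathcal{N}(0,1)$, which puts mass at most $\tau$ on $[-\tau,\tau]$ (since the standard normal density is bounded by $1/\sqrt{2\pi}<1/2$). Linearity of expectation gives $\E\,|B\cap (S\times [n])| \le nm\tau$, and Markov's inequality over the initialization randomness yields $|B\cap (S\times [n])|\le nm\tau/\delta$ with probability at least $1-\delta$.

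Combining the two contributions and dividing by $m$ gives
\[
\|\mat{J}-\mat{J}(\vec{\theta}_0)\|_F^2 \;\le\; \frac{n\tau}{\delta}+\frac{nR^2}{m\tau^2}.
\]
Setting the $\tau$-derivative of the right-hand side to zero yields the balancing choice $\tau=(2R^2\delta/m)^{1/3}$; substituting back, the two terms evaluate to $2^{1/3}$ and $2^{-2/3}$ times $nR^{2/3}/(\delta^{2/3}m^{1/3})$, whose sum is $\tfrac{3}{2^{2/3}}\tfrac{nR^{2/3}}{\delta^{2/3}m^{1/3}}<\tfrac{2nR^{2/3}}{\delta^{2/3}m^{1/3}}$, matching the claimed bound.

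The main obstacle is precisely the $\ell_2$-versus-$\ell_\infty$ gap in the perturbation budget: unlike the per-neuron setting of \cite{du2018gradient}, the global $\ell_2$ constraint on $\vec{\theta}-\vec{\theta}_0$ forces a single threshold $\tau$ to simultaneously control (a) the number of neurons whose perturbation individually exceeds $\tau$ using the $\ell_2$ budget and (b) the number of randomly-initialized $(i,r)$ pairs with $|\vec{w}_r(0)^\intercal \vec{x}_i|\le\tau$. The nonstandard $R^{2/3}m^{-1/3}\delta^{-2/3}$ rate is the fixed point of this two-sided trade-off, and correctly choosing $\tau$ so that the $\sqrt{m}$-scaling of the activation flips plays against the budget constraint is the only nontrivial calculation in the argument.
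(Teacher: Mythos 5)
Your argument is correct, and there is nothing in the paper to compare it against: the paper states this lemma with no proof at all (the appendix jumps straight from the lemma statement to substituting a particular value of $R$ into Eq.~\eqref{eq:jacob_upperbound}). Checking your steps: the block of $\mat{J}-\mat{J}(\vec{\theta}_0)$ at $(i,r)$ is $\frac{1}{\sqrt{m}}q_r\bigl(\phi'(\vec{w}_r^\intercal\vec{x}_i)-\phi'(\vec{w}_r(0)^\intercal\vec{x}_i)\bigr)\vec{x}_i^\intercal$, which indeed vanishes unless the ReLU pattern flips and otherwise has squared norm at most $1/m$ under Assumption~\ref{as:1} and $|q_r|\le 1$; a flip forces $|\vec{w}_r(0)^\intercal\vec{x}_i|\le\|\vec{w}_r-\vec{w}_r(0)\|$ because the two pre-activations have opposite signs; the split at the deterministic threshold $\tau$ correctly converts the $\ell_2$ budget into the count $|S^c|\le R^2/\tau^2$; and the Markov step is legitimately uniform over $\vec{\theta}$ because the quantity being bounded, $|\{(i,r):|\vec{w}_r(0)^\intercal\vec{x}_i|\le\tau\}|$, depends only on the initialization. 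The optimization $\tau=(2R^2\delta/m)^{1/3}$ gives $(2^{1/3}+2^{-2/3})\,nR^{2/3}\delta^{-2/3}m^{-1/3}=3\cdot 2^{-2/3}\,nR^{2/3}\delta^{-2/3}m^{-1/3}<2nR^{2/3}\delta^{-2/3}m^{-1/3}$, exactly reproducing the stated constant and the nonstandard exponents, which is strong evidence that this sign-flip-counting route (a Du-et-al-style argument adapted from a per-neuron ball to a global $\ell_2$ ball) is the intended one. Two cosmetic points only: your set $B\cap(S\times[n])$ should be written with the index order $(i,r)$ consistent, i.e.\ $\{(i,r)\in B: r\in S\}$, and you may wish to remark that the measure-zero event $\vec{w}_r(0)^\intercal\vec{x}_i=0$ is ignored as usual. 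Neither affects correctness; your write-up is a genuine improvement over the paper, which asserts this bound without justification.
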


Setting $R = \frac{\sqrt{\lambda_{\max}(\mat{X}^\intercal \mat{X})\|\vec{y}-\vec{u}(\vec{\theta}_0)\|}}{\lambda_{\mat{\Gamma}}}$ in Eq. (\ref{eq:jacob_upperbound}) gives
\begin{equation*}
    \|\mat{J} - \mat{J}(\vec{\theta}_0)\|_{2}^2 \leq \frac{2n\lambda_{\max}(\mat{X}^\intercal \mat{X})^{1/3}\|\vec{y}-\vec{u}(\vec{\theta}_0)\|_{2}^{2/3}}{\lambda_{\mat{\Gamma}}^{2/3}\delta^{2/3}m^{1/3}}.
\end{equation*}
Therefore, choosing $m = \Omega\left(\frac{n^3\lambda_{\max}(\mat{X}^\intercal \mat{X})^{4}\|\vec{y}-\vec{u}(\vec{\theta}_0)\|_{2}^{2}}{\lambda_{\mat{\Gamma}}^{2}\delta^{2}\rho^3}\right)$, we can show that Condition~\ref{cond:2} is satisfied.

\subsection{Convergence of \mac}
Now, we are ready to prove Theorem~\ref{thm:converge}. We have
\begin{align*}
    u_i &= \frac{1}{\sqrt{m}}\vec{q}^\intercal \vec{x}_i \\
    \mat{J}_i &= \frac{\partial u_i}{\partial \vec{a}_i}\frac{\partial \vec{a}_i}{\partial \vec{z}_i}\frac{\partial \vec{z}_i}{\partial \vec{\theta}} = \vec{q}^\intercal \diag(\phi'(\vec{z}_i))(\vec{x}_i^\intercal \otimes \mat{I}_m) = \vec{x}_i^\intercal \ast \vec{s}_i^\intercal \\
    \vec{s}_i &= \frac{1}{\sqrt{m}}[q_1 \phi'(\vec{z}_i[1]), q_2 \phi'(\vec{z}_i[2]), \dots, q_m \phi'(\vec{z}_i[m])]^\intercal \\
    \mat{S} &= [\vec{s}_1, \dots, \vec{s}_n]^\intercal \in \R^{n\times m} \\
    \mat{J} &= \frac{\partial \vec{u}}{\partial \vec{\theta}} = [\mat{J}_1, \dots, \mat{J}_n]^\intercal = \mat{X} \ast \mat{S} \in \R^{n\times md}
\end{align*}

\begin{proof}
The change in the output between two consecutive steps is
\begin{align}
    \vec{u}(\vec{\theta}_{k+1}) &- \vec{u}(\vec{\theta}_{k})\nonumber \\
    &= \int_{s=0}^{1} \frac{\partial \vec{u}(\vec{\theta}_{k}^{k+1}(s))}{\partial \vec{\theta}} \cdot (\vec{\theta}_{k+1} - \vec{\theta}_{k}) \ ds \nonumber \\
    &= -\int_{s=0}^{1} \frac{\partial \vec{u}(\vec{\theta}_{k}^{k+1}(s))}{\partial \vec{\theta}} \cdot \eta \mat{F}_{\mac}^{-1} \nabla_{\vec{\theta}_k}\mathcal{L} \ ds \nonumber \\  
    &= \underbracket{-\eta\int_{s=0}^{1} \frac{\partial \vec{u}(\vec{\theta}_k)}{\partial \vec{\theta}} \cdot \mat{F}_{\mac}^{-1}\mat{J}(\vec{\theta}_{k})^\intercal(\vec{u}(\vec{\theta}_{k})-\vec{y}) \ ds}_{\circled{A}} \nonumber \\
    &\quad + \underbracket{\eta\int_{s=0}^{1}  \left(\frac{\partial \vec{u}(\vec{\theta}_k)}{\partial \vec{\theta}} - \frac{\partial \vec{u}(\vec{\theta}_{k}^{k+1}(s))}{\partial \vec{\theta}}\right) \cdot \mat{F}_{\mac}^{-1}\mat{J}(\vec{\theta}_{k})^\intercal(\vec{u}(\vec{\theta}_{k})-\vec{y}) \ ds}_{\circled{B}}. \label{eq:uk+1-uk}
\end{align}
where $\vec{\theta}_{k}^{k+1}(s) = \vec{\theta}_{k} + s(\vec{\theta}_{k+1}-\vec{\theta}_{k})$ is the convex combination of two consecutive parameter vectors with $s\in [0,1]$. With the definition of $\mat{F}_{\mac}$, the term $\circled{A}$ becomes
\begin{align*}
    \circled{A} &= \eta (\mat{X}\ast\mat{S})\left((\bar{\vec{x}}\bar{\vec{x}}^\intercal + \rho\mat{I}_d)^{-1} \otimes \mat{I}_m\right)(\mat{X}^\intercal\star\mat{S}^\intercal)(\vec{y}-\vec{u}(\vec{\theta}_{k})) \\
    &= \eta (\mat{X}\ast\mat{S})\left((\bar{\vec{x}}\bar{\vec{x}}^\intercal + \rho\mat{I}_d)^{-1}\mat{X}^\intercal \star \mat{S}^\intercal\right)(\vec{y}-\vec{u}(\vec{\theta}_{k})) \\
    &= \eta \left(\mat{X}(\bar{\vec{x}}\bar{\vec{x}}^\intercal + \rho\mat{I}_d)^{-1}\mat{X}^\intercal \odot \mat{S}\mat{S}^\intercal\right)(\vec{y}-\vec{u}(\vec{\theta}_{k})) \\
    &= \eta \left(\mat{X}(\bar{\vec{x}}\bar{\vec{x}}^\intercal + \rho\mat{I}_d)^{-1}\mat{X}^\intercal \odot \mat{\Gamma}\mat{\Gamma}^\intercal\right)(\vec{y}-\vec{u}(\vec{\theta}_{k})).
\end{align*}
where $\mat{\Gamma} =\frac{1}{\sqrt{m}}[\phi'(\mat{X}\vec{w}_1), \dots, \phi'(\mat{X}\vec{w}_m)]$. The second and third equality is derived using the following properties:
\begin{align}
    %(\mat{A}\ast \mat{B})(\mat{C}\otimes \mat{D}) &= (\mat{A}\mat{C})\ast (\mat{B}\mat{D}) \nonumber\\
    (\mat{A}\otimes \mat{B})(\mat{C}\star \mat{D}) &= (\mat{A}\mat{C})\star (\mat{B}\mat{D}) \nonumber \\
    (\mat{A}\ast \mat{B})(\mat{C}\star \mat{D}) &= (\mat{A}\mat{C})\odot (\mat{B}\mat{D}) \nonumber \\
\end{align}
The last equality holds due to the fact that $q_r \in (-1, 1)$ and hence they disappear in $\mat{S}\mat{S}^\intercal$.

Using Condition~\ref{cond:2}, we have
\begin{align*}
    \|\circled{B}\|_2 &= \eta\left\|\int_{s=0}^{1}\mat{J}(\vec{\theta}_k^{k+1}(s))-\mat{J}(\vec{\theta}_k) \ ds\right\|_2 \left\| (\bar{\vec{x}}\bar{\vec{x}}^\intercal + \rho\mat{I}_d)^{-1}\mat{X}^\intercal \star \mat{S}^\intercal \right\|_2 \left\|\vec{y}-\vec{u}(\vec{\theta}_{k}) \right\|_2 \\
    &\leq \eta \frac{C\rho}{\sigma_{\max}(\mat{X})} \underbracket{\left\| (\bar{\vec{x}}\bar{\vec{x}}^\intercal + \rho\mat{I}_d)^{-1}\mat{X}^\intercal \star \mat{S}^\intercal \right\|_2}_{\circled{C}}\left\|\vec{y}-\vec{u}(\vec{\theta}_{k}) \right\|_2.
\end{align*}
We bound the norm of $\circled{C}$.
\begin{align*}
    \sigma_{\max}(\circled{C}) &= \sqrt{\lambda_{\max}(\circled{C}^\intercal \circled{C})} \\
    &= \sqrt{\lambda_{\max}\left(\mat{X}(\bar{\vec{x}}\bar{\vec{x}}^\intercal + \rho\mat{I}_d)^{-1}(\bar{\vec{x}}\bar{\vec{x}}^\intercal + \rho\mat{I}_d)^{-1}\mat{X}^\intercal \odot \mat{S}\mat{S}^\intercal\right)} \\
    &\leq \sqrt{\frac{\lambda_{\max}(\mat{X}^\intercal \mat{X})}{\lambda_{\min}(\bar{\vec{x}}\bar{\vec{x}}^\intercal + \rho\mat{I}_d)^2}}\\
    &\leq \frac{\sigma_{\max}(\mat{X})}{\rho}.
\end{align*}
The first inequality holds by Lemma~\ref{lem:1}. The above bound show that we can increase the width $m$ of network to have $\circled{A} \gg \circled{B}$, allowing us to safely ignore $\circled{B}$ in Eq. (\ref{eq:uk+1-uk}) in the following analysis.

\begin{align*}
    \| \vec{y} &- \vec{u}(\vec{\theta}_{k+1})\|^2 \\
    &=  \| \vec{y} - \vec{u}(\vec{\theta}_{k}) - (\vec{u}(\vec{\theta}_{k+1}) - \vec{u}(\vec{\theta}_{k}))\|^2 \\
    &= \| \vec{y} - \vec{u}(\vec{\theta}_{k}) \|^2 - 2(\vec{y} - \vec{u}(\vec{\theta}_{k}))^\intercal (\vec{u}(\vec{\theta}_{k+1}) - \vec{u}(\vec{\theta}_{k})) + \|\vec{u}(\vec{\theta}_{k+1}) - \vec{u}(\vec{\theta}_{k})\|^2 \\
    &= \| \vec{y} - \vec{u}(\vec{\theta}_{k}) \|^2 - 2\eta (\vec{y} - \vec{u}(\vec{\theta}_{k}))^\intercal  \left(\mat{X}(\bar{\vec{x}}\bar{\vec{x}}^\intercal + \rho\mat{I}_d)^{-1}\mat{X}^\intercal \odot \mat{\Gamma}\mat{\Gamma}^\intercal\right)(\vec{y}-\vec{u}(\vec{\theta}_{k})) \\
    & \quad\quad + \eta^2 (\vec{y} - \vec{u}(\vec{\theta}_{k}))^\intercal  \left(\mat{X}(\bar{\vec{x}}\bar{\vec{x}}^\intercal + \rho\mat{I})^{-1}\mat{X}^\intercal \odot \mat{\Gamma}\mat{\Gamma}^\intercal\right)^2(\vec{y}-\vec{u}(\vec{\theta}_{k})) \\
    &\leq \| \vec{y} - \vec{u}(\vec{\theta}_{k}) \|^2 - \eta (\vec{y} - \vec{u}(\vec{\theta}_{k}))^\intercal  \left(\mat{X}(\bar{\vec{x}}\bar{\vec{x}}^\intercal + \rho\mat{I}_d)^{-1}\mat{X}^\intercal \odot \mat{\Gamma}\mat{\Gamma}^\intercal\right)(\vec{y}-\vec{u}(\vec{\theta}_{k})) \\
    &= \left(1 - \frac{\eta\lambda_{\mat{\Gamma}}\lambda_{\min}(\mat{X}^\intercal \mat{X})}{2\|\bar{\vec{x}}\|^{2} + \rho}\right) \| \vec{y} - \vec{u}(\vec{\theta}_{k}) \|^2.
\end{align*}
The first inequality holds if we set $\eta=\mathcal{O}\left(\frac{\rho}{\lambda_{\max}(\mat{X}^\intercal \mat{X})\lambda_{\mat{\Gamma}}}\right)$ and the last inequality holds due to the lower bound
\begin{align*}
    \lambda_{\min}\left(\mat{X}(\bar{\vec{x}}\bar{\vec{x}}^\intercal + \rho\mat{I}_d)^{-1}\mat{X}^\intercal \odot \mat{\Gamma}\mat{\Gamma}^\intercal\right) &\geq \lambda_{\min}\left(\mat{X}(\bar{\vec{x}}\bar{\vec{x}}^\intercal + \rho\mat{I}_d)^{-1}\mat{X}^\intercal\right)\lambda_{\min}\left(\mat{\Gamma}\mat{\Gamma}^\intercal\right) \\
    &\geq \frac{\lambda_{\min}(\mat{X}^\intercal\mat{X})}{\rho}\left(1 - \frac{\|\bar{\vec{x}}\|^2}{\|\bar{\vec{x}}\|^2 + \rho}\right)\frac{\lambda_{\mat{\Gamma}}}{2}.
\end{align*}

Next, we show that the weights of the network remain close to the initialization point.

\begin{align*}
    \|\vec{\theta}_{k+1} - \vec{\theta}_{0}\| &= \left\|\sum_{t=0}^{k}\left(\vec{\theta}_{t+1} - \vec{\theta}_{t}\right)\right\|_2\\
    &\leq \sum_{t=0}^{k}\left\|\left(\vec{\theta}_{t+1} - \vec{\theta}_{t}\right)\right\|_2 \\
    &= \eta \sum_{t=0}^{k}\left\|\left((\bar{\vec{x}}\bar{\vec{x}}^\intercal + \rho\mat{I}_d)^{-1}\mat{X}^\intercal \star \mat{S}^\intercal\right)(\vec{y}-\vec{u}(\vec{\theta}_t))\right\|_2 \\
    &\leq \eta \sum_{t=0}^{k}\frac{\sigma_{\max}(\mat{X})}{\rho}\|\vec{y}-\vec{u}(\vec{\theta}_t)\|_2 \\
    &\leq \eta \sum_{t=0}^{k}\left(1 - \frac{\eta\lambda_{\mat{\Gamma}}\lambda_{\min}(\mat{X}^\intercal \mat{X})}{2(\|\bar{\vec{x}}\|^{2} + \rho)}\right)^{t/2} \frac{\sigma_{\max}(\mat{X})}{\rho}\|\vec{y}-\vec{u}(\vec{\theta}_0)\|_2\\
    &\leq \frac{\sqrt{\lambda_{\max}(\mat{X}^\intercal \mat{X})}}{\lambda_{\mat{\Gamma}}}\|\vec{y}-\vec{u}(\vec{\theta}_0)\|_2.
\end{align*}
\end{proof}

\section{Experimental Details}
\label{apdx:exp_detail}

\subsection{CIFAR Training}
\label{apdx:cifar_detail}

Table~\ref{tab:cifar_hyperpar} provides the hyperparameter settings used in CIFAR-10/100 training. $\eta$ denotes the initial learning rate, 
$\rho$ denotes the damping factor, $\tau_{cov}$ is the update frequency for the curvature information matrix, and $\tau_{inv}$ represents the update frequency for the inverse. 
For KFAC and Eva, we used the recommended values as described in \cite{Zhang2023EvaPS}. 
For FOOF, LNGD, and \mac, we grid-searched the learning rate in the range of [0.001, 0.003, 0.01, 0.03, 0.1, 0.3] and the damping factor in the range of [0.001, 0.003, 0.01, 0.03, 0.1, 0.3, 1, 3], setting other hyperparameter values the same as for other KFAC variants.

\begin{table}[h]
    \caption{Hyperparameter values used in CIFAR datasets training}
    \label{tab:cifar_hyperpar}
    \centering
    \small
    \begin{tabular}{c|ccccccccc}
        \toprule
         & $\eta$ & momentum & EMA & weight decay & eps & $\rho$ & $\tau_{cov}$ & $\tau_{inv}$ \\ 
         \midrule
         \textsc{SGD} & 0.1 & 0.9 &  . &  0.0005 & . & . & . & .\\
         \textsc{AdamW} & 0.001 & . & (0.9, 0.999) & 0.5 & $10^{-8}$ & . & . & .\\
         \textsc{KFAC} & 0.1 & 0.9 & 0.95 & 0.0005 & . & 0.03 & 5 & 50\\
         \textsc{FOOF} & 0.1 & 0.9 & 0.95 & 0.0005 & . & 1.0 & 5 & 50\\
         \textsc{Eva} & 0.1 & 0.9 & 0.95 & 0.0005 & . & 0.03 & 5 & 50\\
         \textsc{LNGD} & 0.1 & 0.9 & 0.95 &  0.0005 & . & 1.0 & 5 & 50\\
         \mac & 0.1 & 0.9 & 0.95 &  0.0005 & . & 1.0 & 5 & 50\\
         \bottomrule
    \end{tabular}
\end{table}

\subsection{ImageNet-1k Training}
\label{apdx:imagenet_detail}

Table~\ref{tab:imagenet_setting} provides the training settings for the ImageNet-1k dataset. We used two architectures: ResNets and ViTs (DeiT and Swin). For ResNets, we referenced the implementation from PyTorch~\cite{paszke2019pytorch}, while for ViTs, we used the same settings as described in \cite{Touvron2020TrainingDI}. The training resolution for ResNets was 176 and for ViTs was 224, while the test resolution for both was set to 224. Both architectures used a mini-batch size of 1,024 and employed cosine learning rate decay. A warmup period of 5 epochs was applied. For all model architectures, we employed
Random Erasing~\cite{Zhong2017RandomED,Devries2017ImprovedRO}, Label
Smoothing~\cite{Szegedy2015RethinkingTI},
Mixup~\cite{zhang2018mixup}/Cutmix~\cite{Yun2019CutMixRS}, and
Repeated Augmentation~\cite{Hoffer2019AugmentYB,Berman2019MultiGrainAU}. For ResNets, we used
TrivialAugment~\cite{Mller2021TrivialAugmentTY} while for DeiT and
Swin, we utilized RandAugment~\cite{Cubuk2019RandaugmentPA}. %and Stochastic Depth~\cite{Huang2016DeepNW}. 
Specifically, we utilized label smoothing (0.1 for both)%, stochastic depth (0.2 for ViTs), 
and repeated augmentation. Data augmentation techniques included horizontal flip, random resized crop, auto augmentation (TrivialAugment for ResNets and RandAugment(9/0.5) for ViTs), mixup (0.2 for ResNets and 0.8 for ViTs)/cutmix (1.0 for both), and random erasing (0.1 for ResNets and 0.25 for ViTs).

\begin{table}[h]
    \caption{Training settings for ImageNet-1k dataset}
    \label{tab:imagenet_setting}
    \centering
    \small
    \begin{tabular}{c|cc}
        \toprule
        Architecture & ResNets & ViTs \\
        \midrule
        Reference & Pytorch & DeiT\\
        \midrule
        Train Res & 176 & 224\\
        Test Res & 224 & 224 \\
        \midrule
        Batch size & 1,024 & 1,024 \\
        LR decay & cosine & cosine\\
        Warmup epochs & 5 & 5\\
        \midrule
        Label Smoothing & 0.1 & 0.1 \\
        %Stochastic Depth & - & 0.2 \\
        Repeated Augmentation & \checkmark & \checkmark \\
        \midrule
        Horizontal flip & \checkmark  & \checkmark  \\
        Random Resized Crop & \checkmark  & \checkmark  \\
        Auto Augmentation & TrivialAugment & RandAugment(9/0.5) \\
        Mixup & 0.2 & 0.8\\
        Cutmix & 1.0 & 1.0 \\
        Random Erasing & 0.1 & 0.25 \\
        \bottomrule
    \end{tabular}
\end{table}

Table~\ref{tab:imagenet_hyperpar} illustrates the hyperparameter settings used for training on the ImageNet-1k dataset. For a mini-batch size of 1,024, we increased the learning rate $\eta$ by 5 times for both SGD and KFAC variants compared to the CIFAR training settings. When training ViTs, we grid-searched the damping factor in the range of [0.01, 0.03, 0.1, 0.3, 1, 3, 10] and reduced the inversion frequency of the preconditioners from 50 to 5 for the KFAC variants. This adjustment was necessary because most of the other KFAC variants, except \mac, experienced training failures due to issues with computing the inverse of the preconditioners under the settings used for ResNet training, while \mac remained stable with these settings. Additionally, for \mac, we adopted the decoupled weight decay~\cite{Loshchilov2019DecoupledWD}.

\begin{table}[tb]
    \caption{Hyperparameter values used in ImageNet-1k datasets training (ResNets / ViTs)}
    \label{tab:imagenet_hyperpar}
    \centering
    \small
    \begin{tabular}{c|cccccccc}
        \toprule
         & $\eta$ & momentum & EMA & weight decay & eps & $\rho$ & $\tau_{cov}$ & $\tau_{inv}$ \\ 
         \midrule
         \textsc{SGD} & 0.5 & 0.9 & . &  0.00002 & . & . & . & .\\
         \textsc{AdamW} & 0.001 & . & (0.9, 0.999) & 0.05 & $10^{-8}$ & . & . \\
         \textsc{KFAC} & 0.5 & 0.9 &  0.95 & 0.00002 & . &  0.03 / 0.3 & 5 & 50 / 5\\
         \textsc{FOOF} & 0.5 & 0.9 & 0.95 & 0.00002 & . &  1 / 3 & 5 & 50 / 5\\
         \textsc{Eva} & 0.5 & 0.9 &  0.95 & 0.00002 & . &  0.03 / 0.3 & 5 & 50 / 5\\
         %\smac & 0.5 & 0.9 &  0.95 &  0.00002 / 0.0001 &  1 / 2 & 5 & 50 / 5\\
         \mac & 0.5 & 0.9 &  0.95 &  0.00002 / 0.0001 & . &  1 / 3 & 5 & 50 / 5\\
         \bottomrule
    \end{tabular}
\end{table}

\begin{comment}
\section{Low-Rank Approximation of Activation KF}
\label{apdx:lowrank}

Figure~\ref{fig:lowrank} illustrates the relationship between mean squared error (MSE) and rank for reconstructing the original activation KF using SVD in various convolutional layers of ResNet-20, trained on CIFAR-10. The MSE is calculated element-wise between the original matrix and the reconstructed matrix. The error curves show that even when using the top-5 or top-10 singular values and their corresponding singular vectors, the low-rank approximations fall short in capturing the full complexity of the original matrix. This is evident from the relatively high MSE values that persist across different layers. The findings suggest that the demonstrated good performance of \mac likely depends more on its ability to adapt to primary variance rather than on the approximation quality to the original matrix.

\begin{figure*}[h]
    \centering
    \includegraphics[width=1\linewidth]{}
    \caption{MSE curves versus rank for reconstructing the original activation KF matrix using SVD across different convolutional layers of ResNet-20 trained on CIFAR-10.}
    \label{fig:lowrank}
\end{figure*}
\end{comment}

\begin{figure}[b]
    \centering
    \includegraphics[width=0.47\linewidth]{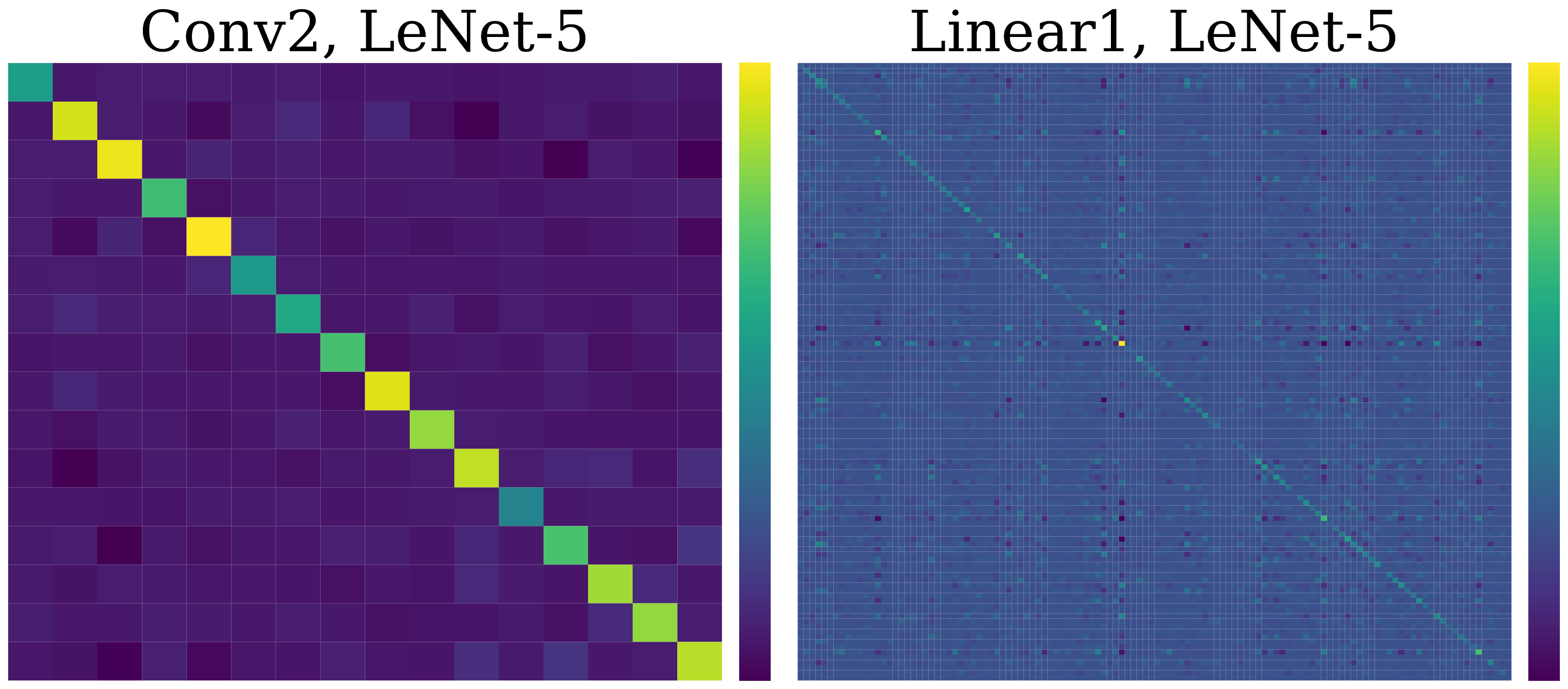}
    \includegraphics[width=0.483\linewidth]{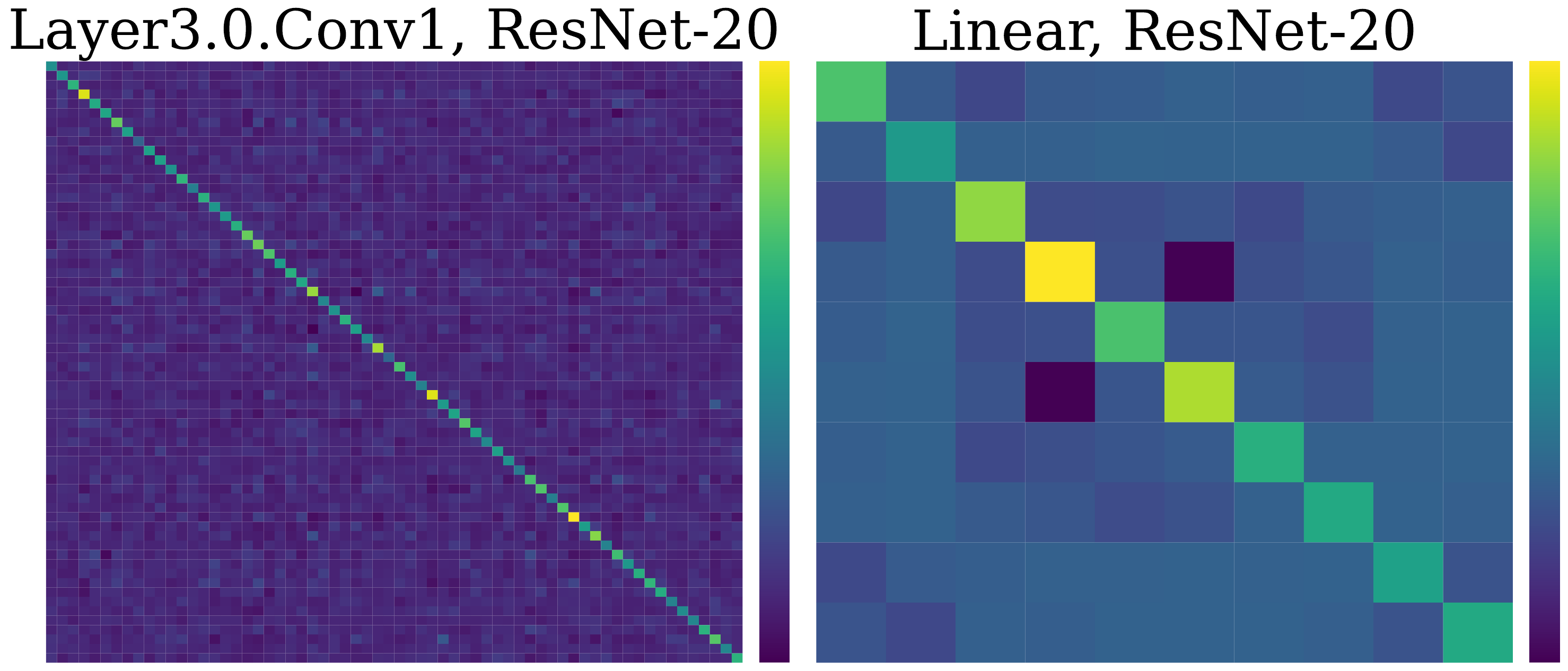}
    \caption{Heatmap visualizations of the pre-activation
      gradient KF for convolutional and linear layers in LeNet-5
      and ResNet-20.  
    } 
    \label{fig:p_heatmap}
\end{figure}

\section{Diagonal Approximation of Pre-activation Gradient KF}
As discussed in Section~\ref{sec:introduction} and Section~\ref{subsec:eigenspectrum}, the pre-activation gradient KF, $\mat{P}$, has little impact on the FIM's eigenspace because the magnitude of the eigenvalues of $\mat{P}$ is largely determined by those of activation KF, $\mat{A}$. Figure~\ref{fig:p_heatmap} further illustrates that the diagonal entries of $\mat{P}$ are significantly larger than its off-diagonal entries. While prior work such as Eva applies a rank-1 approximation to $\mat{P}$ and LNGD maintains per-example diagonal entries $\diag(\vec{p}_i\vec{p}_i^\intercal)$, where $i \in \mathcal{B}$, to approximate it, the most efficient approach is simply to utilize the diagonal entries of $\mat{P}$ directly. 

\subsection{Algorithm}
Based on these observations, we propose an additional method named \smac (pre-activation gradient \textsc{S}econd moment and \textsc{M}ean \textsc{A}ctivation approximated \textsc{C}urvature). In \smac, the activation factor $\mat{A}$ is approximated using the same rank-1 approximation as in \mac, while the pre-activation gradient factor $\mat{P}$ is approximated by a diagonal matrix.

\renewcommand{\algorithmiccomment}[1]{$\triangleright$ #1} 
\begin{algorithm}[t]
\caption{\smac}
\label{alg:smac_pseudocode}
\textbf{Require}: Learning rate $\eta_k$, Momentum $\beta_{1}$, EMA $\beta_{2}$,
Damping $\rho$, Curvature update frequency $\tau_{\text{cov}}$, Inverse update frequency $\tau_{\text{inv}}$\\
\textbf{Initialize}: $\vec{\theta}_{0}$, $\widetilde{\vec{a}}_0 = \vec{0}$, $\widetilde{\vec{p}}_0 = \vec{0}$, $\widehat{\mat{A}}^{-1} = \widehat{\mat{P}}^{-1} = \mat{I}$, $k_{\tau}=0$
% \textbf{Output}: $\vec{\theta}_{k}$
\begin{algorithmic}[1]
\STATE (optional)  $(\mat{A}^{(1)})^{-1} = \left(\mat{I}
  - \frac{\bar{\vec{a}}^{(0)} (\bar{\vec{a}}^{(0)})^{\intercal}}{\rho +
    \norm{\bar{\vec{a}}^{(0)}}^{2}}\right)$, where $\bar{\vec{a}}^{(0)} =
\frac{1}{n}\sum_{i=1}^{n}\vec{x}_i$
\FOR{$k$ = 1, 2, 3, \dots}
\STATE $\mat{G}_{k} \gets \frac{1}{|\mathcal{B}|}\sum_{i\in
  \mathcal{B}}\grad{\ell}(f(\vec{x}_i; \vec{\theta}_k), y_i)$
\FOR{$l$ = 1, 2, \dots, L}
\IF{$(k \mod \tau_{\text{cov}}) = 0$}
\STATE $\widetilde{\vec{a}}_{k}^{(l)} \gets \beta_{2}\widetilde{\vec{a}}_{k-1}^{(l)} +
(1-\beta_{2})\bar{\vec{a}}_{k}^{(l-1)}$ 
\STATE $\widetilde{\vec{p}}_{k}^{(l)} \gets \beta_{2}\widetilde{\vec{p}}_{k-1}^{(l)} + (1-\beta_{2})(\vec{p}_{k}^{(l)})^2$ \label{alg:p_update}
\STATE $k_{\tau} \gets k_{\tau} + 1$
\ENDIF
\IF{$(k \mod \tau_{\text{inv}}) = 0$}
\STATE $\widehat{\vec{a}}_{k}^{(l)} \gets \widetilde{\vec{a}}_{k}^{(l)} \ / \
(1-\beta_{2}^{k_{\tau}})$
\STATE $\widehat{\vec{p}}_{k}^{(l)} \gets \widetilde{\vec{p}}_{k}^{(l)} \ / \ (1-\beta_{2}^{k_{\tau}})$ 
\STATE $(\widehat{\mat{A}}^{(l)})^{-1} \gets \left(\mat{I} - \frac{\widehat{\vec{a}}_{k}^{(l)}(\widehat{\vec{a}}_{k}^{(l)})^{\intercal}}{\rho + \|\widehat{\vec{a}}_{k}^{(l)}\|^{2}}\right)$  \hfill\COMMENT{decoupled damping in \eqref{eq:decoupled_damp}} 
\STATE $(\widehat{\mat{P}}^{(l)})^{-1} \gets \diag\left(\widehat{\vec{p}}_{k}^{(l)}+\rho\right)^{-1}$  \label{alg:p_inv}
\ENDIF
\STATE $\widehat{\mat{G}}_{k}^{(l)} \gets
(\widehat{\mat{P}}^{(l)})^{-1}\mat{G}_{k}^{(l)}(\widehat{\mat{A}}^{(l)})^{-1}$
\ENDFOR
\ENDFOR
\end{algorithmic}
\end{algorithm}
The pseudocode of \smac is presented in
Algorithm~\ref{alg:smac_pseudocode}. \smac additionally maintains the EMA of $(\vec{p}^{(l)})^{2} = \vec{p}^{(l)} \odot \vec{p}^{(l)}$ (Line~\ref{alg:p_update}). Since $\widehat{\mat{P}}^{(l)}$ is a diagonal matrix, its inverse is obtained by taking the reciprocal of its diagonal entries (Line~\ref{alg:p_inv}).

%\subsection{Time and Memory Complexities}
Compared to \mac, \smac incurs an additional $\mathcal{O}(d_{out})$ overhead to both time and memory complexities when computing the inverse of the approximated FIM, yet it remains considerably more scalable than other KFAC variants. %See Table~\ref{tab:smac_complexity}.

\subsection{Experimental Results}
We evaluate \smac on CIFAR and ImageNet datasets under the same experimental conditions as \mac and compare its performance with other baselines, including \mac, used in Section~\ref{sec:experiment}.

\begin{table}[tb]
\caption{Test accuracy (\%) and standard deviation (in parentheses) on CIFAR-10 \textbf{(Top)} and CIFAR-100 \textbf{(Bottom)} datasets.}
\label{tab:smac_cifar}
\centering
\footnotesize
\begin{tabular}{c|ccc|ccc|ccc}
\toprule
Model & \multicolumn{3}{c|}{ResNet-110} & \multicolumn{3}{c|}{DenseNet-121} & \multicolumn{3}{c}{WideResNet-28-10} \\
Epoch & 50 & 100 & 200 & 50 & 100 & 200 & 50 & 100 & 200  \\
\midrule
\smac & 93.8\scriptsize{(0.2)} & 94.4\scriptsize{(0.2)} & 95.0\scriptsize{(0.1)} & 95.3\scriptsize{(0.2)} & 95.8\scriptsize{(0.1)} & 95.9\scriptsize{(0.1)} & 95.6\scriptsize{(0.1)} & 96.2\scriptsize{(0.2)} & 96.4\scriptsize{(0.2)} \\
\mac    & 93.5\scriptsize{(0.2)} & 94.4\scriptsize{(0.2)} & 94.9\scriptsize{(0.2)} & 95.3\scriptsize{(0.2)} & 95.7\scriptsize{(0.1)} & 95.9\scriptsize{(0.1)} & 95.7\scriptsize{(0.1)} & 96.2\scriptsize{(0.1)} & 96.4\scriptsize{(0.1)} \\
\midrule
\textsc{SGD}    & 92.0\scriptsize{(0.3)} & 93.5\scriptsize{(0.4)} & 94.3\scriptsize{(0.3)} & 94.9\scriptsize{(0.2)} & 95.4\scriptsize{(0.1)} & 95.6\scriptsize{(0.1)} & 95.4\scriptsize{(0.1)} & 96.0\scriptsize{(0.2)} & 96.2\scriptsize{(0.0)} \\
\textsc{AdamW}  & 93.4\scriptsize{(0.1)} & 94.2\scriptsize{(0.1)} & 94.4\scriptsize{(0.1)} & 94.7\scriptsize{(0.1)} & 94.9\scriptsize{(0.2)} & 94.9\scriptsize{(0.2)} & 95.1\scriptsize{(0.1)} & 95.6\scriptsize{(0.1)} & 95.9\scriptsize{(0.1)} \\
\midrule
\textsc{KFAC}   & 93.5\scriptsize{(0.1)} & 94.3\scriptsize{(0.1)} & 94.7\scriptsize{(0.2)} & 94.9\scriptsize{(0.1)} & 95.3\scriptsize{(0.1)} & 95.6\scriptsize{(0.1)} & 95.4\scriptsize{(0.1)} & 96.0\scriptsize{(0.1)} & 96.3\scriptsize{(0.1)}  \\
\textsc{FOOF}   & 94.0\scriptsize{(0.1)} & 94.7\scriptsize{(0.1)} & 95.1\scriptsize{(0.1)} & 95.6\scriptsize{(0.1)} & 95.8\scriptsize{(0.1)} & 96.0\scriptsize{(0.1)} & 95.8\scriptsize{(0.1)} & 96.2\scriptsize{(0.1)} & 96.4\scriptsize{(0.0)}  \\
\textsc{Eva}    & 93.6\scriptsize{(0.2)} & 94.1\scriptsize{(0.1)} & 94.7\scriptsize{(0.1)} & 94.7\scriptsize{(0.1)} & 95.3\scriptsize{(0.1)} & 95.7\scriptsize{(0.2)} & 95.4\scriptsize{(0.2)} & 95.9\scriptsize{(0.1)} & 96.2\scriptsize{(0.2)}  \\
\textsc{LNGD}   & 92.8\scriptsize{(0.1)} & 93.8\scriptsize{(0.2)} & 94.1\scriptsize{(0.1)} & 95.0\scriptsize{(0.1)} & 95.4\scriptsize{(0.2)} & 95.3\scriptsize{(0.2)} & 95.2\scriptsize{(0.2)} & 95.6\scriptsize{(0.1)} & 95.8\scriptsize{(0.2)}  \\
\bottomrule
\toprule
Model & \multicolumn{3}{c|}{ResNet-110} & \multicolumn{3}{c|}{DenseNet-121} & \multicolumn{3}{c}{WideResNet-28-10}  \\
Epoch & 50 & 100 & 200 & 50 & 100 & 200 & 50 & 100 & 200  \\
\midrule
\smac & 72.5\scriptsize{(0.3)} & 74.3\scriptsize{(0.2)} & 74.9\scriptsize{(0.5)} & 78.8\scriptsize{(0.2)} & 80.7\scriptsize{(0.2)} & 80.7\scriptsize{(0.3)} & 79.3\scriptsize{(0.1)} & 81.0\scriptsize{(0.2)} & 81.6\scriptsize{(0.3)} \\
\mac    & 72.8\scriptsize{(0.4)} & 74.2\scriptsize{(0.4)} & 75.0\scriptsize{(0.3)} & 78.9\scriptsize{(0.2)} & 80.5\scriptsize{(0.2)} & 80.6\scriptsize{(0.2)} & 79.4\scriptsize{(0.2)} & 80.9\scriptsize{(0.2)} & 81.6\scriptsize{(0.2)} \\
\midrule
\textsc{SGD}    & 71.1\scriptsize{(1.0)} & 72.5\scriptsize{(0.7)} & 73.5\scriptsize{(0.6)} & 78.2\scriptsize{(0.2)} & 79.6\scriptsize{(0.1)} & 79.9\scriptsize{(0.3)} & 79.3\scriptsize{(0.2)} & 80.7\scriptsize{(0.1)} & 81.5\scriptsize{(0.2)} \\
\textsc{AdamW}  & 71.6\scriptsize{(0.1)} & 73.4\scriptsize{(0.2)} & 73.7\scriptsize{(0.3)} & 77.3\scriptsize{(0.2)} & 78.5\scriptsize{(0.3)} & 79.0\scriptsize{(0.2)} & 78.2\scriptsize{(0.2)} & 79.7\scriptsize{(0.1)} & 80.2\scriptsize{(0.2)} \\
\midrule
\textsc{KFAC}   & 71.5\scriptsize{(1.3)} & 73.2\scriptsize{(0.5)} & 74.2\scriptsize{(0.4)} & 78.1\scriptsize{(0.5)} & 79.7\scriptsize{(0.2)} & 80.1\scriptsize{(0.3)} & 79.3\scriptsize{(0.2)} & 81.0\scriptsize{(0.2)} & 81.5\scriptsize{(0.1)} \\
\textsc{FOOF}   & 73.6\scriptsize{(0.2)} & 75.1\scriptsize{(0.3)} & 76.0\scriptsize{(0.4)} & 79.8\scriptsize{(0.1)} & 80.9\scriptsize{(0.2)} & 81.0\scriptsize{(0.2)} & 80.0\scriptsize{(0.2)} & 80.8\scriptsize{(0.2)} & 81.0\scriptsize{(0.3)}  \\
\textsc{Eva}    & 71.9\scriptsize{(0.6)} & 73.6\scriptsize{(0.3)} & 74.7\scriptsize{(0.3)} & 77.9\scriptsize{(0.3)} & 79.4\scriptsize{(0.3)} & 79.9\scriptsize{(0.2)} & 79.3\scriptsize{(0.3)} & 81.0\scriptsize{(0.2)} & 81.6\scriptsize{(0.3)}  \\
\textsc{LNGD}   & 71.7\scriptsize{(0.2)} & 73.1\scriptsize{(0.3)} & 74.5\scriptsize{(0.1)} & 78.8\scriptsize{(0.1)} & 79.9\scriptsize{(0.1)} & 79.7\scriptsize{(0.3)} & 79.1\scriptsize{(0.2)} & 79.8\scriptsize{(0.2)} & 79.4\scriptsize{(0.2)}  \\
\bottomrule
\end{tabular}
\end{table}

\begin{table}[tb]
\caption{Top-1 accuracy (\%) on ImageNet-1k for ResNet-50 and ResNet-101.}
\label{tab:smac_imagenet}
\centering
\small
\begin{tabular}{c|cc|cc}
\toprule
Model & \multicolumn{2}{c|}{ResNet-50} & \multicolumn{2}{c}{ResNet-101} \\
Epoch & 100 & 200 & 100 & 200 \\
\midrule
\smac         & 78.1 & 79.9 & 79.9 & 81.1 \\
\mac          & 78.0 & 79.7 & 79.8 & 81.2 \\
\midrule 
\textsc{SGD}    & 78.1 & 79.6 & 79.7 & 81.3 \\
\textsc{AdamW}  & 76.8 & 79.2 & 77.9 & 80.6 \\
\midrule
\textsc{KFAC}   & 78.2 & 79.3 & 79.6 & 81.1 \\
\textsc{FOOF}   & 78.4 & 79.7 & 80.0 & 81.0 \\
\textsc{Eva}    & 77.7 & 79.4 & 79.6 & 81.1 \\
\bottomrule
\end{tabular}
\end{table}

As shown in Table~\ref{tab:smac_cifar} and Table~\ref{tab:smac_imagenet}, \smac achieves test accuracies comparable to \mac across all datasets, models, and training epochs, while consistently outperforming competitive counterparts such as Eva and LNGD. These results indicate that \smac preserves the high accuracy of \mac and offers a more efficient approach compared to other second-order methods. Specifically, our experimental findings demonstrate that: \begin{enumerate*}[label=(\roman*)] \item Compared to Eva, \smac approximates $\mat{P}$ using $\diag(\E[\vec{p}^2])$ rather than $\E[\vec{p}]\E[\vec{p}]^\intercal$. This modification reduces computational costs and improves test accuracy, suggesting that leveraging the diagonal of the second moment provides a more efficient and scalable approximation of $\mat{P}$ than its outer product form. 
\item Compared to LNGD, \smac employs an aggregated second moment approximation $\diag(\E[\vec{p}^2])$ for $\mat{P}$ instead of using per-example second moments combined with the activation L2 norm $\frac{\E[\|\vec{a}\|^2 \cdot \text{diag}(\vec{p}\vec{p}^\intercal)]}{\E[\|\vec{a}\|^2\|\vec{p}\|^2]}$ as in LNGD. This change significantly reduces memory usage while simultaneously enhancing test accuracy, even though \smac approximates $\mat{A}$ with a rank-1 matrix formed from the outer product of the mean activation.
\end{enumerate*}

Meanwhile, although \smac achieves performance competitive with \mac on CNNs, its application to the self-attention mechanism in transformer architectures presents additional challenges. As discussed in Section~\ref{subsec:curve_trf}, the derivation of pre-activation gradient KFs in attention layers differs significantly from that in fully-connected layers and convolutional layers. This discrepancy necessitates the implementation of customized hook functions to capture $\partial \mathcal{L} / \partial \mat{R}$ and $\partial \mathcal{L} / \partial \mat{H}$ and additional matrix-vector multiplications, which further escalate the computational cost.

%%% Local Variables:
%%% mode: latex
%%% TeX-master: "../main"
%%% End:

\end{document}